\theoremstyle{plain}
\newtheorem{theorem}{Theorem}[section]
\newtheorem{proposition}[theorem]{Proposition}
\newtheorem{lemma}[theorem]{Lemma}
\theoremstyle{definition}
\theoremstyle{remark}
\newcommand\revision[1]{#1} %\textcolor{violet}{#1}}
\newcommand{\ourmethod}{DWSNets}
\icmltitlerunning{Equivariant Architectures for Learning in Deep Weight Spaces }
\begin{document}

\twocolumn[
\icmltitle{
Equivariant Architectures for Learning in Deep Weight Spaces }
%\galch{Network into network:}
%How to Process Your Network?\\ 
%Equivariant Architectures for Processing Deep Neural Networks \\
%\gal{For leaning in deep network weight spaces}}

% It is OKAY to include author information, even for blind
% submissions: the style file will automatically remove it for you
% unless you've provided the [accepted] option to the icml2022
% package.

% List of affiliations: The first argument should be a (short)
% identifier you will use later to specify author affiliations
% Academic affiliations should list Department, University, City, Region, Country
% Industry affiliations should list Company, City, Region, Country

% You can specify symbols, otherwise they are numbered in order.
% Ideally, you should not use this facility. Affiliations will be numbered
% in order of appearance and this is the preferred way.
\icmlsetsymbol{equal}{*}

\begin{icmlauthorlist}
\icmlauthor{Aviv Navon}{equal,biu}
\icmlauthor{Aviv Shamsian}{equal,biu}
\icmlauthor{Idan Achituve}{biu}
\icmlauthor{Ethan Fetaya}{biu}
\icmlauthor{Gal Chechik}{biu,nvidia}
\icmlauthor{Haggai Maron}{nvidia}
\end{icmlauthorlist}

\icmlaffiliation{biu}{Bar-Ilan University, Ramat Gan, Israel}
\icmlaffiliation{nvidia}{Nvidia, Tel-Aviv, Israel}

\icmlcorrespondingauthor{Aviv Shamsian}{aviv.shamsian@live.biu.ac.il}
\icmlcorrespondingauthor{Aviv Navon}{aviv.navon@biu.ac.il}

% You may provide any keywords that you
% find helpful for describing your paper; these are used to populate
% the "keywords" metadata in the PDF but will not be shown in the document
\icmlkeywords{Machine Learning, ICML}

\vskip 0.3in
]

% this must go after the closing bracket ] following \twocolumn[ ...

% This command actually creates the footnote in the first column
% listing the affiliations and the copyright notice.
% The command takes one argument, which is text to display at the start of the footnote.
% The \icmlEqualContribution command is standard text for equal contribution.
% Remove it (just {}) if you do not need this facility.

%\printAffiliationsAndNotice{}  % leave blank if no need to mention equal contribution
\printAffiliationsAndNotice{\icmlEqualContribution} % otherwise use the standard text.

\begin{abstract}
\iffalse
Applying machine learning tools to process neural networks in their raw weight matrix form provides a new exciting learning setup that can be applied to a range of intriguing problems. Among the numerous applications of this setup is classifying or editing Implicit Neural Representations (INRs), as well as manipulating pre-trained neural networks efficiently. 
\fi

%\gal{
Designing machine learning architectures for processing neural networks in their raw weight matrix form is a newly introduced research direction. Unfortunately, the unique symmetry structure of deep weight spaces makes this design very challenging. If successful, such architectures would be capable of performing a wide range of intriguing tasks, from adapting a pre-trained network to a new domain
to editing objects represented as functions (INRs or NeRFs). As a first step towards this goal, we present here a novel network architecture for learning in deep weight spaces. It takes as input a  concatenation of weights and biases of a pre-trained MLP and processes it using a composition of layers that are equivariant to the natural permutation symmetry of the MLP's weights: Changing the order of neurons in intermediate layers of the MLP does not affect the function it represents. We provide a full characterization of all affine equivariant and invariant layers for these symmetries and show how these layers can be implemented using three basic operations: pooling, broadcasting, and fully connected layers applied to the input in an appropriate manner. We demonstrate the effectiveness of our architecture and its advantages over natural baselines in a variety of learning tasks.
%}

%Designing machine learning architectures for processing neural networks in their raw weight matrix form is a challenging task, primarily due to their unique structure. Such architectures, however, can be applied to a range of intriguing problems and have the potential to become a valuable tool for deep learning practitioners in the future. Among the numerous applications of these architectures is classifying or editing Implicit Neural Representations (INRs), as well as manipulating pre-trained neural networks efficiently. 
%In this paper, we present a novel network architecture for learning in deep weight spaces. Our architecture takes a concatenation of weights and biases of a pre-trained network as input and processes it using a composition of layers that are equivariant to the natural permutation symmetry of neural networks' weights: changing the order of neurons in intermediate  layers of the network does not affect the function represented by the network. To this end, we provide a full characterization of all affine equivariant and invariant layers for the aforementioned symmetries and show how these layers can be easily implemented using three basic operations: pooling, broadcasting, and fully connected layers applied to the input in an appropriate manner. We demonstrate the effectiveness of our architecture and its advantages over natural baselines in a variety of learning tasks.
\end{abstract}

%\gal{We have a nice name for the cannibal net. Using the word cannibal might hurt the feelings of cannibals. WDYT about "super-net", "translator", "editor", "editing-net" }
%\hm{agree that a good name would be useful. however, both  "editor" and "translator" are not general enough... In addition "network into network" is problematic since there is a highly cited paper with an extremely similar name ("network in network").}

\section{Introduction}\label{sec:intro}
Deep neural networks are the primary model for learning functions from data, from classification to generation. Recently, they also became a primary model for representing data samples, for example, INRs for representing images, 3D objects, or scenes  \cite{park2019deepsdf,sitzmann2020implicit,tancik2020fourier,mildenhall2021nerf}. 
In these two cases, representing functions or data, 
%While these two capacities may seem very different, they share an important commonality: It 
it is often desirable to operate directly over the weights of a pre-trained deep model. 
For instance, given a trained deep network that performs visual object recognition, one may want to change its weights so it matches a new data distribution. 
In another example, given a dataset of INRs or NeRFs representing 3D objects, we may wish to analyze its shape space by directly applying machine learning to the raw network representation, namely the weights and biases.
% In another example, given a dataset of INRs or NeRFs representing 3D objects, we may wish to analyze the shape space spanned by them by directly applying machine learning to their raw network representation, namely their weights and biases.
%\ef{Especially since our INR classification results aren't high, maybe not focus on "desirable" and more on it being the only representation like in NeRF} \hm{SG}
%given an INR of a 3D chair, one may want to change its weights to change the style or dimensions of the chair.
%For both cases, it is desirable to change the model's parameters.

%\gal{Find somewhere to add : "In our setup, deep networks play two very different roles; representing both the learned model and the inputs to that model." } \hm{is this still needed in your opinion?}

%\hm{this part is good but puts too much emphasis on a relatively unimportant application. consider removing.}\an{I agree} Traditionally, adapting  a pre-trained model involves fine-tuning or retraining parts of the model. For instance, adapting to shifts in data distribution is often achieved by retraining with samples from a new domain. However, retraining can be costly in time and might suffer from various issues like convergence to bad local minima. 
%catastrophic forgetting, and hence wasteful in using samples from the source domain. \hm{Do we offer any improvement in these areas besides being more efficient? if not I would remove/change}
%It would be much more desirable to adapt the model by applying a transformation directly to its weights, without retraining. 

In this paper, we seek a principled approach  for learning over neural weight spaces. We ask: \textit{"What neural architectures can effectively learn and process neural models that are represented as sequences of weights and biases?"}

%\hm{mention that our analysis can be generalized reasonably straightforwardly for other architectures like CNNs}

%\hm{what can be achieved in the best case scenario if we find the right tools for this setup}

%Deep neural networks are all around us and are currently the primary tool used in machine learning. Notably, these models can perform a variety of tasks and represent a variety of data types. Examples include images, text, and 3D shapes and scenes.
%
%In certain cases, it is desirable to process or analyze neural networks in their raw weight matrix representations. An example would be editing a 3D shape represented by an MLP, or adapting a large model to a slightly different input distribution after it has been trained. Unfortunately, there is still a lack of understanding of how neural networks can be processed for downstream applications. Therefore, it is only natural to ask whether and how we can \emph{learn} to process neural networks. 

%\textbf{Background.} Research on machine learning architectures for processing neural networks is largely unexplored. 
The study of learning in neural weight spaces
%to process neural networks 
is still in its infancy. 
Few pioneering studies \cite{eilertsen2020classifying, unterthiner2020predicting, schurholt2021self} used generic architectures such as fully connected networks and attention mechanisms to predict model accuracy or hyperparameters.
Even more recently, three papers have partially addressed the question in the context of INRs \cite{dupont2022data, xu2022signal, luigi2023deep}.  Unfortunately, it is unclear if and how these approaches could be applied to other types of neural networks since they make strong assumptions about the dimension of the input domain or the training procedure.
%Specifically, \citet{xu2022signal} rely on having an efficient way to sample the input space of a network, but such sampling may be non-feasible in high-dimensional input spaces such as images and text.  \citet{anonymous2023deep} require all input networks to be trained jointly using a meta-learning algorithm.  

%It remains an open problem to characterize the principles for designing deep architectures that can process the weights of other deep models. 
%Characterizing the principles for designing deep architectures that can effectively process the weights of other deep models remains 
%The task of learning over the domain of neural network weights presents unique difficulties, as traditional deep models are primarily designed to process data instances with well-defined structures, such as fixed-sized tensors or sequences.

It remains an open problem to characterize the principles for designing deep architectures that can process the weights of other deep models. 
%Learning over the domain of neural network weights poses new challenges. 
Traditional deep models were designed to process data %instances 
with well-understood structures like fixed-sized tensors or sequences. In contrast, the weights of deep models live in spaces with a very different structure, which is still not fully understood \cite{hecht1990algebraic,chen1993geometry,brea2019weight,entezari2021role}. %\gal{move some to the related-work section, if we'll have one.}

%\hm{Consider elaborating on this new learning setup: "learning only from weights without sampling", what are the difficulties, what can be achieved in the best case scenario if we find the right tools for this setup}\gal{I gave it a try}

\textbf{Our approach.}
% goal and challenge 
This paper takes a step forward toward learning in deep-weight spaces by developing architectures that account for the unique structure of these spaces in a principled manner.
More concretely, we address learning in spaces that represent a concatenation of weight (and bias) matrices of Multilayer Perceptrons (MLPs). Motivated by the recent surge of studies that incorporate symmetry into neural architectures \cite{cohen2016group,zaheer2017deep,ravanbakhsh2017equivariance,kondor2018generalization,maron2018invariant,esteves2018learning,bronstein2021geometric}, we analyze the symmetry structure of neural weight spaces and then use this analysis to design architectures that are equivariant to these symmetries. 
%
% THAT WAS REDUNDANT
%To achieve this goal, our approach involves analyzing the symmetry structure of neural weight spaces and developing the neural architectures for these highly structured objects in a manner that respects their natural symmetry \gal{internal symmetries?}. 
%
Specifically, we focus on the main type of symmetry found in the weights of MLPs; %,  described as follows. %for represented by concatenating their weights and biases into a single vector. 
We follow a key observation, made more than 30 years ago \cite{hecht1990algebraic}, which states that for any two consecutive internal layers of an MLP, simultaneously permuting the rows of the first layer and the columns of the second layer generates a new sequence of weight matrices that represent exactly the same underlying function. To illustrate this, consider a two-layer MLP of the form $W_2\sigma(W_1x)$. Permuting the rows and columns of the weight matrices using a permutation matrix $P$ in the following way: $W_1\mapsto P^TW_1, W_2\mapsto W_2P$ will, in general, result in different weight matrices that represent \emph{exactly} the same function. More generally, any sequence of weight matrices and bias vectors can be transformed by applying permutations to their rows and columns in a similar way, while representing the same function, see Figure~\ref{fig:sym}.

%\ef{We should say somewhere here that there are other symmetries, but for now we focus on permutations}\hm{added that we focus on the permutation symmetries + added paragraph in the related section}
%[Processing with MLPs will not work well: over-fitting, too large]

% contributions
After characterizing the symmetries of deep weight spaces, we define the architecture of Deep Weight-Space Networks (\emph{DWSNets}) - deep networks that process other deep networks. As with many other equivariant architectures, e.g.,  \citet{zaheer2017deep,hartford2018deep,maron2018invariant}, DWSNets are composed of simple affine equivariant layers interleaved with pointwise non-linearities. %(\gal{How about an architecture figure?}\hm{not very informative but if we have space why not}).  
A key contribution of this work is that it provides the first characterization of the space of affine equivariant layers for the  symmetries of weight spaces discussed above. Interestingly, our characterization relies on the fact that the weight space is a direct sum of vector spaces corresponding to the different weights and biases in the network. Using this fact we
%We use this fact to 
show that our linear equivariant layers, which we call 
%termed 
\emph{DWS-layers}, have a block matrix structure where each block maps between specific weight and bias spaces of the input network. %that is governed by the specifications of the MLP architectures we wish to process, for example, the number of layers and the width of each layer. \gal{can we define a DWS layer? }
Furthermore, we show that these blocks can be implemented using three basic operations: broadcasting, pooling, or standard dense linear layers. This allows us to implement DWS-layers efficiently, significantly reducing the number of parameters %needed for processing  large NN inputs, 
compared to fully connected networks. 
%Additionally, motivated by previous works on equivariant architectures \cite{zaheer2017deep,velivckovic2017graph,lee2019set}, we demonstrate that, by replacing the linear pooling mechanism in the above layers with nonlinear aggregation functions such as the \texttt{max} function or an attention mechanism, we improve model performance. % \an{Haggai, do you refer to the "deep-set" layers? aside from these layers we use max pooling.}.
%

Finally, we analyze the expressive power of DWS networks and prove that this architecture is capable of approximating a forward pass of an input network. Our findings provide a basis for further exploration of these networks and their capabilities. We demonstrate this by proving that DWS networks can approximate certain functions defined on the space of functions represented by the input MLPs. In addition, while this work focuses on MLPs, we discuss other types of input architectures, such as convolutional networks or transformers, as possible extensions.
% \hm{should we highlight this more?}\gal{Yes, this sounds awesome!!}

We demonstrate the efficacy of \ourmethod{} on two types of tasks: (1) processing INRs; and (2) processing standard neural networks. The results indicate that our architecture performs significantly better than natural baselines based on data augmentation and weight-space alignment.

\textbf{Contributions.} This paper makes the following contributions: (1) It introduces a symmetry-based approach for designing neural architectures that operate in deep weight spaces; (2) It provides the first characterization of the space of affine equivariant layers between deep weight spaces; (3) It analyzes aspects of the expressive power of the proposed architecture; and (4) It demonstrates the benefits of the approach in a series of applications from INR classification to the adaptation of networks to new domains, showing advantages over natural and recent baselines.

\section{Previous Work}

%\textbf{Processing neural networks.} 
In recent years several studies suggested operating directly on the parameters of NNs. In both \citet{eilertsen2020classifying, unterthiner2020predicting} the weights of trained NNs were used to predict properties of  networks. \citet{eilertsen2020classifying} suggested to predict the hyper-parameters used to train the network, and \citet{unterthiner2020predicting} proposed to predict the network generalization capabilities. Both of these studies use standard NNs on the flattened weights or on some statistics of them. 
%Our approach, on the other hand, introduces useful inductive biases for these learning tasks and is not limited to the scope of these studies. 
%\hm{AFAIU the result is not a new INR - so composing several such networks is computationally daunting}
\citet{dupont2022data} suggested applying deep learning tasks, like generative modeling, to a dataset of INRs fitted from the original data. To obtain useful representations of the data, the authors used meta-learning techniques to learn low dimensional vectors, termed modulations, which were used in normalization layers. %along with shared parameters across all training examples. 
Unlike this approach, our method can work on any MLP and is agnostic to the way %that
it was trained. %\cite{lu2019neural, jaeckle2021generating, knyazev2021parameter,litany2022federated} treated the NNs as graphs for formal verification, generating adversarial examples, and parameter prediction respectively.  
In \citet{schurholt2021self} the authors suggested methods to learn representations of NNs using self-supervised methods, and in \citet{schurholthyper} this approach was leveraged for NN model generation. %\citet{schurholtmodel} published a dataset of vectorized trained neural networks, referred to as model-zoo, to encourage research on NN models. Since these models have a CNN architecture, they are not suitable for the case inspected in this paper. 
\citet{xu2022signal} proposed to process neural networks by applying an NN %neural network 
to a concatenation of their high-order spatial derivatives.
\citet{peebles2022learning} proposed a generative approach to output an NN based on an initial network and a target metric such as the loss value or return. Finally, in a recent work, \citet{luigi2023deep} proposed a method %suggested a methodology 
for processing INRs using a set-like architecture \cite{zaheer2017deep}. %To the best of our knowledge, this is the first paper to consider learning directly from weight matrices. 
%It should be noted that this work (1) relies on the ability to evaluate the INR as a function, which is feasible only in low dimensional spaces; and (2) assumes all data was generated using a meta-learning algorithm so that their representations would be aligned.
%In section \ref{sec:sym} we compare their work to ours from a symmetry and equivariance perspective.
%In another recent study \cite{xu2022signal}, it was proposed that neural networks can be processed by applying a neural network to a concatenation of their high-order spatial derivatives. The method focuses on INRs, for which derivative information is relevant, and depends on the ability to sample the input space efficiently. The ability of these networks to handle more general tasks is still not well understood.\hm{not sure we should keep this last one here as it does not process the weights directly} \ia{seems fine to me} %Furthermore, these architectures may require high-order derivatives, which result in a substantial computational burden. 
See Appendix \ref{sec:more_prev} for more related work. %\gal{This is very very risky}

\section{Preliminaries}\label{sec:prev}

\textbf{Notation.} we use $[n]=\{1,...,n\}$ and $[k,m]=\{k,k+1,\dots,m\}$. we use $\Pi_d$ for the set of $d\times d$ permutation matrices (bi-stochastic matrices with entries in $\{0,1 \}$). $S_d$ is the symmetric group of $d$ elements. $\mathbf{1}$ is an all ones vector.

\textbf{Group representations and equivariance.} Given a vector space $\mathcal{V}$ and a group $G$, a \emph{representation} is a group homomorphism $\rho$ that maps a group element $g\in G$ to an invertible matrix $\rho(g)\in GL(\mathcal{V})$. Given two vector spaces $\mathcal{V},\mathcal{W}$ and corresponding representations $\rho_1,\rho_2$ a function $L:\mathcal{V}\rightarrow \mathcal{W}$ is called \emph{equivariant} (or a $G$-linear map) if it commutes with the group action, namely $L(\rho_1(g)v)=\rho_2(g)L(v)$ for all $v\in \mathcal{V},g\in G$. When $\rho_2$ is trivial, namely the output is the same for all input transformations, $L$ is called an \emph{invariant} function. A \emph{sub-representation} of a representation $(\mathcal{V},\rho)$ is a subspace $\mathcal{W}\subseteq \mathcal{V}$ for which $\rho(g)w\in \mathcal{W}$ for all $g\in G, w\in \mathcal{W}$. A direct sum of representations $(\mathcal{W},\rho'),(\mathcal{U},\rho'')$ is a new group representation $(\mathcal{V},\rho)$ where $\mathcal{V}=\mathcal{W}\oplus\mathcal{U}$ and $\rho(g)((w,u))=(\rho'(g)w,\rho''(g)u)$. A permutation representation of a permutation group $G\leq S_n$ maps a permutation $\tau$ to its corresponding permutation matrix. An orthogonal representation maps elements of $G$ to orthogonal matrices. For an introduction to group representations, refer to \citet{fulton2013representation}.

\textbf{MultiLayer Perceptrons.}  MLPs are sequential neural networks with fully connected layers. Formally, an $M$-layer MLP $f$ is a function of the following form:
\begin{equation}
    f(x)= x_M, \quad x_{m+1}=\sigma(W_{m+1} x_m +b_{m+1}), \quad x_0=x
\end{equation}
Here, $W_m \in \mathbb{R}^{d_{m}\times d_{m-1}}$ and $b_m \in \mathbb{R}^{d_m}$, $[W_m,b_m]_{m\in[M]}$ is a concatenation of all the weight matrices and bias vectors, and $\sigma$ is a pointwise non-linearity like a ReLU  or a sigmoid. $d_m$ is the dimension of $x_m$, $m=0,\dots,M$. 

\textbf{Equivariant neural networks.} Given a group representation $(\rho,\mathcal{V})$, there are several ways to design $G$-equivariant neural networks. In this paper, we follow a popular approach \cite{zaheer2017deep,hartford2018deep,maron2018invariant} where,  in a similar fashion to Convolutional Neural Networks (CNNs), affine equivariant layers are interleaved with pointwise nonlinearities, namely, the network is of the form 
\begin{equation} \label{eq:equi_net_general}
    F_{\text{equi}}(x)= L_k\circ \sigma \circ \ldots \circ \sigma \circ L_1(x).
\end{equation}
Here, $L_i,i\in [k]$ are affine layers of the form $L(x)=Ax+b$, where $A:\mathcal{V}\to \mathcal{V}$ is a linear $G$-equivaraint function and $b:\mathcal{V}\to \mathcal{V}$ is a constant $G$-equivaraint function. For invariant tasks, we define an invariant network by composing $ F_{\text{equi}}$  with an invariant suffix: $F_{\text{inv}}(x)= h\circ L_{\text{inv}}\circ  F_{\text{equi}} $ where $L_{\text{inv}}$ is a linear invariant function and $h$ is an MLP.

%\section{Problem formulation}\label{sec:prob}
%This section explains the main problem considered in this paper and the main definitions we use: (1) Multilayer Perceptrons (MLPs), which will serve as the input architecture for our equivariant neural network; and (2) Permutation symmetries of MLPs.
%This paper focuses on the problem of designing an effective \gal{can we be more specific?} deep network architecture that can be applied to the weights and biases of another \galch{deep network} \galst{model}. We restrict the discussion here to input models that have a MultiLayer Perceptrons (MLPs) architecture, as defined in the next section.

%We remark that there are two main learning setups that we aim to tackle with such architectures: (i) \emph{Neural model classification or regression}, for example, when the inputs to the equivariant network are the weights of an INR that represents an image of a 3D object; (ii) \emph{Neural model manipulation or editing}; for example, when  the inputs to the equivariant network represent the weights of an image classifier and we would like to adapt its weight to new data distribution.

%\gal{This chapter feels redundant with the intro, and not specific enough. There is nothing *formulated* here. Perhaps merge with previuous section? }

\section{Permutation Symmetries of Neural Networks}\label{sec:sym}

%Here, we define (1) Multilayer Perceptrons (MLPs) \gal{Its weird to do this here. move to preliminarise section?}, which will serve as the input architecture of our equivariant neural network; and (2) Permutation Symmetries of MLPs, which are the transformations we would like our architecture to be invariant (or equivariant) to.

%\textbf{Permutation symmetries of weight-spaces.} \label{sec:sym}
In a pioneering work, \citet{hecht1990algebraic} observed that MLPs have permutation symmetries: swapping the order of the activations in an intermediate layer does not change the underlying function.  Motivated by previous 
works~\cite{hecht1990algebraic,brea2019weight,ainsworth2022git}, we define the \emph{weight-space} of an $M$-layer MLP as:
\begin{equation}\label{eq:direct_sum}
   % \mathcal{V}=\bigoplus_{m=1}^M \mathbb{R}^{d_{m-1} \times d_m} \oplus \bigoplus_{m=1}^M\mathbb{R}^{ d_m}
   \mathcal{V}=\bigoplus_{m=1}^M \left(\mathbb{R}^{d_{m} \times d_{m-1}} \oplus \mathbb{R}^{ d_m}\right):=\bigoplus_{m=1}^M \left( \mathcal{W}_m\oplus \mathcal{B}_m \right),
\end{equation}
where $\mathcal{W}_m:=\mathbb{R}^{d_{m}\times d_{m-1}}$ and $\mathcal{B}_m \vcentcolon = \mathbb{R}^{d_m}$. Each summand in the direct sum corresponds to a weight matrix and bias vector of a specific layer in the MLP, i.e., $W_m\in\mathcal{W}_m ,b_m\in \mathcal{B}_m$.  As we can independently apply any permutation to any intermediate layer of the MLP, we define the symmetry group of the weight space to be the direct product of symmetric groups for all the intermediate dimensions $m\in[1,M-1]$:
\begin{equation}\label{eq:symmetry}
    G\vcentcolon = S_{d_1}\times \cdots\times S_{d_{M-1}}.
\end{equation}

Let $v\in \mathcal{V}$, $v=[W_m,b_m]_{m\in[M]}$, then 
a group element $g=(\tau_1,\dots,\tau_{M-1})$ acts on $v$  
as follows\footnote{We note that a similar formulation first appeared in \cite{ainsworth2022git}}:
%via a group representation $$g=(\sigma_1,\dots,\sigma_{M-1})\mapsto (P_{\sigma_1},\dots,P_{\sigma_{M-1}})$$
%\paragraph{Decomposing weight spaces to direct sums} As mentioned in section \ref{sec:prev}, it is easy to decompose the weight space as follows 
%\begin{equation} \label{eq:direct_sum}
 %   
%\end{equation}
\begin{subequations}\label{eq:action}
\begin{equation}
    \rho(g)v \vcentcolon = [W'_m,b'_m]_{m\in[M]}, 
\end{equation}
\begin{equation}
            W_1'=  P_{\tau_1}^T W_1, 
            ~ b_1' = P_{\tau_1}^Tb_1, 
\end{equation}
\begin{equation}
            W_m' =  P_{\tau_m}^T W_m P_{\tau_{m-1}}, 
            ~ b_m' = P_{\tau_m}^T b_m, ~m\in [2,M-1]
\end{equation}
\begin{equation}
            W_{M}' =   W_m P_{\tau_{M-1}}, 
            ~ b_{M'} = b_M. 
\end{equation}
\end{subequations}

Here, $P_{\tau_m}\in \Pi_{d_m}$ is the permutation matrix of $\tau_m\in S_{d_m}$.

\begin{figure}%\label{fig:sym}
    \centering
    \includegraphics[width=1.\linewidth]{ 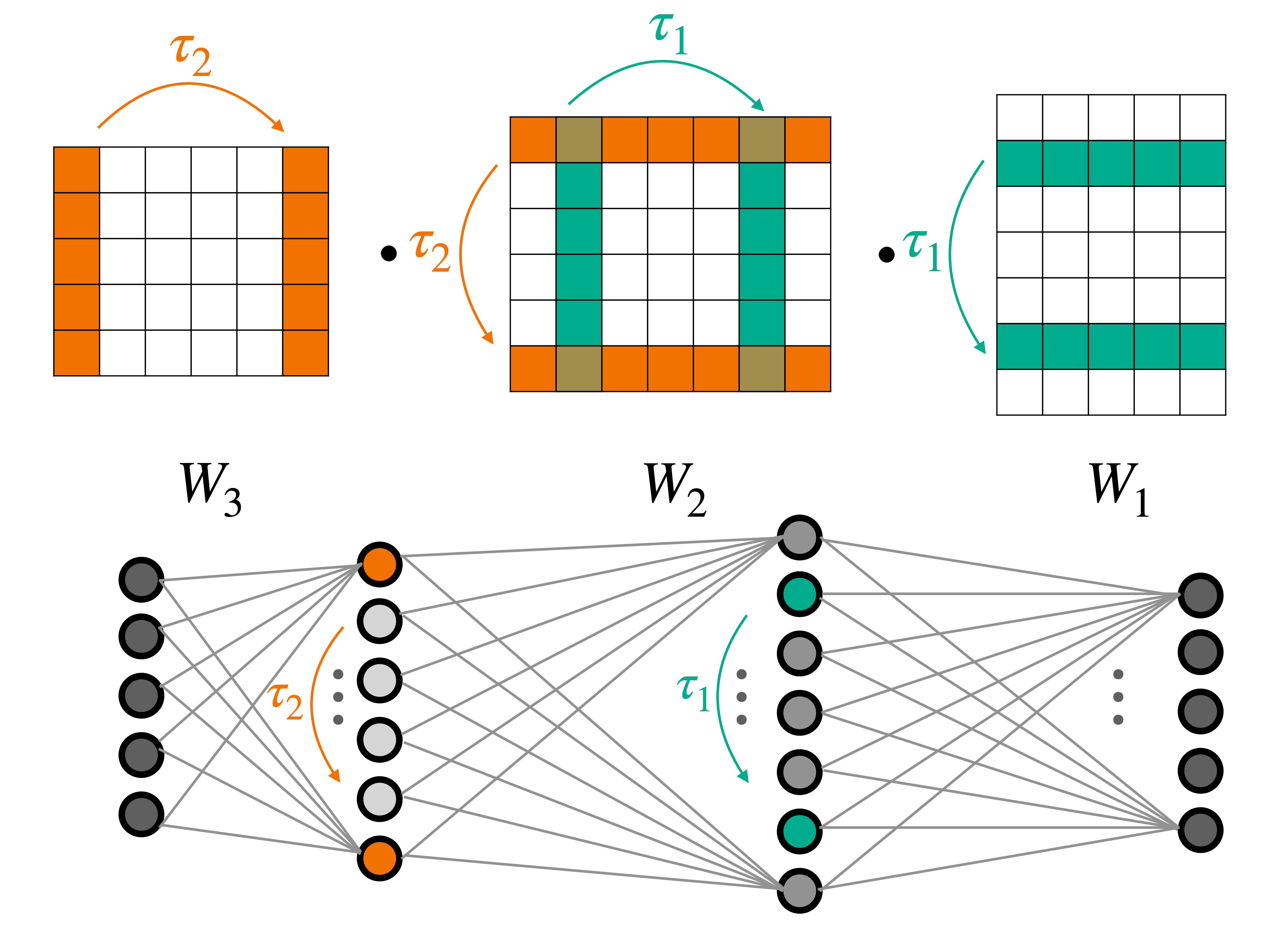}
    \caption{Symmetries of deep weight spaces, shown here on a 3-layer MLP. For any pointwise nonlinearity $\sigma$, the permutations $\tau_1,\tau_2$ can be applied to rows and columns of successive weight matrices of the MLP, without changing the function it represents.} % by the network.  }
    %\gal{These symmetries also hold in standard non-linear networks, because the non-linearity is applied point-wise, "per-neuron".}
    \label{fig:sym}
\end{figure}

Figure \ref{fig:sym} illustrates these symmetries for an MLP with three layers. It is straightforward to show that for any pointwise nonlinearity $\sigma$, the transformed set of parameters represents the same function as the initial set. Another simple, yet useful observation, is that all the vector spaces in  \cref{eq:direct_sum}, namely $\mathcal{W}_m,\mathcal{B}_{\ell}$, are invariant to the action we just defined, i.e., the vector space is mapped to itself under the action of $g$. This implies that $\mathcal{V}$ is a direct sum of these representations.

The symmetries described in \cref{eq:action} were used in several studies in the last few years, mainly to investigate the loss landscape of neural networks \cite{brea2019weight,tatro2020optimizing,arjevani2021analytic,entezari2021role,simsek2021geometry,ainsworth2022git,Pea2022RebasinVI}, but also in \cite{schurholt2021self} as a motivation for a data augmentation scheme. 
%
%\textbf{Other symmetries.} 
It should be noted that there are other symmetries of weight spaces that are not considered in this work~\cite{godfrey2022symmetries,bui2020functional}. One such example is scaling transformations \cite{neyshabur2015path,badrinarayanan2015understanding,bui2020functional}. %For certain non-linearities, these can be handled by applying a normalization operation to the input. 
Incorporating these symmetries into \ourmethod{} architectures is left for future work. %\ef{I would say that the symmetries depend on the non-linearities, not the handling by normalization}

\section{A Characterization of Linear Invariant and Equivariant Layers for Weight-Spaces}\label{sec:charcterize}
In this section, we describe the main building blocks of DWSNets, namely the DWS-layers. The first subsection provides an overview of the section and its main results. In the following subsections, we discuss the finer details.% and explain how these results can be formally proved. 

\begin{figure*}[t]
    \centering
    \includegraphics[width=\linewidth]{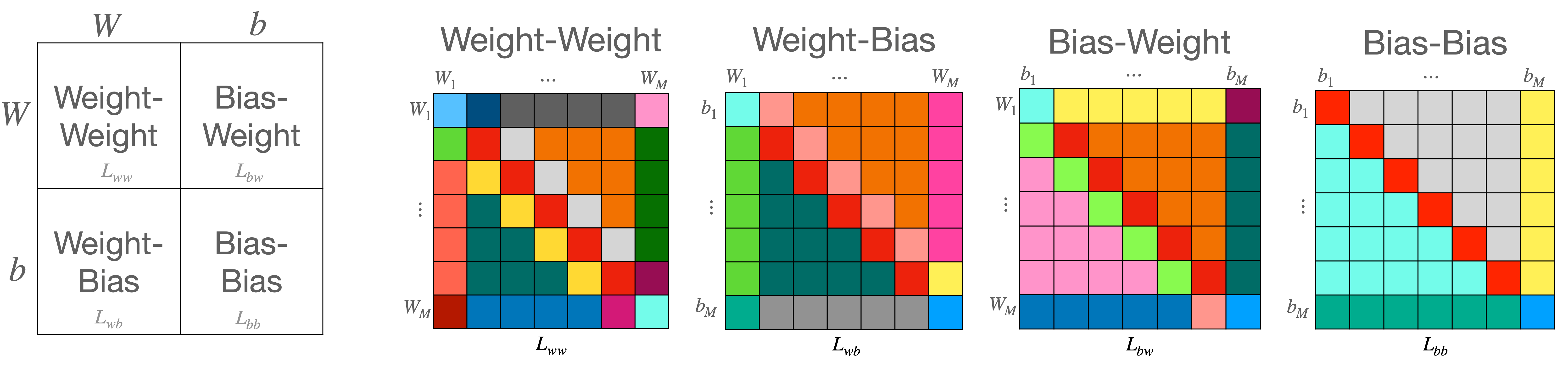}
    \caption{Block matrix structure for linear equivariant maps between weight spaces. Left: an equivariant layer for the weight space $\mathcal{V}$ to itself can be written as four blocks that map between the general weight space $\mathcal{W}$ and general bias space $\mathcal{B}$. Right: Each such block can be further written as a block matrix between specific weight and bias spaces $\mathcal{W}_m,\mathcal{B}_\ell$. Each color in each block matrix represents a different type of linear equivariant function between the sub-representations $\mathcal{W}_m,\mathcal{B}_\ell$. Blocks of the same type have different parameters. Repeating colors in different matrices are not related. See Tables \ref{tab:w2w}-\ref{tab:b2w} for a specification of the layers.}
    \label{fig:equi_layers_block}
\end{figure*}

%\gal{Can we illustrate the equivariant layers?} \hm{\cref{fig:equi_layers_block} illustrates them. understanding what's going on there is not easy at this stage}.
%We first present a full characterization of affine equivariant layers with respect to the weight-space symmetry defined in Equation \eqref{eq:action}. We then discuss ways in which these layers can be made more effective by introducing attention mechanisms. Finally, we briefly  discuss how to generalize our results to weights spaces of other architectures such as transformers and Convolutional Neural Networks (CNNs)
%We are interested in finding all $G$-equivariant maps between $\mathcal{V}$ to itself. 
%In general, let $G \leq S_n$ be a permutation group acting on $\mathbb{R}^n$ with its permutation representation. 
\subsection{Overview and Main Results} \label{subsec:overview}

%In this subsection, we provide a brief summary of our findings. Detailed derivation is provided in Subsections \ref{subsec:direct_decomp}-\ref{subsec:invariant}. %\ia{Perhaps instead of "without getting too technical" say "we leave formal proofs, rigor derivations, and full specification to the Appendix"?}. 
As explained in~\cref{eq:equi_net_general} to completely specify our architecture we need to characterize all affine equivariant and invariant maps for the weight space $\mathcal{V}$. This requires finding bases for three linear spaces: (1) the space of linear equivariant maps between the weight space $\mathcal{V}$ to itself; (2) the space of constant equivariant functions (biases) on the weight space; and (3) the space of linear invariant maps on the weight space. As we show in \cref{appx:features_and_bias},  we can readily adapt previous results for characterizing (2)-(3), and the main challenge is (1), which will be our main focus.

To find a basis for the space of equivariant layers, we will use a strategy based on a decomposition of the weight space $\mathcal{V}$ into multiple sub-representations, corresponding to the weight and bias spaces. This is based on the classic result that states that any linear equivariant map between direct sums of representations can be represented in block matrix form, with each block mapping between two constituent representations in an equivariant manner. A formal statement can be found in \cref{subsec:direct_decomp}. Importantly, this strategy simplifies our characterization and enables us to implement each block independently.

First, we introduce a coarse decomposition of $\mathcal{V}$ into two sub-representations $\mathcal{V}=\mathcal{W}\oplus \mathcal{B}$.
%In order to simplify the presentation, we will define another, coarser, decomposition of $\mathcal{V}$ into two sub-representations $\mathcal{V}=\mathcal{W}\oplus \mathcal{B}$. 
Here, $\mathcal{W}:=\bigoplus_{m=1}^M \mathcal{W}_m$ is a direct sum of the spaces that represent weight matrices, and $\mathcal{B}:=\bigoplus_{m=1}^M \mathcal{B}_m$ is a direct sum of spaces that represent biases. %Using Proposition \ref{proposition: equi_direct_sum}, we can 
Based on this decomposition, we divide the layer $L$ into four linear maps that cover all equivariant linear maps between the weights $\mathcal{W}$ and the biases $\mathcal{B}$:  $L_{\text{ww}}:\mathcal{W} \rightarrow \mathcal{W}$, $L_{\text{wb}}:\mathcal{W}\rightarrow \mathcal{B}$,  $L_{\text{bw}}:\mathcal{B} \rightarrow \mathcal{W}$,   $L_{\text{bb}}:\mathcal{B} \rightarrow \mathcal{B}$.
Figure \ref{fig:equi_layers_block} (left) illustrates this decomposition. 

Our next step is constructing equivariant layers between $\mathcal{W},\mathcal{B}$, namely finding bases for the following linear spaces: $\{L_{\text{ww}}\}, \{L_{\text{wb}}\},\{L_{\text{bw}}\},\{L_{\text{bb}}\}$. This is done by splitting $\mathcal{W},\mathcal{B}$  into the sub-representations from \cref{eq:direct_sum}, i.e., $\{\mathcal{W}_m,\mathcal{B}_\ell\}_{m,\ell\in[M]}$, and characterizing all the equivariant maps between these representations. We show that all these maps are either previously characterized linear equivariant layers on sets \cite{zaheer2017deep,hartford2018deep}, or simple extensions of these layers that can be implemented using pooling, broadcast, and fully connected linear layers. This topic is discussed in detail in \cref{subsec:dws_layers}.
Figure \ref{fig:equi_layers_block} illustrates the block matrix structure of each linear map $L_{\text{ww}},L_{\text{wb}},L_{\text{bw}},L_{\text{bb}}$ according to the decomposition to sub-representations $\{\mathcal{W}_m,\mathcal{B}_\ell\}_{m,\ell\in[M]}$. Each color represents a different layer type as specified in Tables \ref{tab:w2w}-\ref{tab:b2w}.

Formally, our result can be stated as follows:

\begin{theorem} [A characterization of linear equivariant layers between weight spaces] \label{thm:char_equi}
A linear equivariant layer between the weight space $\mathcal{V}$ to itself can be written in block matrix form according to the decomposition of $\mathcal{V}$ to sub-representations $\{\mathcal{W}_m,\mathcal{B}_\ell\}_{m,\ell\in[M]}$. Moreover, each block can be implemented using a composition of pooling, broadcast, or fully connected linear layers.  Tables \ref{tab:w2w}-\ref{tab:b2w} summarize the block structure, number of parameters, and implementation of all these blocks.
\end{theorem}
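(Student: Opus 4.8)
The plan is to exploit the fact, recorded in \cref{eq:direct_sum}, that $\mathcal{V}$ is a direct sum of the sub-representations $\mathcal{W}_m$ and $\mathcal{B}_\ell$. By the classical block-decomposition of equivariant maps between direct sums (the statement in \cref{subsec:direct_decomp}), any equivariant $L:\mathcal{V}\to\mathcal{V}$ is determined by its blocks $L_{X\to Y}$, one for each ordered pair of sub-representations $X,Y\in\{\mathcal{W}_m,\mathcal{B}_\ell\}_{m,\ell\in[M]}$, and each such block is itself an equivariant map. This immediately yields the block-matrix form and reduces the theorem to a finite, pair-by-pair characterization: for every pair I must compute the space $\mathrm{Hom}_G(X,Y)$ of $G$-equivariant linear maps, exhibit a basis, and show each basis element is a composition of pooling, broadcast, and fully connected layers.

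The key simplification I would use is that $G=S_{d_1}\times\cdots\times S_{d_{M-1}}$ is a direct product and that each sub-representation is an outer tensor product of factors on which the individual symmetric groups act independently: a bias space $\mathcal{B}_m$ carries the natural permutation action of the single factor $S_{d_m}$ (and the trivial action of all others), while a weight space $\mathcal{W}_m=\mathbb{R}^{d_m\times d_{m-1}}$ carries the natural action of $S_{d_m}$ on its rows and of $S_{d_{m-1}}$ on its columns, with the boundary indices $d_0,d_M$ never permuted. Because the Hom-space of a tensor product of representations of a product group factorizes, $\mathrm{Hom}_G(X,Y)\cong\bigotimes_{i=1}^{M-1}\mathrm{Hom}_{S_{d_i}}(X_i,Y_i)$, where $X_i,Y_i$ are the factors seen by $S_{d_i}$. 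Thus the whole problem collapses to characterizing equivariant maps for a single symmetric group $S_n$ acting naturally, of which there are only four elementary cases: (i) $S_n$ acts trivially on both sides, giving all linear maps (a fully connected layer on those free axes); (ii) natural action on the domain and trivial on the codomain, a one-dimensional space spanned by summation (pooling); (iii) trivial domain, natural codomain, spanned by broadcasting; and (iv) the natural action on both sides, which by the Deep Sets characterization \citep{zaheer2017deep} is the two-dimensional space spanned by the identity and $\mathbf{1}\mathbf{1}^\top$ (a copy, plus pooling followed by broadcasting). Taking tensor products of these per-factor bases reproduces exactly the set- and interaction-layer bases of \citet{zaheer2017deep,hartford2018deep} on the shared axes, together with free dense maps on the unpermuted axes.

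Carrying this out, I would (a) tabulate, for each pair among $\{\mathcal{W}_m,\mathcal{B}_\ell\}$, which factors are shared, which appear on one side only, and which act trivially; (b) read off the corresponding tensor product of the four elementary cases; and (c) translate each resulting basis element into the pooling/broadcast/fully-connected vocabulary and count its free parameters. These counts and implementations are precisely what the tables record. The boundary blocks involving $\mathcal{W}_1,\mathcal{W}_M,\mathcal{B}_M$ require separate but analogous bookkeeping, since $d_0$ and $d_M$ are never acted upon and therefore always behave as free (fully connected) axes.

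The main obstacle I anticipate lies in steps (b)--(c) for the $\mathcal{W}_m\to\mathcal{W}_{m'}$ blocks, where both domain and codomain are matrix spaces and may share one symmetric-group factor while differing in the other. These are exactly the cases that produce the richest bases (the interaction layers), and I must verify both that the tensor-factorized span is complete --- that no additional equivariant maps exist beyond those built from pooling, broadcasting, and dense maps --- and that each proposed implementation is genuinely equivariant. Completeness follows from the Hom-factorization together with the known dimensions of the single-factor Hom-spaces, so the crux is really to establish the factorization lemma cleanly (correctly handling the trivial factors and the outer-tensor structure) and then to match dimensions case by case, certifying that the exhibited basis is exhaustive.
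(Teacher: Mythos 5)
Your proposal is correct and reaches the same conclusion by a mildly but genuinely different route. Both arguments begin identically: you invoke the block decomposition of \cref{proposition:equi_direct_sum} to reduce the problem to characterizing $\mathrm{Hom}_G(X,Y)$ for each ordered pair of sub-representations $X,Y\in\{\mathcal{W}_m,\mathcal{B}_\ell\}$. Where you diverge is in how completeness of the proposed bases is certified. The paper proceeds by guess-and-verify: it first writes down the candidate layers from the pooling/broadcast/dense rules, then computes the dimension of each $\mathrm{Hom}_G(X,Y)$ directly via the character formula of \cref{lemma:dim_equi}, $\frac{1}{|G|}\sum_{g}\operatorname{tr}(\rho(g))\operatorname{tr}(\rho'(g))$, evaluating the trace products case by case (ten cases for weight-to-weight alone) using $\frac{1}{|S_n|}\sum_\sigma \operatorname{tr}(P(\sigma))^k=\mathrm{bell}(k)$, and checks that each count matches the parameter count in Tables \ref{tab:w2w}--\ref{tab:b2w}. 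You instead derive the bases constructively from the factorization $\mathrm{Hom}_{G_1\times G_2}(V_1\boxtimes V_2,\,W_1\boxtimes W_2)\cong \mathrm{Hom}_{G_1}(V_1,W_1)\otimes\mathrm{Hom}_{G_2}(V_2,W_2)$, reducing everything to four elementary single-factor cases and recovering the \citet{zaheer2017deep} and \citet{hartford2018deep} layers as tensor products. These are two faces of the same computation --- the paper's trace products factor over the Kronecker factors in exactly the way your Hom-factorization predicts --- but your route buys a systematic derivation that explains \emph{why} the tables look the way they do and scales more gracefully to new block types, at the cost of having to state and prove the factorization lemma (which the paper never isolates) and of some care with the non-permuted axes $d_0,d_M$: there the relevant single-factor Hom-space $\mathrm{Hom}_{S_n}(\mathbb{R}^n_{\mathrm{nat}},\mathbb{R}^{d'}_{\mathrm{triv}})$ is $d'$-dimensional rather than one-dimensional, so your elementary cases (ii) and (iii) need the free multiplicities folded in, as your final remark on boundary blocks anticipates. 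The paper's route avoids that lemma entirely but pays with the explicit per-case character sums. Either way the remaining work --- verifying that each tabulated implementation is equivariant and that its parameters are independent --- is the same in both arguments.
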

As mentioned in the introduction, we call the layers from \cref{thm:char_equi} \emph{DWS-layers} and the architectures that use them (interleaved with pointwise nonlinearities), \emph{DWSNets}.

\textbf{Implementing equivariant layers.} 
The layer $L:\mathcal{V}\rightarrow \mathcal{V}$ is implemented by executing all the blocks independently and then summing the outputs according to the output sub-representations. 
To illustrate how these equivariant layers are implemented, we  write an update equation for the $m$-th weight for  $3\leq m \leq M-2$. For simplicity, we disregard input bias terms here and discuss only the weight-weight matrix \revision{presented} in Figure \ref{fig:equi_layers_block}. For an input $v\in \mathcal{V}$, $v=[W_m,b_m]_{m\in[M]}$ the update equation takes the form:

\begin{align*}
F(v)_m =& H_{\text{self}}(W_{m}) + \\
& H_{\text{adjacent}}\left(W_{m-1},W_{m+1}\right) +  \\
& H_{\text{sum}}\left( W_2,\dots,W_{m-2},W_{m+2},\dots, W_{M-1}\right) +  \\
& H_{\text{boundary}}(W_1,W_M).
\end{align*}

Here, $F(v)_m$ is the $m$-th weight matrix in $F's$ output. As can be seen, there are four different functions that are applied to the input weights according to their position in the weight sequence $W_1,\dots, W_M$ w.r.t. the output weight $m$: (1) $H_{\text{self}}$ updates the $m$-th output by applying the linear equivariant layer from \cite{hartford2018deep} \footnote{See Appendix \ref{sec:more_prev} for a full description of this layers} to the $m$-th input (red blocks in Figure \ref{fig:equi_layers_block} weight-weight panel) (2) $H_{\text{adjacent}}$ updates the $m$-th output by processing the $m-1,m+1$ weight matrices. These layers are implemented using linear equivariant DeepSets layers \cite{zaheer2017deep} %together with pooling and broadcasting operations to match the output and input dimensions 
(gray and yellow blocks in Figure \ref{fig:equi_layers_block} weight-weight panel); (3) $H_{\text{sum}}$ is a linear function that multiplies by a learnable scalar the sums of each weight matrix that is neither $m-1,m+1$ nor the first or last layer (dark green and orange blocks in Figure \ref{fig:equi_layers_block} weight-weight panel), and (4) $H_{\text{boundary}}$ are layers that are applied to the first and last weights (pink and lighter green blocks in Figure \ref{fig:equi_layers_block} weight-weight panel). These blocks are implemented by using fully connected linear layers and pooling/broadcasting operations.

We note that there are some small differences in the update rules for other $m$ values (i.e., $m\in \{1,2, M-1, M\}$). For a full description of these layers, please refer to Appendix \ref{app:tables}. 
Readers who are not interested in the technical details can continue reading in  \cref{subsec:expressive}.

\subsection{Linear Equivariant Maps for Direct Sums} \label{subsec:direct_decomp}
%Given a linear space $\mathcal{V}$ and representation $\rho$, a sub-representation is a subspace $\mathcal{W}\leq \mathcal{V}$ that is closed under the group action, namely $\rho(g)w \in  \mathcal{W}$ for all $w\in \mathcal{W},g\in G$. 

As mentioned above, a key property we leverage is the fact that every equivariant linear operator between direct sums of representations can be written in a block matrix form; Each block is a linear equivariant map between the corresponding sub-representations in the sum. This property is summarized in the following classical result:

\begin{proposition}[A characterisation of linear equivariant maps between direct sums of representations] \label{proposition:equi_direct_sum}
Let $(\mathcal{V}_m,\rho_m),m\in[M], \quad (\mathcal{V}'_{\ell},\rho'_{\ell}),\ell\in[M']$ be orthogonal representations of a permutation group $G$ of dimensions $d_m,d'_\ell$ respectively.  Let $(\mathcal{V},\rho) := \bigoplus_{m=1}^M\mathcal{V}_m, (\mathcal{V}',\rho') := \bigoplus_{\ell=1}^{M'}\mathcal{V}'_\ell$ be direct sums of the representations above. 
Let $B_{m\ell}$ be a basis for the space of linear equivariant functions between $(\mathcal{V}_m,\rho_m)$ to $(\mathcal{V}'_{\ell},\rho'_{\ell})$. Let  $B_{m\ell}^P$ be zero-padded versions of $B_{m\ell}$: every element of $B_{m\ell}^P$ is an all zero matrix in $\mathbb{R}^{d'\times d}$ for $d=\sum_m d_m,~d'=\sum_{\ell} d'_\ell$ except for the $(m,\ell)$ block that contains a basis element from $B_{m\ell}$. Then $B=\cup_{m\ell}B_{m\ell}^P$ is a basis for the space of linear equivariant functions from $\mathcal{V}$ to $\mathcal{V}'$.
    \end{proposition}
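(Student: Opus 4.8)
The plan is to exploit the block-diagonal form of the two representations that comes for free from the direct-sum construction. Recall from the preliminaries that a direct sum of representations acts coordinate-wise, so in bases adapted to the decompositions $\mathcal{V}=\bigoplus_{m=1}^M\mathcal{V}_m$ and $\mathcal{V}'=\bigoplus_{\ell=1}^{M'}\mathcal{V}'_\ell$ we have $\rho(g)=\bigoplus_{m}\rho_m(g)$ and $\rho'(g)=\bigoplus_{\ell}\rho'_\ell(g)$, both block-diagonal. Any linear map $L:\mathcal{V}\rightarrow\mathcal{V}'$ can then be split into blocks $L_{\ell m}:=\pi'_\ell\circ L\circ\iota_m:\mathcal{V}_m\rightarrow\mathcal{V}'_\ell$, where $\iota_m$ and $\pi'_\ell$ are the canonical inclusion and projection associated with the decompositions; equivalently, $L$ is the matrix whose block in the $(m,\ell)$ position equals $L_{\ell m}$. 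This is just a reorganization of an arbitrary linear map and uses no equivariance yet.

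The central step is to show that the equivariance constraint decouples across these blocks. Writing $\rho'(g)L=L\rho(g)$ and using that both representation matrices are block-diagonal, the block in position $(m,\ell)$ of the left-hand side is $\rho'_\ell(g)L_{\ell m}$, while the corresponding block of the right-hand side is $L_{\ell m}\rho_m(g)$. Therefore $L$ is equivariant if and only if $\rho'_\ell(g)L_{\ell m}=L_{\ell m}\rho_m(g)$ for every $g\in G$ and every pair $(m,\ell)$, i.e.\ if and only if each block $L_{\ell m}$ is itself an equivariant map from $(\mathcal{V}_m,\rho_m)$ to $(\mathcal{V}'_\ell,\rho'_\ell)$. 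This yields a linear isomorphism between the space of equivariant maps $\mathcal{V}\rightarrow\mathcal{V}'$ and the direct sum over all pairs of the spaces of equivariant maps $\mathcal{V}_m\rightarrow\mathcal{V}'_\ell$, realized by $L\mapsto(L_{\ell m})_{m,\ell}$.

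It then remains to transport this isomorphism to the stated basis. Under the inverse of the above map, a block basis element $b\in B_{m\ell}$ is sent precisely to its zero-padded matrix $b^P\in B_{m\ell}^P$, which carries $b$ in the $(m,\ell)$ block and zeros elsewhere. Spanning follows since any equivariant $L$ equals the sum of the zero-paddings of its blocks, and each block $L_{\ell m}$, being equivariant, is a linear combination of $B_{m\ell}$; hence $L$ is a linear combination of $B=\cup_{m\ell}B_{m\ell}^P$. Linear independence follows because zero-padded matrices associated with distinct blocks have disjoint supports, so any vanishing linear combination must vanish block-by-block, and within each block the independence of $B_{m\ell}$ forces all coefficients to be zero.

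I do not expect a genuine obstacle here, as this is a standard structural fact. The only point requiring care is the verification that the equivariance condition truly decouples, which rests entirely on the block-diagonal form of $\rho$ and $\rho'$; this is automatic from the direct-sum definition, so the orthogonality hypothesis is not actually needed for this particular argument (it is convenient elsewhere). Beyond that, the proof is bookkeeping: keeping the $(m,\ell)$ indexing consistent between the block positions of $L$, the representation blocks, and the padded bases.
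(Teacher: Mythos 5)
Your proof is correct, but it takes a genuinely different route from the paper's. The paper verifies directly that the zero-padded elements are linear, equivariant, and linearly independent, and then establishes that they \emph{span} by a dimension count: it invokes Lemma \ref{lemma:dim_equi} (the character formula $\dim E(\rho,\rho')=\frac{1}{|G|}\sum_{g}\operatorname{tr}(\rho(g))\operatorname{tr}(\rho'(g))$) together with the additivity of the trace over direct sums to show that $\dim E(\rho,\rho')=\sum_{m\ell}\dim E(\rho_m,\rho'_\ell)=|B|$. You instead prove spanning constructively, by observing that the block-diagonal form of $\rho$ and $\rho'$ makes the intertwining equation $\rho'(g)L=L\rho(g)$ decouple into the independent conditions $\rho'_\ell(g)L_{\ell m}=L_{\ell m}\rho_m(g)$, so every equivariant $L$ is literally the sum of the zero-paddings of its (equivariant) blocks. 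Your argument is more elementary and, as you note, does not use the orthogonality hypothesis at all --- that hypothesis is what powers the paper's trace formula (it is used to rewrite $L\rho(g)=\rho'(g)L$ as $\rho'(g)^{T}L\rho(g)=L$ and express the dimension as the trace of a projection), so your version is strictly more general in that respect. What the paper's route buys is methodological uniformity: the same dimension-counting machinery is reused in Appendix \ref{appx:subrep_equi_maps} to verify that each individual block layer has the right number of parameters, so proving the decomposition by the same tool keeps the whole characterization within one framework. Both proofs are complete and correct.
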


\iffalse
\begin{proposition}[A characterisation of linear equivariant maps between direct sums of representations] \label{proposition:equi_direct_sum}
Let $(\mathcal{V}_m,\rho_m),m\in[M], \quad (\mathcal{V}'_{\ell},\rho'_{\ell}),\ell\in[M']$ be representations of a permutation group $G$ of dimensions $d_m,d'_\ell$ respectively.  Let $(\mathcal{V},\rho) := \bigoplus_{m=1}^M\mathcal{V}_m, (\mathcal{V}',\rho') := \bigoplus_{\ell=1}^{M'}\mathcal{V}'_\ell$ be direct sums of the representations above. 
Let $B_{m\ell}=\{L^{m\ell}_i\}_{i=1}^{a_m}\subset \mathbb{R}^{d_m\times d'_{\ell}}$ be a basis for the space of linear equivariant functions between $(\mathcal{V}_m,\rho_m)$ to $(\mathcal{V}'_{\ell},\rho_{\ell})$. Let  $B_{m\ell}^P$ be zero-padded versions of $B_{m\ell}$: every element of $B_{m\ell}^P$ is an all zero matrix in $\mathbb{R}^{d'\times d}$ for $d=\sum_m d_m,~d'=\sum_{\ell} d'_\ell$ except for the $(m,\ell)$ block that contains a basis element from $B_{m\ell}$. Then $B=\cup_{m\ell}B_{m\ell}^P$ is a basis for the space of linear equivariant functions from $\mathcal{V}$ to $\mathcal{V}'$.
    \end{proposition}
\fi

We refer readers to Appendix \ref{appx:proofs} for the proof. %The main idea is discussed below. 
%\begin{proof}[Proof idea]
%First, we show that the elements in $B$ are linear, equivariant, and linearly independent. In order to demonstrate that $B$ is a basis, we show that the number of elements in $B$ is the same as the dimension of the space of linear equivariant maps from $\mathcal{V}$ to $\mathcal{V}'$, which is computed as the trace of an appropriate projection matrix similarly to \cite{maron2018invariant,maron2020learning}.
%\end{proof}
\begin{wrapfigure}[9]{r}{0.19\textwidth}
 \includegraphics[width=.18\textwidth]{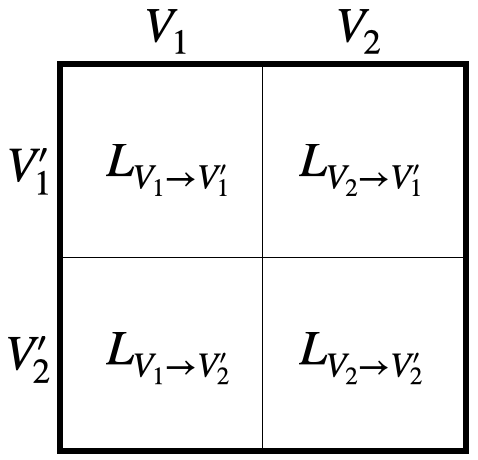}
\end{wrapfigure}
Intuitively, Proposition \ref{proposition:equi_direct_sum} reduces the problem of characterizing equivariant maps between direct sums of representations to multiple simpler problems of characterizing equivariant maps between the constituent sub-representations (see inset for an illustration). A similar observation was made in the context of irreducible representations in \citet{cohen2016steerable}. %\hm{improve last sentence} 
%\gal{and this is great because ... ?} \hm{this is esier} . 
%In particular, any linear equivariant map between $\mathcal{V}$ to $\mathcal{V}'$ can be viewed as a block matrix of the form 
%$    L = (L_{m\ell})_{m\in [M],\ell\in[M']}$, where $L_{m\ell}$ is a linear equivariant map between $(\mathcal{V}_m,\rho_m)$ to $(\mathcal{V}'_\ell,\rho'_{\ell})$. \gal{this allows us to ... ?}

\subsection{Linear Equivariant Layers for Deep Weight-Spaces} \label{subsec:dws_layers}
In this subsection, we explain how to construct a basis for the space of linear equivariant functions between a weight-space to itself: $L:\mathcal{V}\rightarrow \mathcal{V}$. 
%\gal{and we want to do this because ...}\hm{the explanation in the next sentence is not enough?}
%where, as in Equation \eqref{eq:direct_sum} we have 
%$$
%\mathcal{V}=\bigoplus_{m=1}^M \mathbb{R}^{d_{m} \times d_{m-1}} \oplus \mathbb{R}^{ d_m}:=\bigoplus_{m=1}^M \mathcal{W}_m\oplus \mathcal{B}_m.
%$$
%We are interested in the weight space of an $m$-layer neural architecture which, as mentioned in equation \ref{eq:direct_sum} can be decomposed to a direct sum of $M$ matrix spaces $\mathcal{W}_m=\mathbb{R}^{d_{m-1}\times d_m}$ and $m$ bias spaces $\mathcal{B}_m=\mathbb{R}^{d_m}$. 
%These layers will serve as basic components in our network and will be composed in order to construct deeper networks. 

\textbf{Methodology.}  As mentioned in \cref{subsec:overview}, each linear function $L$ can be split into four maps: $L_{\text{ww}},L_{\text{wb}},L_{\text{bw}},L_{\text{bb}}$, which themselves map a direct sum of representations to another direct sum of representations. To find a basis for all such linear equivariant maps, we use Proposition \ref{proposition:equi_direct_sum} and find bases for the linear equivariant maps between all the sub-representations $\mathcal{W}_m,\mathcal{B}_{\ell}$. \revision{For simplicity, here we assume a single feature dimension and no bias terms for the DWS-layer. In Appendix \ref{appx:features_and_bias}, we discuss a simple way to extend our results to allow for multiple features and bias terms for the different DWS blocks.}

To characterize the linear equivariant layers between the subrepresentations $\{\mathcal{W}_m,\mathcal{B}_\ell\}_{m,\ell\in[M]}$, we first show how to construct layers that respect the symmetries by composing a few basic building blocks like pooling, broadcasting, and dense linear layers. We then count the number of parameters in each layer, calculated using a simple theoretical result (see Appendix \ref{appx:proofs}), to show that the layers we suggested include all linear equivariant layers.

% We will first find a basis assuming a single feature dimension, and then discuss a simple way to extend our result to the general case in Appendix \ref{appx:features_and_bias}.
%

\textbf{Bias-to-bias layers.}
we begin by discussing the bias-to-bias part $L_{\text{bb}}:\mathcal{B}\rightarrow\mathcal{B}$ which is the simplest case. %Next, we present the basic operations that will be used to implement all the equivariant layers we propose and conclude by presenting the rules that we use in order to define the equivariant maps between all sub-representations.
%the main theorem that characterizes these linear equivariant layers.
%we utilize the general results we proved for direct sums in our case, namely, 
%
%
%
%
%\textbf{Overview.} We first define a new decomposition of $\mathcal{V}$ into two sub-representations $\mathcal{V}=\mathcal{W}\oplus \mathcal{B}$. Here, $\mathcal{W}:=\bigoplus_{m=1}^M \mathcal{W}_m$ is a direct sum of the spaces that represent weight matrices, and $\mathcal{B}:=\bigoplus_{m=1}^M \mathcal{B}_m$ is a direct sum of spaces that represent biases. Using Proposition \ref{proposition: equi_direct_sum}, we can divide the layer $L$ into four linear maps that cover all equivariant linear maps between the weights $\mathcal{W}$ and the biases $\mathcal{B}$:  $L_{\text{ww}}:\mathcal{W} \rightarrow \mathcal{W}$, $L_{\text{wb}}:\mathcal{W}\rightarrow \mathcal{B}$,  $L_{\text{bw}}:\mathcal{B} \rightarrow \mathcal{W}$,   $L_{\text{bb}}:\mathcal{B} \rightarrow \mathcal{B}$.
%
%See Figure \ref{fig:equi_layers_block} (top left) for an illustration of this decomposition. 
%
%Importantly, we can treat each such linear function separately, by using Proposition \ref{proposition: equi_direct_sum} again: we only need to characterize the linear equivariant maps between all the constituent sub-representations in $\mathcal{W}$ and $\mathcal{B}$. We discuss this next. %We will then discuss how to implement all of these layers using five simple operations. 
%
% 
%
 %To illustrate this idea, we will first consider $L_{\text{bb}} $. 
$L_{\text{bb}}$ is composed of blocks that map between bias spaces that are of the form $T:\mathbb{R}^{d_j}\rightarrow \mathbb{R}^{d_i}$. Importantly, the indices $i,j$ determine how the map $T$ is constructed. Let us review  three examples: %of how different indices $i,j$ give rise to different maps: 
(i) When $i=j=M$, $G$  acts trivially on both spaces and the most general equivariant map %between them 
is a fully connected linear layer. Formally, this block can be written as $b^{\text{new}}_{i}=A b^{\text{old}}_{i}$ for a parameter matrix $A\in\mathbb{R}^{d_M\times d_M}$. 
(ii) When $i=j<M$, $G$ acts jointly on the input and output by permuting them using the \emph{same} permutation. It is well known that the most general permutation equivariant layer in this case is a DeepSets layer \cite{zaheer2017deep}. Hence, this block can be written as $b^{\text{new}}_{i}=a_1 b^{\text{old}}_{i} + a_2  \mathbf{1} \mathbf{1}^T b^{\text{old}}_{i}$ for two scalar parameters $a_1,a_2\in\mathbb{R}$. 
(iii) When $i\neq j<M$ we have two dimensions on which $G$ acts by independent permutations. We show that the most general linear equivariant layer first sums on the $d_j$ dimension then multiplies the result by a learnable scalar, and finally broadcasts the result on the $d_i$ dimension. This block can be written as $b^{\text{new}}_{i}=a \mathbf{1}\mathbf{1}^T b^{\text{old}}_{j}$ for a single scalar parameter $a\in\mathbb{R}$. We refer the readers to Table~\ref{tab:b2b} for the characterization of the remaining bias-to-bias layers. The block structure of $L_{\text{bb}}$ is illustrated in the rightmost panel of \cref{fig:equi_layers_block} where the single block of type (i) is colored in blue, blocks of type (ii) are colored in red, and blocks of type (iii) are colored in gray and cyan. Note that blocks of the same type have different parameters.%\ia{not a big issue but perhaps to be consistent with the full specification we want to switch between i and j?}

%\ef{Maybe the example should have been the weight-weight one instead? make it clearer about the 1 off-diagonal we see as it might confuse people} \hm{it is too complicated.... but we can say something about it after going over the b2b case}

%\ef{think this example should be more detailed to be easier to follow}\hm{what would you add?}\ef{maybe just connect it to the example in Fig. 2}
%
%More generally, there are two important factors that determine the equivariant layer in this case. The first factor to consider is whether $i=j$. In that case, the $i$-th permutation  permutation group in $G$ acts jointly on both the input and the output and the equivariant map $F$ is a mapping between a set and itself. A second important factor is whether $i$ or $j$ equal $M$, the index of the last layer. The action of $G$ on these vectors is trivial and fully connected linear transformations are utilized. Table \ref{tab:b2b} summarizes how these conditions intersect to give rise to different layers. The layers $L_{\text{ww}},L_{\text{wb}},L_{\text{bw}}$ decompose in a similar way according to similar rules as can seen in Tables \ref{tab:w2w}-\ref{tab:b2w}.
%

\textbf{Basic operations for constructing layers between sub-representations.} In general, implementing linear equivariant maps between the sub-representations $\mathcal{W}_m,\mathcal{B}_{\ell}$  requires three basic operations: Pooling, Broadcast, and fully-connected linear maps. They will now be defined in more detail.
(1) \textit{Pooling:} A function that takes an input tensor with one or more dimensions and sums over a specific dimension. For example, for $x\in \mathbb{R}^{d_1\times d_2 }$, $POOL(d_i)$ performs summation over the $i$-th dimension; (2) \textit{Broadcast:} A function that adds a new dimension to a vector by copying information along a particular axis. $BC(d_i)$ broadcasts information along the $i$ -th dimension; (3) \textit{Linear:} A fully connected linear map that can be applied to either vectors or matrices. $LIN(d,d')$ is a linear transformation represented by a $d'\times d$ matrix. Two additional operations that can be implemented using operations (1)-(3) \footnote{See \cite{albooyeh2019incidence} for a general discussion on implementing permutation equivariant functions with these primitives.} which will be useful for us are: (i) \textit{DeepSets} \cite{zaheer2017deep}: the most general linear layer between sets; and (ii) Equivariant layers for a product of two sets as defined in \citet{hartford2018deep} (See a formal definition in Appendix \ref{sec:more_prev}). 

%Next, we highlight two additional operations that can be implemented using operations (1)-(3) \footnote{See \cite{albooyeh2019incidence} for a general discussion on implementing permutation equivariant functions with these primitives.} which will be useful for us. (i) \textit{DeepSets \cite{zaheer2017deep}}: For an input $X\in \mathbb{R}^{n\times d}$, that represents a set of $n$ elements, the DeepSets layer is the most general $S_n$-equivaraint linear layer and is defined as $L_{\text{DS}}(X)_i=W_1 X_i + W_2\sum_j X_j$, where $W_1,W_2\in \mathbb{R}^{d'\times d}$ are learnable linear transformations. (ii) \textit{Equivaraint layers for multiple sets}:  these are layers for  cases where the input involves two or more set dimensions. Formally, let $X\in \mathbb{R}^{n\times m \times d}$ where $m,n$ represent set dimensions, meaning we don't care about the order of the elements in these dimensions, and $d$ is the number of feature channels. \citet{hartford2018deep} showed that the most general $S_n\times S_m$-equivariant  linear layer is of the form  $L_{\text{Har}}(X)_{ij}=W_1 X_{ij} + W_2 \sum_i X_{ij} + W_3 \sum_j X_{ij} + W_4 \sum_{ij} X_{ij}$, where, again $W_1,W_2,W_3,W_4\in \mathbb{R}^{d'\times d}$.

\textbf{Definition of layers between  $\mathcal{W}_m,~\mathcal{B}_{\ell}$.}  Let $T:\mathcal{U}\rightarrow \mathcal{U}'$  be a map between sub-representations, i.e., $ \mathcal{U}, \mathcal{U}'\in\{ \mathcal{W}_m,\mathcal{B}_\ell \}_{m,\ell\in[M]}$ 
%Both the domain and the codomain of $T$ 
represent a specific weight or bias space %and are 
associated with one or two indices reflecting the layers in the input MLP they represent. For example, one such map is between $\mathcal{U}=\mathbb{R}^{d_1\times d_0}$ and $\mathcal{U}'=\mathbb{R}^{d_1}$.
We define three useful terms that will help us define a set of rules for constructing layers between these spaces; We call an index $m\in [0, M]$, a \emph{set} index (or dimension), if $G$ acts on it by permutation, otherwise, we call it \emph{free} index. From the definition, it is clear that $0,M$ are free indices while all other indices, namely $m\in[1,M-1]$ are set indices. Additionally, if indices in the domain and codomain are the same, we call them \emph{shared} indices. 

Based on the basic operations described above, the following rules are used to define equivariant layers between sub-representations $\mathcal{W}_m,\mathcal{B}_{\ell}$.
(1) In the case of two shared set indices, which happens when mapping $\mathcal{W}_m,~m\in[2,M-1]$ to itself, we use \citet{hartford2018deep}. (2) In the case of a single shared set index, for example, when mapping $\mathcal{B}_m,~m\in[1,M-1]$ to itself we use DeepSets~\cite{zaheer2017deep}. (3) When both the domain and the codomain have free indices, we use a dense linear layer. For example when mapping $\mathcal{B}_M$ to itself. %(4) When both the domain and the codomain have unshared free indices we use a dense linear layer. 
(4) We use pooling to contract unshared set input dimensions and linear layers to contract free input dimensions and, (5) We use broadcasting to extend output set dimensions, and linear layers to extend output free dimensions. Tables \ref{tab:w2w}-\ref{tab:b2w} provide a complete specification of linear equivariant layers between all sub-representations  $\{ \mathcal{B}_m,\mathcal{W}_\ell \}_{m,\ell\in[M]}$.

%Figure \ref{fig:equi_layers_block} and Tables \ref{tab:w2w}-\ref{tab:b2w} provide a complete specification of linear equivariant layers $L:\mathcal{V}\rightarrow \mathcal{V}$, by listing all the equivariant layers between sub-representations. 

\textbf{Proving that these layers form a basis.} 
At this point, we have created a list of equivariant layers between all sub-representations. \revision{We still have to prove that these transformations are linearly independent and that they span the space of linear equivariant maps between the corresponding representations. First, by using Proposition \ref{proposition:equi_direct_sum}, we only need to demonstrate that the parameters in each block are independent. Hence, %in many cases, 
the linear independence results can be directly obtained from previous works~\cite{zaheer2017deep,hartford2018deep}, % while other cases can be 
or easily derived by writing the block operators as vectors. Finally, to show the proposed layers span the space of linear equivariant maps between the corresponding representations, we employ a dimension-counting argument: } 
% We still have to prove that these layers span the space of linear equivariant maps between the corresponding representations. 
%We do this by using a dimension-counting argument: 
we calculate the dimension of the space of linear equivariant maps for each pair of representations and show that the number of independent parameters in each proposed layer is equal to this dimension. %The proof is presented 
See proof in Appendix \ref{appx:subrep_equi_maps}.% as a part of the proof of \cref{thm:char_equi}. 

\textbf{Extension to nonlinear aggregation mechanisms.} Similarly to previous works that considered equivariance to permutations \cite{qi2017pointnet,zaheer2017deep,velivckovic2017graph,lee2019set} we can replace any summation term in our layers with either a non-linear aggregation function like $\texttt{max}$, or more complex attention mechanisms.

\textbf{Multiple channels, invariant layers and biases.} We refer the reader to Appendix \ref{appx:features_and_bias} for a characterization of the bias terms (of DWSNets), linear invariant layers, and a generalization of Theorem \ref{thm:char_equi} to multiple input and output channels.

\begin{figure}[t]
    \centering
    \includegraphics[width=.925\linewidth, clip]{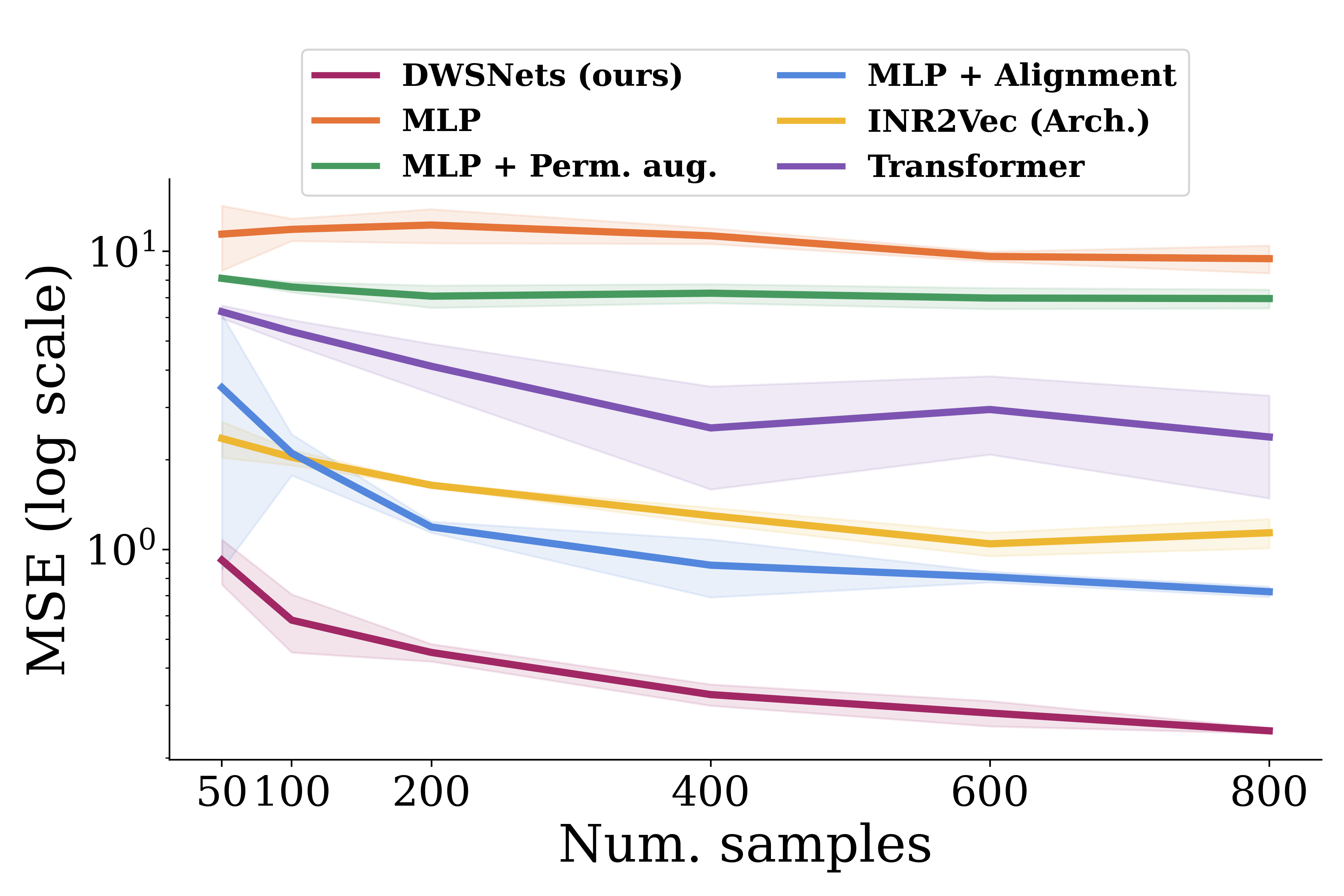}
    \caption{\textit{Sine wave regression}. Test MSE (log scale) for a varying number of training examples.}% averaged over 3 random seeds.}
    \label{fig:sine}
\end{figure}
% \vspace{-12pt}

%(1) the linear invariant map from $\mathcal{W}_i$ for $i\in[2, M-1]$ can be represented as a learnable scalar $w$ multiplied by the summation of all the matrix entries; (2) the linear invariant map that from $\mathcal{W}_1$ is a linear combination of the column sums weighted by learnable scalar for each column sum; and (3) the linear invariant map that from $\mathcal{W}_M$ is a linear combination of the row sums weighted by learnable scalars for each row sum. Similarly, the linear invariant map from $\mathcal{B}_i$ is a scalar times the sum of all entries for $i<M$, and an inner product with a learnable vector of size $d_M$ for $i=M$.  

\subsection{Extension to Other Input Architectures}\label{subsec:exten}

% \paragraph{Extension to nonlinear aggregation mechanisms.} Similarly to previous works that considered equivariance and invariance to permutation action \cite{qi2017pointnet,zaheer2017deep,velivckovic2017graph,lee2019set} we can replace any summation term in our layers with either a non-linear aggregation function like $\texttt{max}$, or more complex attention mechanisms.

% \paragraph{Extension to other architectures.}
% In this paper, we primarily focus on MLP architectures as the input for \ourmethod{}. However, the characterization can be extended to additional architectures. We discuss possible extensions to two architectures, namely CNNs and Transformers \cite{vaswani2017attention}, in Appendix~\ref{app:extensions}.

In this paper, we primarily focus on MLP architectures as the input  for \ourmethod{}. However, the characterization can be extended to additional architectures. Here we discuss the extension to two additional architectures, namely convolutional neural networks (CNNs) and Transformers \cite{vaswani2017attention}. 
Convolution layers consist of weight matrices $W_i\in\mathbb{R}^{k_{i}^1\times k_{i}^2\times d_{i-1}\times d_i}$ and biases $b_i\in\mathbb{R}^{d_i}$,
where $d_{i-1}$ and $d_i$ represents the input and output channel dimensions, respectively. As with MLPs, simultaneously permuting the channel dimensions of adjacent layers would not change the function represented by the CNN. Concretely, consider a 2-layers CNN with weights $W_1, W_2, b_1, b_2$ and let $\tau \in S_{d_1}$. Applying $\tau$ to the $d_1$ dimension of $W_1, W_2$ and $b_1$ will not change the function represented by the CNN.
%Let $\tilde{W}_1,\tilde{W}_2$ and $\tilde{b}_1$ denote the weight matrices obtained by applying $\tau$ to the $d_i$ dimension of $W_1, W_2$ and $b_1$, respectively, then $F(\cdot; W_1, W_2, b_1, b_2)\equiv F(\cdot; \tilde{W}_1,\tilde{W}_2,\tilde{b}_1, b_2)$. 
Transformers consist of self-attention layers followed by feed-forward layers applied independently to each position. Let $W_i^V, W_i^K, W_i^Q \in \mathbb{R}^{d\times d'}$ denote the value, key, and query weight matrices and let $P\in \Pi_{d'}$. One symmetry in this setup can be described by setting $W_i^Q\mapsto W_i^Q P$ and $W_i^K\mapsto W_i^K P$ which would not change the function.

\section{Expressive Power} \label{subsec:expressive}
The expressive power of equivariant networks is an important concern since by restricting our hypothesis class we might unintentionally impair its function approximation capabilities. % even within the space of equivariant functions. 
For example, this is the case with Graph neural networks \cite{morris2019weisfeiler, xu2018powerful, morris2021weisfeiler}. Here, we provide a first step towards understanding the expressive power of \ourmethod{} by demonstrating that these networks are capable of approximating feed-forward procedures on input networks.

\begin{table}
\centering
\caption{\textit{INR classification:} The class of an INR is defined by the image that it represents. We report the average test accuracy.}
\vskip 0.11in
\small
\begin{tabular}{lcc}
\toprule
& MNIST INR & Fashion-MNIST INR \\
\midrule
MLP & $17.55 \pm 0.01$ & $19.91\pm 0.47$ \\
MLP + Perm. aug  & $29.26 \pm 0.18$ &  $22.76\pm 0.13$ \\
MLP + Alignment & $58.98\pm 0.52$  & $47.79 \pm 1.03$    \\
INR2Vec (Arch.) & $23.69\pm 0.10$ & $22.33 \pm 0.41$  \\
Transformer & $26.57\pm 0.18$ & $26.97\pm 0.33$  \\
\midrule
DWSNets (ours) & $\mathbf{85.71\pm 0.57}$  &  $\mathbf{67.06 \pm 0.29}$ \\
\bottomrule
\end{tabular}
\label{tab:mnsit_clf}
\end{table}

\begin{proposition}[\ourmethod{} can approximate a feed-forward pass]\label{prop:expressive}
Let $M,d_0,\dots,d_M$ specify an MLP architecture with ReLU nonlinearities. Let $K \subset \mathcal{V}$, $K' \subset  \mathbb{R}^{d_0}$ be compact sets. DWSNets with ReLU nonlinearities are capable of uniformly approximating a feed-forward procedure on an input MLP  represented as a weight vector $v\in K $ and an input to the MLP,  $x \in K'$. \end{proposition}
%\begin{proof} [Proof idea]
%    We assume the input to our equivariant network consists of both the weights $v\in K$ and an input to the input network $x\in K'$ stored as a second feature channel. For each layer in the MLP, we have to simulate three steps: (1) matrix multiplication (2) bias addition (3) Activation function. Step (1) can be implemented by first performing a broadcasting operation over the set dimension, performing a Hadamard product, and then summing on the relevant axis. Broadcasting and pooling are supported by our architecture as equivariant linear operations. Since we cannot perform the Hadamard product directly, we note that it is a pointwise continuous function. Under our compactness assumptions, it can be approximated by a fully-connected network applied to channel dimension, using the universal approximation theorem \cite{hornik1991approximation}. Importantly, this feature-wise fully connected networks can be implemented as a concatenation of several equivariant layers. Step (2) can be achieved by mapping the approximation of the matrix-vector product to the bias term as an extra feature channel and summing them. Step (3) can be implemented using the activation of the equivariant network, or by approximating the activation using the universal approximation theorem. 
%\end{proof} 

The proof can be found in Appendix \ref{appx:expressive}.
Importantly, this inherent ability of DWSNets to evaluate input networks could be a very useful tool, for example, in order to separate MLPs that represent different functions. 
As another example, below, we show that \ourmethod{} can approximate any ``nice" function defined on the space of functions represented by MLPs with weights in some compact subset of $\mathcal{V}$. 

\begin{proposition}\label{prop:approx}
    (informal) Let $g:\mathcal{F_V}\rightarrow \mathbb{R}$ be a function defined on the space of functions represented by $M$-layer ReLU MLPs with dimensions $d_0,...,d_M$, whose weights are in a compact subset of $\mathcal{V}$ and their input domain is a compact subset of $\mathbb{R}^{d_0}$. Assume that $g$ is $L$-Lipshitz w.r.t  $||\cdot ||_\infty$ (on functions), then under some additional mild assumptions specified in Appendix \ref{appx:approx}, DWSNets with ReLU nonlinearities are capable of uniformly approximating $g$. 
\end{proposition}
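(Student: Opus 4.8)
The plan is to reduce the approximation of $g$ — a function on the infinite-dimensional space of functions — to the approximation of an ordinary Lipschitz function on a finite-dimensional Euclidean space, by sampling the input domain at finitely many points and invoking the feed-forward guarantee of Proposition \ref{prop:expressive}. First I would fix a $\delta$-net $x_1,\dots,x_N$ of the compact input domain $K'\subset\mathbb{R}^{d_0}$ and introduce the sampling map $\Phi:\mathcal{V}\to\mathbb{R}^{Nd_M}$, $\Phi(v)=(f_v(x_1),\dots,f_v(x_N))$, where $f_v$ is the MLP with weights $v$. Since each $f_v$ is a composition of affine maps and ReLUs and the weights range over the compact set $K$, the family $\{f_v\}_{v\in K}$ is uniformly Lipschitz in $x$ with some constant $C$, so that for all $v,v'$,
\[
\|f_v - f_{v'}\|_\infty \le \|\Phi(v)-\Phi(v')\|_\infty + 2C\delta .
\]
Combined with the $L$-Lipschitz hypothesis on $g$, this gives $|g(f_v)-g(f_{v'})|\le L\|\Phi(v)-\Phi(v')\|_\infty + 2LC\delta$; that is, $v\mapsto g(f_v)$ factors through $\Phi$ up to an error that vanishes as the net is refined.

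Next I would convert this approximate factorization into a genuine Lipschitz function on $\mathbb{R}^{Nd_M}$ through a McShane-type extension: setting $h(p):=\inf_{v\in K}\big[g(f_v)+L\|p-\Phi(v)\|_\infty\big]$ yields an $L$-Lipschitz $h$ with $|g(f_v)-h(\Phi(v))|\le 2LC\delta$ for every $v\in K$ (the upper bound by evaluating the infimum at $v$ itself, the lower bound from the displayed inequality). Because $\Phi(K)$ is compact and $h$ is continuous, the universal approximation theorem for ReLU MLPs supplies a standard fully connected network $\hat h$ with $\sup_{\Phi(K)}|h-\hat h|\le\epsilon$.

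Finally I would assemble the DWSNet. By Proposition \ref{prop:expressive}, for each fixed $x_i\in K'$ a DWSNet approximates $v\mapsto f_v(x_i)$ uniformly on $K$; running these $N$ evaluations in parallel — and noting each coordinate $f_v(x_i)$ is permutation-invariant, hence produced by an invariant DWS head in the sense of \cref{subsec:invariant} — yields $\widehat\Phi(v)$ with $\|\Phi(v)-\widehat\Phi(v)\|_\infty$ arbitrarily small on $K$. Composing with $\hat h$, which is admissible after the invariant layer, produces a DWSNet whose output is $\hat h(\widehat\Phi(v))$. The triangle split $|g(f_v)-\hat h(\widehat\Phi(v))|\le |g(f_v)-h(\Phi(v))| + |h(\Phi(v))-\hat h(\Phi(v))| + |\hat h(\Phi(v))-\hat h(\widehat\Phi(v))|$ bounds the three terms by $2LC\delta$, $\epsilon$, and $L_{\hat h}\,\|\Phi-\widehat\Phi\|_\infty$ respectively (using the Lipschitzness of the finite network $\hat h$), and choosing $\delta$, then $\hat h$, then the feed-forward accuracy in that order drives the total below any prescribed threshold.

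I expect the main obstacle to be the factorization step, namely justifying that evaluating $f_v$ at finitely many points suffices to recover $g(f_v)$. This is precisely where the sup-norm Lipschitz hypothesis on $g$ and the uniform equicontinuity of the ReLU MLP family over the compact weight set $K$ are indispensable, and it is presumably among the ``mild assumptions'' that ensure a uniform modulus of continuity; the remaining ingredients — the McShane extension, universal approximation, and the parallel application of Proposition \ref{prop:expressive} — are comparatively routine.
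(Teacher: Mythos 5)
Your proof is correct, and its overall skeleton matches the paper's: fix a finite $\delta$-net of the input domain, use the feed-forward simulation of \cref{prop:expressive} in parallel to approximate the sampling map $v\mapsto(f_v(x_1),\dots,f_v(x_N))$ with an invariant DWSNet, build a continuous surrogate of $g$ on the finite-dimensional sample space, approximate it with an ordinary MLP, and compose, ordering the choices of $\delta$, the MLP, and the simulation accuracy exactly as you do. The genuine difference is in the key lemma that converts the point samples into a function of the samples alone (the paper's Lemma~\ref{trm:sim_to_target}). The paper resolves the non-injectivity of the sampling map by a smoothing construction: it averages $g(f_{v'})$ over all $v'$ whose samples lie within $\sigma$ of a given value, and must then impose additional assumptions --- continuity of the volume $vol_\sigma$ in its arguments and a uniform lower bound $vol_\sigma>\lambda>0$ --- to prove the averaged function is continuous and well behaved. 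You instead observe that $v\mapsto g(f_v)$ is approximately $L$-Lipschitz \emph{as a function of the samples} (up to the additive $2LC\delta$ coming from equicontinuity of $\{f_v\}_{v\in K}$) and apply a McShane-type extension $h(p)=\inf_{v}\bigl[g(f_v)+L\|p-\Phi(v)\|_\infty\bigr]$, which yields a globally Lipschitz surrogate with the same $O(\delta)$ error and no volume hypotheses at all. Your route is therefore both more elementary and slightly stronger: it dispenses with the paper's assumption (4) entirely, and it produces an explicitly Lipschitz $h$ rather than merely a continuous one. The one step you state without proof --- that the compact weight set yields a uniform Lipschitz constant $C$ for the ReLU MLPs in $x$ --- is immediate from bounding the product of operator norms of the weight matrices over $K$, and is essentially the paper's assumption (3); everything else checks out.
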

The proof can be found in Appendix \ref{appx:approx}.
We note that \cref{prop:approx} differs from most universality theorems in the relevant literature \cite{maron2019universality,keriven2019universal} since we do not prove that we can approximate any $G$-equivariant function on $\mathcal{V}$. In contrast, we show that \ourmethod{} are powerful enough to approximate functions on the function space defined by the input MLPs, that is, functions that give the same result to all weights that represent the same functions.%, which is our main concern. %(including symmetries that were not explicitly modeled in this work). \hm{consider changing wording of last sentence as not very clear}
%, and these are the functions we truly care about, and want to approximate.

\section{Experiments}\label{sec:Experiments}

We evaluate \ourmethod{} in two families of tasks. (1) First, taking input networks that represent data, like INRs \cite{park2019deepsdf, sitzmann2020implicit}. Specifically, we train a model to classify INRs based on the class of the image they represent or predict continuous properties of the objects they represent. (2) Second, taking input networks that represent standard input-output mappings such as image classifiers. We train a model to operate on these mappings and adapting them to new domains. We also perform additional experiments, for example predicting the generalization performance of an image classifier in Appendix~\ref{sec:additional_exp}. Full experimental and technical details are discussed in Appendix~\ref{app:exp_details}. \revision{To support future research and the reproducibility of our results, we made our source code and datasets publicly available at: \textcolor{magenta}{\url{https://github.com/AvivNavon/DWSNets}}}. 

\begin{figure}[t]
    \centering
    \includegraphics[width=1.\linewidth, clip]{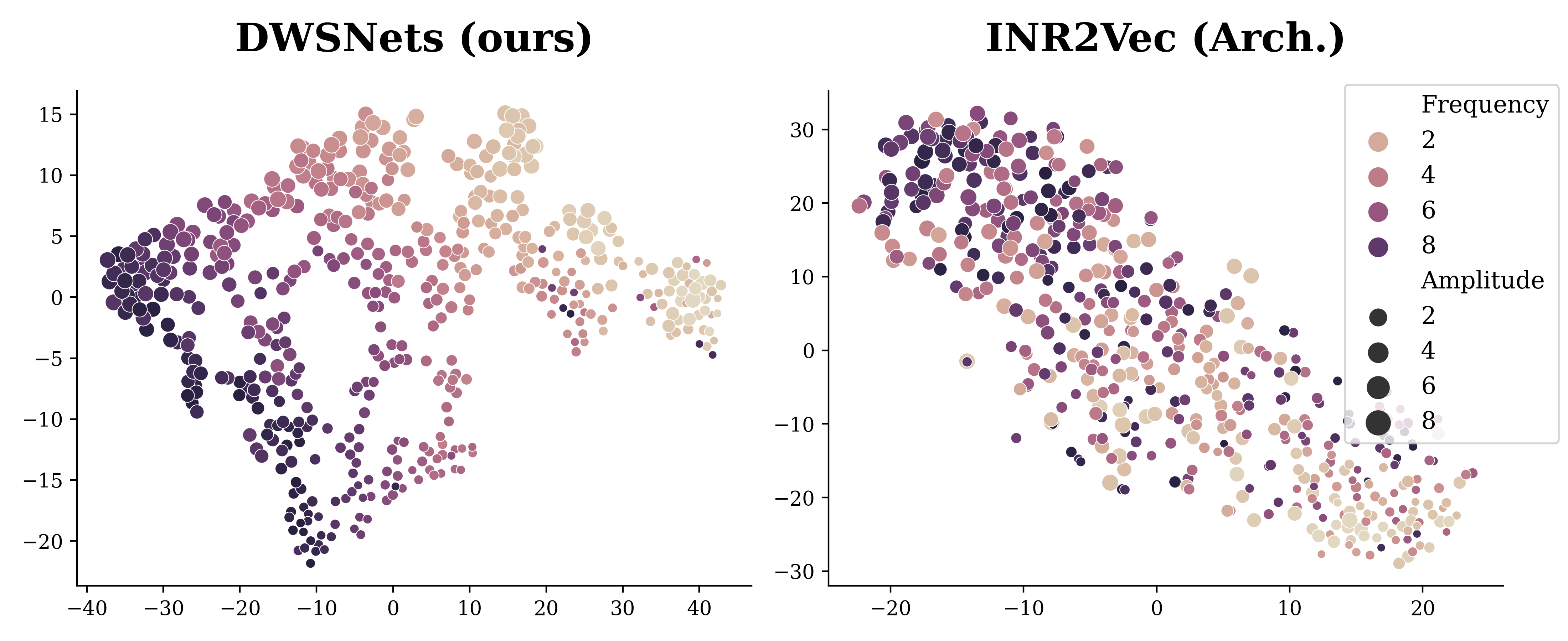}
    \caption{\textit{Dense representation:} 2D TSNE of the resulting low-dimensional space. We present the results for \ourmethod{} and the second best performing baseline, INR2Vec (architecture). See Appendix~\ref{app_sec:dense_rep} for full results.}
    \label{fig:dense_main}
\end{figure}

\textbf{Baselines.} Our objective in this section is to compare different \emph{architectures} that operate directly on weight spaces, using the same data, loss function, and training procedures. 
As learning on weight spaces is a relatively new problem, we consider five natural and recent baselines. \textbf{(i) \textit{MLP} :} A standard MLP is applied to a vectorized version of the weight space. \textbf{(ii) \textit{MLP + augmentations}:}, apply the MLP from (i) but with permutation-based data augmentations sampled randomly from the symmetry group $G$. \textbf{(iii) \textit{MLP + weight alignment}: } We perform a weight alignment procedure prior to training using the algorithm recently suggested in \cite{ainsworth2022git}, see full details in Appendix~\ref{app:exp_details}. %Note that since alignment is hard, this process is not guaranteed to converge to an optimal solution and takes considerable time in practice since it involves solving a very large number of linear assignment problems. Therefore, for smaller datasets we run the procedure as prescribed in \cite{ainsworth2022git}; However, for larger datasets, we apply an approximation to relieve the computational burden, see full details in Appendix~\ref{app:exp_details}. \gal{shorten} 
\textbf{(iv) \textit{INR2Vec}:} The architecture suggested in \cite{luigi2023deep} (see Appendix \ref{sec:more_prev} for a discussion). %footnote
Note we do not use their pre-training since we are interested in comparing only the architectures. \textbf{(v) \textit{Transformer}:} The architecture of \cite{schurholt2021self}. It adapts the transformer encoder architecture and attends between different rows in weight and bias matrices to form a global representation of the input network.
% \an{Shamsian please add the Transformer baseline. Also I think the same footnote from INR2Vec is relevant here, no?.}

%\hm{add baselines: (1) transformer as in \cite{schurholt2021self} (2) simple weight statistics as in \cite{unterthiner2020predicting} see subsec 4.2 input features and pick the best one}

%\hm{ explain why we are not comparing to Dupont and to Xu? (comparing only methods that opearate on the raw weight representation)}\an{don't think we should specifically talk about this paper, just say that we consider methods that operates directly on raw weights.}\hm{agree}

\textbf{Data preparation.} We train all input networks independently, each starting with a different random seed, in order to test our architecture on diverse data obtained from multiple independent sources. 
%Importantly, 
Unless stated otherwise, we train a \emph{single} copy for each network, e.g., a single INR per image. %In Appendix~\ref{app:multi_views} we present results for using multiple views and highlight the importance of establishing an evaluation protocol for DWS models and experiments.

\subsection{Results}

\textbf{Regression of sine wave frequency.}
To first illustrate the operation of \ourmethod{}, we look into a regression problem. 
We train INRs to fit sine waves on $[-\pi, \pi]$, with different frequencies sampled from $U(0.5, 10)$. %Each sine wave is represented as an MLP trained as an INR network, and 
The task is to have the DWSNet predict the frequency of a given test INR network. To illustrate the generalization capabilities of the architectures, we repeat the experiment by training %the DWSNet 
with a varying number of training examples (INRs). Figure~\ref{fig:sine} shows that \ourmethod{} performs significantly better than baseline methods even with a small number of training examples. %Furthermore, experiments, compared to the most natural baseline of aligning the networks \ourmethod{} is  invariant to the size of the dataset (i.e, the number of models). Further details are given in Appendix \ref{app:exp_details}. \hm{I would say "generalizes nicely even with a small number of training examples"}

\textbf{Classification of images represented as INRs.} Here, INRs were trained to represent images from MNIST~\cite{lecun1998gradient} and Fashion-MNIST~\cite{xiao2017fashion}. The task is to %have the DWSNet 
recognize the image class, like the digit in MNIST, by using the weights of 
%these INRs as input
input INR. Table~\ref{tab:mnsit_clf} shows that \ourmethod{} outperforms all baseline methods by a large margin. 

% \textbf{Predicting the generalization error of neural networks.} 

% Given an MLP classifier, we train a DWSnet to predict its generalization performance, defined as the test error on a held-out set (see also \cite{schurholthyper}). To create a dataset for this problem, we train 200 MLP image classifiers on the Fashion-MNIST dataset. We save checkpoints with the classifier's weights throughout the optimization process, together with its generalization error.  
% Then, we train a DWSnet to predict the generalization performance from the classifier's weights. Figure~\ref{fig:fmnist_gen} shows that DWSnet achieves the lowest error, significantly outperforming most baselines.

%achieves very low error in this task ($\text{MSE}=0.08$), significantly lower than most baselines. 

% \begin{figure*}[t]
%     \centering
%     \includegraphics[width=1.\linewidth, clip]{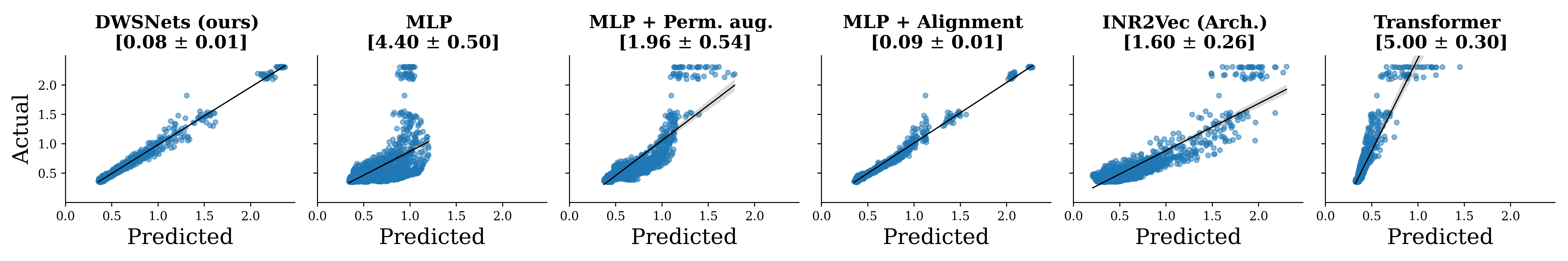}
%     \caption{\textit{Predicting the generalization performance of NNs:} Given the weight vector $v$ the task is to predict the performance of $f(\cdot;v)$ on the test set. We report $100\times$MSE averaged over 3 random seeds. Black lines represents linear fit to the predicted-vs-actual data points.}
%     \label{fig:fmnist_gen}
% \end{figure*}

\begin{table}
\centering
\small
\vspace{-0.2in}
\caption{\textit{Dense representation of sine wave INRs:} MSE of a linear regressor that predicts \textit{frequency} and \textit{amplitude}.} %using the learned low-dimensional representation. }% We report MSE averaged over 3 random seeds. \haggai{perhaps use wraptable for that one?}\ia{I think this should go the Appendix}} 
\vskip 0.11in
\begin{tabular}{lc}
\toprule
& MSE  \\
\midrule
MLP & $7.39 \pm 0.19$ \\
MLP + Perm. aug  & $5.65 \pm 0.01$ \\
MLP + Alignment & $4.47 \pm 0.15$   \\
INR2Vec (Arch.) &  $3.86 \pm 0.32$  \\
Transformer & $ 5.11 \pm 0.12$  \\ 
\midrule
DWSNets (ours) & $\mathbf{1.39 \pm 0.06}$ \\
\bottomrule
\end{tabular}
% \vskip 0.11in
\label{tab:dense_rep}
\end{table}

\textbf{Self-supervised learning for dense representation.} Here we wish to embed neural networks into a semantic coherent low dimensional space, similar to~\citet{schurholthyper}. To that end, we fit INRs on sine waves of the form $a\sin(bx)$ on $[-\pi, \pi]$. Here $a,b\sim U(0, 10)$ and $x$ is a grid of size $2000$. We use a SimCLR-like training procedure and objective~\cite{chen2020simple}: Following ~\citet{schurholthyper},
% \begin{wraptable}[11]{r}{0.25\textwidth}
% \begin{table}
% \centering
% \small
% \vspace{-0.2in}
% \caption{\textit{Dense representation:} MSE of a linear regressor that predicts \textit{frequency} and \textit{amplitude}.} %using the learned low-dimensional representation. }% We report MSE averaged over 3 random seeds. \haggai{perhaps use wraptable for that one?}\ia{I think this should go the Appendix}} 
% \vskip 0.11in
% \begin{tabular}{lc}
% \toprule
% & MSE  \\
% \midrule
% MLP & $7.39 \pm 0.19$ \\
% MLP + Perm. aug  & $5.65 \pm 0.01$ \\
% MLP + Alignment & $4.47 \pm 0.15$   \\
% INR2Vec (Arch.) &  $3.86 \pm 0.32$  \\
% Transformer & $ 5.11 \pm 0.12$  \\ 
% \midrule
% DWSNets (ours) & $\mathbf{1.39 \pm 0.06}$ \\
% \bottomrule
% \end{tabular}
% \vskip 0.11in
% \label{tab:dense_rep}
% \end{table}
% \end{wraptable}
we generate random views from each INR by adding Gaussian noise (with a standard deviation of 0.2) and random masking (with a probability of 0.5). We evaluate the different methods in two ways. First, we qualitatively observe a 2D TSNE of the resulting space. The results are presented in Figures~\ref{fig:dense_main} and~\ref{fig:dense_all}. 
%Clearly, \ourmethod{} learns a better low-dimensional representation of the weight space.
For quantitative evaluation, we train a (linear) regressor for predicting $a, b$ on top of the  embedding space obtained by each method. See results in Table~\ref{tab:dense_rep}.

\textbf{Learning to adapt networks to new domains.}
%Adapting a network to a new data distribution is an important task. 
Here we train a model to adapt a classification model to a new domain. Specifically, given an input weight vector $v$, we wish to output residual weights $\Delta v$ such that a classification network parametrized using $v-\Delta v$ performs well on the new domain. It is natural to require that $\Delta v$ will be permuted if $v$ is permuted, and hence a $G$-equivariant architecture is appropriate. At test time, our model can adapt an unseen classifier to the new domain using a single forward pass.
%
% \begin{table}
% \centering
% \caption{\textit{Adapting a network to a new domain.} Test accuracy of CIFAR-10-Corrupted models adapted from CIFAR-10 models.}
% \vskip 0.11in
% \scriptsize
% \begin{tabular}{lc}
% \toprule
% & CIFAR10 $\to$ CIFAR10-Corrupted \\
% \midrule
% No adaptation & $60.92 \pm 0.41$ \\
% \midrule
% MLP & $64.33 \pm 0.36$ \\
% MLP + Perm. aug  & $64.69 \pm 0.56$ \\
% MLP + Alignment & $67.66 \pm 0.90$   \\
% INR2Vec (Arch.) & $65.69\pm 0.41$  \\
% Transformer & $61.37\pm 0.13$  \\ 
% \midrule
% Ours & $\mathbf{71.36\pm 0.38}$ \\
% \bottomrule
% \end{tabular}
% \label{tab:cifar_da}
% \end{table}
%
Using the CIFAR10~\cite{krizhevsky2009learning} dataset as the source domain, we train multiple image classifiers. To increase the diversity of the input classifiers, we train each classifier on the binary classification task of distinguishing between two randomly sampled classes. For the target domain, we use a version of CIFAR10 corrupted with random rotation, flipping, Gaussian noise, and color jittering. The results are presented in Table~\ref{tab:cifar_da}. Note that in test time the model should generalize to unseen image classifiers, as well as unseen images.

%Analyzing neural networks dataset in their raw form is an existing new research direction~\cite{}. 

%\begin{table}[]

%\end{table}

\begin{table}
\centering
\caption{\textit{Adapting a network to a new domain.} Test accuracy of CIFAR-10-Corrupted models adapted from CIFAR-10 models.}
\vskip 0.11in
\small
\begin{tabular}{lc}
\toprule
& CIFAR10 $\to$ CIFAR10-Corrupted \\
\midrule
No adaptation & $60.92 \pm 0.41$ \\
\midrule
MLP & $64.33 \pm 0.36$ \\
MLP + Perm. aug  & $64.69 \pm 0.56$ \\
MLP + Alignment & $67.66 \pm 0.90$   \\
INR2Vec (Arch.) & $65.69\pm 0.41$  \\
Transformer & $61.37\pm 0.13$  \\ 
\midrule
DWSNets (ours) & $\mathbf{71.36\pm 0.38}$ \\
\bottomrule
\end{tabular}
\label{tab:cifar_da}
\end{table}

\textbf{Multiple INR views as data augmentation.} We investigate the impact of training with multiple INR views (copies) for each image on the performance of our model. 
\begin{wraptable}[13]{r}{0.25\textwidth}
% \begin{table}
\centering
\small
\vspace{-0.2in}
\caption{\textit{Fashion-MNIST multi-view INR classification}: Test results for training with a varying number of INR views per image.}
\vskip 0.11in
\begin{tabular}{cc}
\toprule
\# INRs & Acc.  \\
\midrule
1 & $67.06 \pm 0.29$ \\
2 & $70.22 \pm 0.38$ \\
4 & $70.31 \pm 0.09$ \\
6 & $73.32 \pm 0.11$   \\
8 &  $74.87 \pm 0.18$  \\
10 & $ 75.12 \pm 0.05$  \\ 
\bottomrule
\end{tabular}
\vskip 0.11in
\label{tab:numltiviews}
% \end{table}
\end{wraptable}
We return to the Fashion-MNIST INR classification task, using a varying number of copies $k\in\{1,\dots,10\}$. The results, presented in Table~\ref{tab:numltiviews},
show that incorporating a diverse set of INRs per image through random initializations significantly improves the model's generalization capabilities (by $\sim 8\%$).
Our results highlight the importance of establishing an adequate evaluation protocol for DWS models and experiments (e.g., by using the same number of INR copies for training each model).

\subsection{Analysis of the Results} 
In this section, we evaluated \ourmethod{} on several learning tasks and showed that it outperforms all other methods, usually by a large margin. %Furthermore,
Also, compared to the most natural baseline of network alignment, \ourmethod{} scale significantly better with the data. In reality, it is challenging to use this baseline %due to the fact that
since the weight-space alignment problem is hard \cite{ainsworth2022git}. The problem is further amplified when having large input NNs or large (networks) datasets.

%These findings are in agreement with previous studies regarding learning invariant and equivariant functions, such as \citet{qi2017pointnet,maron2020learning}.

%In most experiments, the second best baseline is MLP + alignment. It is important to note that it is challenging to use baseline in reality due to the fact that the weight-space alignment problem is hard, and the fact that this problem should be solved on large datasets. Lastly, we would like to point out that these findings are in agreement with previous studies regarding learning invariant and equivariant functions, such as \cite{qi2017pointnet,zaheer2017deep,maron2020learning}.

\section{Conclusion and Future Work}\label{sec:con}
This paper considers the problem of applying neural networks directly on neural weight spaces. We present a principled approach and propose an architecture for the network that is equivariant to a large group of natural symmetries of weight spaces. We hope this paper will be one of the first steps towards neural models capable of processing weight spaces efficiently in the future.

\textbf{Limitations.} 
One limitation of our method is that an equivalent layer structure is currently tailored to a specific MLP architecture. However, %we believe that 
this can be alleviated in the future, for example by sharing the parameters of the equivariant blocks between inner layers. Also, we found it difficult to train \ourmethod{} on some learning tasks, presumably because finding a suitable weight initialization scheme for \ourmethod{} was hard. See Appendix \ref{app:failure-cases} for a discussion on these cases.  %Also, we found it hard to learn over more complex INRs, i.e., for 3D shapes or more realistic images. Potentially, this is due to the fact that, ...\an{need some hypothesis here. This is an important point, and a weak spot.} 
Finally, the implementation of our \ourmethod{} is somewhat complicated. We made our code and data publicly available so that others can build on it and improve it.

\textbf{Future work.}
Several potential directions for future research could be explored, including modeling other weight space symmetries in architectures, understanding how to initialize the weights of \ourmethod{}, and studying the approximation power of \ourmethod{}. Other worthwhile directions are finding efficient data augmentation schemes for training on weight spaces, \revision{extend DWSNets to allow heterogeneous input networks,} and incorporating permutation symmetries for other types of input architectures.

% \begin{figure}[t]
%     \centering
%     \includegraphics[width=1.\linewidth, clip]{}
%     \caption{\textit{Fashion-MNIST INR classification: } }
%     \label{fig:dense_main}
% \end{figure}

% \begin{table}
% \centering
% \scriptsize
% \vspace{-0.2in}
% \caption{\textit{Fashion-MNIST INR classification with multiple views}: Test accuracy results when training with a varying number of INR copies per image.}
% \vskip 0.11in
% \begin{tabular}{lccc}
% \toprule
%     & \multicolumn{3}{c}{\# INR Views} \\
%     \cline{2-4} \\
%     & 2 & 6 & 10 \\
%     \cline{2-4} \\
% DWSNets &  $70.22 \pm 0.38$  &  $73.32 \pm 0.11$ & $ 75.12 \pm 0.05$ \\ 
% \bottomrule
% \end{tabular}
% \vskip 0.11in
% \label{tab:numltiviews}
% \end{table}

\section{Acknowledgements}

\revision{The authors wish to thank Nadav Dym and Derek Lim for providing valuable feedback on early versions of the
manuscript, %Also, they would like to thank 
and Yaron Lipman for the helpful discussions. 
This study was funded by
a grant to GC from the Israel Science Foundation (ISF 737/2018), and by an equipment grant to GC and Bar-Ilan
University from the Israel Science Foundation (ISF 2332/18). AN and AS are supported by a grant from the Israeli
higher-council of Education, through the Bar-Ilan data science institute. IA is supported by a PhD fellowship %for data science 
from the Israeli Council for higher education.}

\bibliography{main.bib}
\bibliographystyle{icml2023}

%%%%%%%%%%%%%%%%%%%%%%%%%%%%%%%%%%%%%%%%%%%%%%%%%%%%%%%%%%%%%%%%%%%%%%%%%%%%%%%
%%%%%%%%%%%%%%%%%%%%%%%%%%%%%%%%%%%%%%%%%%%%%%%%%%%%%%%%%%%%%%%%%%%%%%%%%%%%%%%
% APPENDIX
%%%%%%%%%%%%%%%%%%%%%%%%%%%%%%%%%%%%%%%%%%%%%%%%%%%%%%%%%%%%%%%%%%%%%%%%%%%%%%%
%%%%%%%%%%%%%%%%%%%%%%%%%%%%%%%%%%%%%%%%%%%%%%%%%%%%%%%%%%%%%%%%%%%%%%%%%%%%%%%
\newpage
\appendix
\onecolumn

\section{Related Work}\label{sec:more_prev}

\textbf{Processing neural networks.} 
In recent years several studies suggested using the parameters of NNs for learning tasks. \citet{BakerGRN18} tries to infer the final performance of a model based on plain statistics such as the network architecture, validation accuracy at different checkpoints, and hyper-parameters. In a similar vein, both \cite{eilertsen2020classifying, unterthiner2020predicting} attempt to predict properties of trained NNs based on their weights. \cite{eilertsen2020classifying} tries to predict the hyper-parameters used to train the network, and \cite{unterthiner2020predicting} tries to predict the network generalization capabilities. Both of these studies use standard NNs on the flattened weights or on some statistics of them. Our approach, on the other hand, introduces useful inductive biases for these learning tasks and is not limited to the scope of these studies.
In \cite{xu2022signal}, it was proposed that neural networks can be processed by applying a neural network to a concatenation of their high-order spatial derivatives. The method focuses on INRs, for which derivative information is relevant, and depends on the ability to sample the input space efficiently. The ability of these networks to handle more general tasks is still not well understood. Furthermore, these architectures may require high-order derivatives, which result in a substantial computational burden. %\hm{AFAIU the result is not a new INR - so composing several such networks is computationally daunting}
\citet{dupont2022data} suggested applying deep learning tasks, such as generative modeling, to a dataset of INRs fitted from the original data. To obtain useful representations of the data, the authors suggest to meta-learn low dimensional vectors, termed modulations, which are embedded in a NN with shared parameters across all training examples. Unlike this approach, our method can work on any network and is agnostic to the way that it was trained. Several studies \cite{lu2019neural, jaeckle2021generating, knyazev2021parameter,litany2022federated} treated the NNs as graphs for formal verification, generating adversarial examples, and parameter prediction respectively.  \citet{peebles2022learning} proposed a generative approach to output a target network based on an initial network and a target metric such as the loss value or return. \citet{schurholtmodel} published a dataset of vectorized trained neural networks, referred to as model-zoo, to encourage research on NN models. Since these models have a CNN architecture, they are not suitable for us. 
In \cite{schurholt2021self} the authors suggest methods to learn representations of trained NNs using self-supervised methods, and in \cite{schurholthyper} this approach is leveraged for NN model generation. The empirical evaluation in the paper shows that our method compares favorably to this baselines. Similar modeling was utilized in a recent submission by \citet{luigi2023deep}. In this study, the authors propose a methodology for processing of INRs that combines two components: (1) a neural architecture that operates on stacks of weights and bias vectors assuming a set structure, and (2) a pre-training procedure based on task ensuring that the output of this network is capable of reconstructing the INR. %To the best of our knowledge, this is the first paper to consider learning directly from weight matrices.  
It should be noted that this work (1) relies on the ability to evaluate the INR as a function, which is feasible only in low dimensional spaces; and (2) assumes all data was generated using a meta-learning algorithm so that their representations would be aligned.
%\textbf{Comparison to \citet{anonymous2023deep}.} %\an{what about the paper that do similar thing with Transformers? Not sure what was the focus of the other paper.}
Moreover, from a symmetry and equivariance perspective, 
%the authors suggest stacking the weight matrices and bias vectors on top of each other and using a DeepSets-like architecture \cite{zaheer2017deep} for processing this weight-space representation. In essence, 
their formulation assumes that the rows of all weight matrices and all biases have a \emph{global} set structure, which implies that their networks are invariant to permutations of rows and biases across weight matrices.  Unfortunately, in general, such permutations could result in a change in the underlying function. Therefore, from a symmetry and equivariance perspective, their work improperly models the symmetry group. 
Finally, in recent years several studies inspected the problem of aligning the weights of NNs \cite{ashmore2015method, yurochkin2019bayesian, wang2019federated, singh2020model, tatro2020optimizing, entezari2021role, ainsworth2022git, wang2022understanding}. As stated in the main text, solving the alignment tasks is hard and these strategies suffer from scaling issues to large datasets.

% Similar modeling was utilized in \cite{schurholthyper}, which applied a transformer to the rows of these matrices.

%\an{we need to emphasize that this work (1) rely on the fact the input is INR (they evaluate the function in the pretraining process), and (2) needs meta learning.}

% \gal{See also arxiv.org/pdf/2208.04369.pdf}

\textbf{Equivariant architectures.} 
Complex data types, such as graphs and images, are often associated with groups of transformations that change data representation without changing the underlying data. These groups are known as symmetry groups, and they are commonly formulated through group representations.
Functions defined on these objects are often invariant or equivariant to these symmetry transformations. 
A good example of this would be a graph classification function that is node-permutation invariant, or an image segmentation function that is translation equivariant. 
When trying to learn such functions, a wide range of studies have demonstrated that constraining learning models to be equivariant or invariant to these transformations has many advantages, including smaller parameter space, efficient implementation, and better generalization abilities \cite{cohen2018spherical,kondor2018generalization,esteves2018learning,zaheer2017deep,hartford2018deep,maron2018invariant,elesedy2021provably}. 
The majority of equivariant and invariant models are constructed in the same manner: first, a simple equivariant function is identified. In many cases, these are linear \cite{zaheer2017deep,hartford2018deep,maron2018invariant}, although they may also be non-linear \cite{maron2019provably,thomas2018tensor, azizian2020expressive}. The network is then constructed by composing these simple functions interleaved with pointwise nonlinear functions. This paradigm was successfully applied to a multitude of data types, from graphs and sets \cite{zaheer2017deep,maron2018invariant}, through 3D data \cite{esteves2018learning} and spherical functions \cite{cohen2018spherical} to images \cite{cohen2016group}.

\textbf{Spaces of linear equivariant layers.} For a group $G$ and representation $(\mathcal{V},\rho),(\mathcal{W},\rho')$, solving for the space of linear equivariant layers $L:\mathcal{V}\rightarrow \mathcal{W}$ amounts to solving a system of linear equations of the form $L\rho(g)=\rho'(g)L$ for all $g\in G$, where $L$ is our unknown equivariant layer. \citet{wood1996representation,ravanbakhsh2017equivariance,maron2018invariant} showed that if $G$ is a finite permutation group, and $\rho,\rho'$ are permutation representations, then a basis for the space of equivariant maps is spanned by indicator tensors for certain orbits of the group action. Alternatively, \cite{finzi2021practical} derived numerical algorithms for solving these systems of equations.

\textbf{Learning on set-structured data.} Among the most prominent examples of equivariant architectures are those designed to process set-structured data, where the input represents a set of elements and the learning tasks are invariant or equivariant to their order. The pioneering works in this area were DeepSets \cite{zaheer2017deep} and PointNet \cite{qi2017pointnet}. 
In subsequent work, the linear sum aggregation has been replaced with attention mechanisms \cite{lee2019set} and the layer characterization has been extended to multiple sets \cite{hartford2018deep} and sets with structured elements \cite{maron2020learning,wang2020equivariant}. 
As shown in Section \ref{sec:sym}, our weight-space symmetry group is a product of symmetric groups acting by permuting the weight spaces. A key observation we make in Section \ref{sec:charcterize} is that our basic linear layer can be broken up into multiple linear blocks that implement previously characterized equivariant layers for sets.  

Here we define the layers from \cite{zaheer2017deep,hartford2018deep} as they play a significant role in our DWS-layers.
\textit{DeepSets \cite{zaheer2017deep}}: For an input $X\in \mathbb{R}^{n\times d}$, that represents a set of $n$ elements, the DeepSets layer is the most general $S_n$-equivariant linear layer and is defined as $L_{\text{DS}}(X)_i=W_1 X_i + W_2\sum_j X_j$, where $W_1,W_2\in \mathbb{R}^{d'\times d}$ are learnable linear transformations. (ii) \textit{Equivariant layers for multiple sets}:  these are layers for  cases where the input involves two or more set dimensions. Formally, let $X\in \mathbb{R}^{n\times m \times d}$ where $m,n$ represent set dimensions, meaning we don't care about the order of the elements in these dimensions, and $d$ is the number of feature channels. \citet{hartford2018deep} showed that the most general $S_n\times S_m$-equivariant  linear layer is of the form  $L_{\text{Har}}(X)_{ij}=W_1 X_{ij} + W_2 \sum_i X_{ij} + W_3 \sum_j X_{ij} + W_4 \sum_{ij} X_{ij}$, where, again $W_1,W_2,W_3,W_4\in \mathbb{R}^{d'\times d}$.

\section{Multiple Channels, Invariant Layers and Biases for Equivariant Maps} \label{appx:features_and_bias}
Here we discuss equivariant maps between weight spaces with multiple features and bias terms.

\textbf{Layers with multiple feature channels.} 
It is common for deep networks to represent their input objects using multiple feature channels. Equivariant layers for multiple input and output channels can be obtained by using Proposition \ref{proposition:equi_direct_sum}. Formally, let $\mathcal{L}$ be a space of linear $G$-equivariant maps from $\mathcal{U}$ to $\mathcal{U}'$. A higher dimensional feature space for $\mathcal{U},\mathcal{U}'$ can be formulated as a direct sum of multiple copies of these spaces. A general linear equivariant map $L:\mathcal{U}^f\rightarrow \mathcal{U}'^{f'}$, where $f,f'$ are the feature dimensions, can be written as $L(X)_j = \sum_{i=1}^{f} L_{ij}(x_i)$, where $x_i$ refers to the $i$-th representation in the direct sum and $L_{ij}\in \mathcal{L}$ \footnote{See \cite{maron2018invariant} for a different way of deriving that.}.  

\textbf{Biases.} One typically adds a constant bias term to each output channel of the linear equivariant maps derived in Theorem \ref{thm:char_equi} to create affine transformations. As mentioned in \cite{maron2018invariant}, these bias terms have to obey a set of equations to make sure they are equivariant: if $L(X)=b\in \mathcal{V}$ is a constant map then we have $b=L(\rho(g)x)=\rho(g)L(x)=\rho(g)b$. When $\rho$ is a permutation representation, this means that the bias vector is constant on the orbits of the permutation group acting on the indices of the vector, leading to the following characterization:  

\begin{proposition}\label{lemma:bias}
    Let $G\leq S_n$ be a permutation group and $P$ its permutation representation on $\mathbb{R}^n$. Any vector $b\in\mathbb{R}^n$ with the property $b=P(g)b$ for all $g\in G$ is of the form $b = \sum_{i=1}^O w_ia_i$ where $w_i$ are scalars, $a_i\in \mathbb{R}^n,$ are indicators of the orbits of the action of $G$ on $[n]$  and $O$ is the number of such orbits.
\end{proposition}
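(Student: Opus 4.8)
The plan is to show that the fixed-point condition $b = P(g)b$, imposed for every $g \in G$, is equivalent to requiring $b$ to be constant on each orbit of the $G$-action on the index set $[n]$; once this is established, the claimed decomposition is obtained simply by reading off the constant value of $b$ on each orbit. To begin, I would make the permutation action explicit: since $P$ is a permutation representation, $P(g)$ acts on coordinates by $(P(g)b)_i = b_{g^{-1}(i)}$, so the equation $b = P(g)b$ is equivalent, coordinate by coordinate, to
\begin{equation}
    b_i = b_{g^{-1}(i)} \quad \text{for all } i \in [n] \text{ and } g \in G.
\end{equation}

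Next I would translate this into a statement about orbits. Two indices $i,j \in [n]$ lie in the same orbit precisely when $j = g^{-1}(i)$ for some $g \in G$, so the displayed equations say exactly that $b_i = b_j$ whenever $i$ and $j$ belong to a common orbit; that is, $b$ is constant on each orbit $O_1,\dots,O_O$. Conversely, any orbit-constant vector automatically satisfies $b_i = b_{g^{-1}(i)}$, since $g^{-1}(i)$ always lies in the orbit of $i$. Hence the space of vectors fixed by $P$ coincides with the space of vectors that are constant on orbits.

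It then remains to exhibit a basis. Writing $w_i$ for the common value of $b$ on the orbit $O_i$ and $a_i \in \mathbb{R}^n$ for its indicator vector, orbit-constancy gives $b = \sum_{i=1}^O w_i a_i$, the desired form; the $a_i$ are linearly independent because the orbits are pairwise disjoint and cover $[n]$, so they form a basis and $O$ is indeed the number of orbits. I do not expect any genuine obstacle here, as this is the bias counterpart of \cref{lemma:inv} and rests on the same elementary fact that invariance under a permutation representation means constancy on orbits. The only step meriting care is the equivalence between ``$i$ and $j$ share an orbit'' and ``$j = g^{-1}(i)$ for some $g$'', which follows directly from the definition of an orbit and is the one place where closure of $G$ under inverses is used.
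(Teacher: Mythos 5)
Your proposal is correct and follows essentially the same route as the paper, which derives the proposition directly from the fixed-point equation $b=\rho(g)b$ and the observation that invariance under a permutation representation forces constancy on the orbits of the index set. You simply spell out the coordinate-wise computation and the linear independence of the orbit indicators, which the paper leaves implicit.
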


In our case, we can think of $G$ as a subgroup of the permutation group on the indices of $\mathcal{V}$, i.e.,  all the entries of the weights and biases of an input network. The orbits of $G$, in that case, are subsets of the indices associated with specific weight and bias spaces, $\{ \mathcal{W}_m,\,\mathcal{B}_\ell\}$, and we can list them separately for each bias of weight space. %, which are a weighted sum of orbit indicators according to Lemma \ref{lemma:bias}. 
Table \ref{tab:orbits} lists these orbits. As an example, the bias term corresponds to $W_i$ for $i\in[2, M-1]$ is constant matrix $w\cdot11^T$ for a learnable scalar $w$; The bias term that corresponds to $W_1$ is constant along the columns, and the bias term that corresponds to $W_M$ is constant along the rows. Effectively, the complete bias term for $\mathcal{V}$ is a concatenation of the bias terms for all weights and biases spaces. 

%We note that the characterization of both biases and invariant maps uses indicator tensors of the orbits on the indices of the space. This is because the invariance and equivariance constraints give rise to the same fixed point equation, see \cite{maron2018invariant}.

\paragraph{Linear Invariant Maps for Weight-Spaces.}%\label{subsec:invariant}
Here, we provide a characterization of linear $G$-invariant maps $L:\mathcal{V}\rightarrow \mathbb{R}$. Invariant layers (which are often followed by fully connected networks) are typically placed after a composition of several equivariant layers when the task at hand requires a single output, e.g., when the input network represents an INR of a 3D shape and the task is to classify the shapes. %Conveniently, linear invariant maps are a specific case of equivariant maps and we can use Proposition \ref{proposition: equi_direct_sum} again with $\rho'$ as the trivial representation. This lemma, when restricted to the invariant maps, implies that any linear invariant map on a direct sum is a summation of linear invariant maps defined on the constituent sub-representations. This means that we only need to characterize the $G$-invariant maps on $\{\mathcal{W}_m\},\{\mathcal{B}_m\}$ independently.   
We use the following characterization of linear invariant maps from \citet{maron2018invariant}:

\begin{proposition}\label{lemma:inv}
    Let $G\leq S_n$ be a permutation group and $P$ its permutation representation on $\mathbb{R}^n$. Every linear $G$-invariant map $L:\mathbb{R}^n\rightarrow \mathbb{R}$ is of the form $L(x) = \sum_{i=1}^O w_ia_i^Tx$ where $w_i$ are learnable scalars, $a_i\in \mathbb{R}^n$ are indicator vectors for the orbits of the action of $G$ on $[n]$ and $O$ is the number of such orbits.
\end{proposition}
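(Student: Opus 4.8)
The plan is to exploit the fact that a linear functional on $\mathbb{R}^n$ is determined by a single coefficient vector, and that $G$-invariance of this functional is equivalent to the coefficient vector lying in the subspace of $G$-fixed points. First I would write an arbitrary linear map $L:\mathbb{R}^n\to\mathbb{R}$ as $L(x)=w^Tx$ for some $w\in\mathbb{R}^n$, which is always possible since every linear functional on a finite-dimensional inner product space has such a representation. The task then reduces to characterizing exactly which coefficient vectors $w$ yield an invariant map.

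Next I would translate the invariance condition into a condition on $w$. Requiring $L(P_\tau x)=L(x)$ for all $x\in\mathbb{R}^n$ and all $\tau\in G$ is equivalent to $w^TP_\tau x = w^Tx$ for all $x$, hence to $P_\tau^T w = w$ for every $\tau\in G$. Since permutation matrices are orthogonal, $P_\tau^T=P_{\tau^{-1}}$, so this says precisely that $w$ is a fixed point of the representation, i.e.\ $w\in(\mathbb{R}^n)^G$. Unpacking the matrix equation coordinatewise, $P_\tau^T w = w$ reads $w_i = w_{\tau(i)}$ for all $i$ and all $\tau\in G$; that is, $w$ takes a constant value on each orbit of the action of $G$ on $[n]$.

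It then remains to produce an explicit basis for this fixed subspace. I would observe that the orbit-indicator vectors $a_1,\dots,a_O$ are each constant on orbits (equal to $1$ on their own orbit and $0$ elsewhere), so they lie in $(\mathbb{R}^n)^G$; they are linearly independent because the orbits are disjoint, hence have disjoint supports; and they span the subspace because any orbit-constant $w$ equals $\sum_{i=1}^O w_i a_i$, where $w_i$ is the common value of $w$ on the $i$-th orbit. Substituting $w=\sum_{i=1}^O w_i a_i$ back into $L(x)=w^Tx$ yields $L(x)=\sum_{i=1}^O w_i a_i^Tx$, as claimed, and simultaneously shows that the space of linear invariant maps has dimension exactly $O$.

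The argument is essentially routine once invariance is rephrased as the fixed-point condition, so I do not expect a genuine obstacle. The only point requiring mild care is the correspondence between the matrix equation $P_\tau^T w = w$ and the coordinatewise statement that $w$ is constant on orbits, where one must track the permutation-matrix convention consistently. Should one prefer to avoid coordinate bookkeeping entirely, an alternative is to compute $\dim(\mathbb{R}^n)^G = \tfrac{1}{|G|}\sum_{\tau\in G}\operatorname{tr}(P_\tau)$ and identify this quantity, via Burnside's lemma, with the number of orbits $O$, thereby confirming that the $O$ orbit indicators already exhaust the fixed subspace.
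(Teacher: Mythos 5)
Your proof is correct and follows exactly the route the paper takes: the paper justifies this proposition in one line by noting that the coefficient vector $w$ must satisfy $w=\rho(g)w$ for all $g\in G$, and your argument is simply the fully worked-out version of that fixed-point condition, concluding that $w$ is constant on orbits and hence a linear combination of the orbit indicators.
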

This proposition follows directly from the fact that a weight vector $w$ has to obey the equation  $w=\rho(g)w$ for all group elements $g\in G$.
In our case, $G$ is a permutation group acting on the index space of $\mathcal{V}$, i.e., the indices of all the weights and biases of an input network. In order to apply \cref{lemma:inv}, we  need to find the orbits of this action on the indices of $\mathcal{V}$. Importantly, each such orbit is a subset of the indices that correspond to a specific weight or bias vector. These orbits are summarized in Table \ref{tab:orbits}. %Below, we list the resulting linear invariant maps that are associated with each weight or bias space. 
It follows that every linear invariant map defined on $\mathcal{V}$ can be written as a summation of the maps listed below: (1) a distinct learnable scalar times the sum of $W_m$ for $m\in[2, M-1]$ and the sum of $b_m$ for  $m\in[1, M-1]$; (2) a sum of columns of $W_1$, and the sum of rows of $W_M$ weighted by distinct learnable scalars for each such column and row (3) an inner product of $b_M$ with a learnable vector of size $d_M$.

% \paragraph{Extension to nonlinear aggregation mechanisms.} Similarly to previous works that considered equivariance and invariance to permutation action \cite{qi2017pointnet,zaheer2017deep,velivckovic2017graph,lee2019set} we can replace any summation term in our layers with either a non-linear aggregation function like $\texttt{max}$, or more complex attention mechanisms.

\section{Specification of All Affine Equivariant Layers Between Sub-Representations}\label{app:tables}
Tables \ref{tab:w2w}, \ref{tab:b2b}, \ref{tab:w2b}, \ref{tab:b2w} specify the implementation and dimensionality of all the types of equivariant maps between the sub-representations $\{\mathcal{W}_m,\mathcal{B}_\ell\}_{m,\ell\in[M]}$.The indices $i,j$ represent  the indices of the blocks. Dimensions in $LIN, L_{DS}, L_{Har}$ layers specify input and output dimensions. $L_{DS},L_{Har}$ are formally defined in Appendix \ref{sec:more_prev}.
Layers marked with an asterisk symbol (*)  have the same layer type at a different position in the block matrix.

\begin{figure*}[t!]
    \centering
    \includegraphics[width=\linewidth]{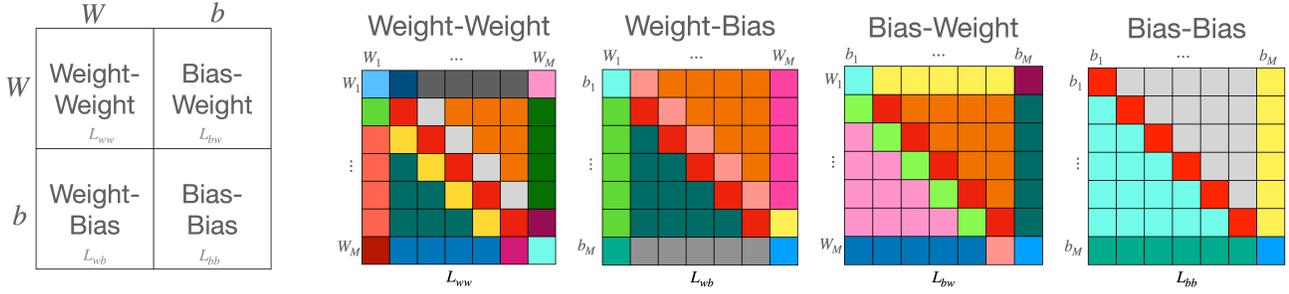}
    \caption{Block matrix structure for linear equivariant maps between weight spaces (same as in the main paper). Left: an equivariant layer for the weight space $\mathcal{V}$ to itself can be written as four blocks that map between the general weight space $\mathcal{W}$ and general bias space $\mathcal{B}$. Right: Each such block can be further written as a block matrix between specific weight and bias spaces $\mathcal{W}_m,\mathcal{B}_\ell$. Each color in each block matrix represents a different type of linear equivariant function between the sub-representations $\mathcal{W}_m,\mathcal{B}_\ell$. Repeating colors in different matrices are not related. See Tables \ref{tab:w2w}-\ref{tab:b2w} for a specification of the layers.
    }
    \label{fig:blocks_sub}
\end{figure*}

\vspace{-0pt}

% Table generated by Excel2LaTeX from sheet 'W2W'
\begin{table*} [h!]
  \centering
  \caption{Specification of layers in the weight-to-weight block.}
  \vskip 0.11in
  \tiny
  \resizebox{\textwidth}{!}{
    \begin{tabular}{llllllll}
    \hline
          & color &  & condition & sub condition & from space to space ($\mathcal{W}_j\rightarrow\mathcal{W}_i$) & implementation & \# params \\
          \hline
    \multirow{3}[0]{*}{Diagonal} & \cellcolor[rgb]{ .337,  .757,  1} & 1     & $j=i$ & $i=j=1$ & $d_1  \times  d_0 \rightarrow d_1  \times  d_0$ & $L_{\text{DS}}(d_0,d_0)$ & $2d_0^2$ \\
          & \cellcolor[rgb]{ .455,  .992,  .914} & 2     &       & $i=j=M$ & $d_{M}  \times  d_{M-1} \rightarrow d_{M}  \times  d_{M-1} $ & $L_{\text{DS}}(d_M,d_M)$ & $2d_M^2$ \\
          & \cellcolor[rgb]{ .933,  .137,  .047} & 3     &       & $1<i=j<M$ & $d_{i}  \times  d_{i-1} \rightarrow d_{i}  \times  d_{i-1} $ & $L_{\text{Har}}(d_i,d_{i-1})$ & $4$ \\
          \hline
    \multirow{3}[0]{*}{One above diagonal} & \cellcolor[rgb]{ .012,  .298,  .498} & 4     & $j=i+1$ & $i=1$ & $d_2  \times  d_1 \rightarrow d_1  \times  d_0$ & $POOL(d_2)  \rightarrow  L_{\text{DS}}(1,d0)$ & $2d_0$ \\
          & \cellcolor[rgb]{ .592,  .055,  .322} & 5     &       & $i=M-1$ & $d_{M}  \times  d_{M-1} \rightarrow d_{M-1}  \times  d_{M-2}$ & $L_{\text{DS}}(d_{M},1)  \rightarrow  BC(d_{M-2})$ & $2d_{M}$ \\
          & \cellcolor[rgb]{ .835,  .835,  .839} & 6    &       & $1<i<M-1$ & $d_{i+1}  \times  d_{i} \rightarrow d_{i}  \times  d_{i-1}$ & $POOL(d_{i+1})  \rightarrow  L_{\text{DS}}(1,1)  \rightarrow  BC(d_{i-1})$ & $2$ \\
          \hline
    \multirow{3}[0]{*}{One below diagonal} & \cellcolor[rgb]{ .384,  .851,  .212} & 7     & $j=i-1$ & $i=1$ & $d_1  \times  d_0 \rightarrow d_2  \times  d_1$ & $L_{\text{DS}}(d_0,1) \rightarrow BC(d_2)$ & $2d_0$ \\
          & \cellcolor[rgb]{ .831,  .09,  .467} & 8     &       & $i=M-1$ & $d_{M-1}  \times  d_{M-2} \rightarrow d_{M}  \times  d_{M-1}$ & $POOL(d_{m-2})  \rightarrow  L_{\text{DS}}(1,d_M)$ &  $2d_M$ \\
          & \cellcolor[rgb]{ 1,  .855,  .192} & 9    &       & $1<i<M-1$ & $d_{i-1}  \times  d_{i-2} \rightarrow d_{i}  \times  d_{i-1}$ & $POOL(d_{i-2})  \rightarrow  L_{\text{DS}}(1,1)  \rightarrow  BC(d_{i})$ & $2$ \\
          \hline
    \multirow{4}[0]{*}{Upper triangular} & \cellcolor[rgb]{ .369,  .369,  .369} & 10    & $j>i+1$ & $i=1  \text{ and }  j<M$ & $d_{j}  \times  d_{j-1} \rightarrow  d_1  \times  d_0$ & $POOL(d_j,d_{j-1}) \rightarrow  LIN(1,d_0) \rightarrow BC(d_1)$ & $d_0$ \\
          & \cellcolor[rgb]{ 1.,  .6,  .8} & 11    &       & $i=1  \text{ and }  j=M$ & $d_{M}  \times  d_{M-1} \rightarrow  d_1  \times  d_0$ & $POOL(d_m-1)  \rightarrow  LIN(d_M,d_0)  \rightarrow  BC(d_1)$ & $d_0d_M$ \\
          & \cellcolor[rgb]{ 0,  .443,  .004} & 12    &       & $i>1  \text{ and }  j=M$ & $d_{M}  \times  d_{M-1} \rightarrow  d_{i}  \times  d_{i-1}$ & $POOL(d_{M-1}) \rightarrow  LIN(d_M,1) \rightarrow BC(d_{i},d_{i-1})$ & $d_M$ \\
          & \cellcolor[rgb]{ .953,  .443,  .016}\textcolor[rgb]{ .953,  .443,  .016}{} & 13*  &       & $i>1  \text{ and }  j<M$ & $d_{j}  \times  d_{j-1} \rightarrow  d_{i}  \times  d_{i-1}$ & $POOL(d_j,d_{j-1}) \rightarrow  LIN(1,1) \rightarrow BC(d_i,d_{i-1})$ & $1$ \\
          \hline
    \multirow{4}[0]{*}{Lower triangular} & \cellcolor[rgb]{ 1,  .392,  .306} & 14    & $j<i-1$ & $j=1  \text{ and }  i<M$ & $d_1  \times  d_0 \rightarrow d_{i}  \times  d_{i-1}$ & $POOL(d1)\rightarrow LIN(d_0,1) \rightarrow BC(d_{i-1},d_i)$ & $d_0$ \\
          & \cellcolor[rgb]{ .714,  .082,  0} & 15    &       & $j=1  \text{ and }  i=M$ & $d_1  \times  d_0 \rightarrow d_{M}  \times  d_{M-1}$ & $POOL(d_1)  \rightarrow  LIN(d_0,d_M)  \rightarrow  BC(d_{M-1})$ & $d_0d_M$ \\
          & \cellcolor[rgb]{ 0,  .463,  .733} & 16    &       & $j>1  \text{ and }  i=M$ & $d_{j}  \times  d_{j-1} \rightarrow d_{M}  \times  d_{M-1}$ & $POOL(d_{j},d_{j-1})\rightarrow LIN(1,d_M)\rightarrow BC(d_{M-1} )$ & $d_M$ \\
          & \cellcolor[rgb]{ 0,  .424,  .396} & 17*  &       & $j>1  \text{ and }  i<M$ & $d_{j}  \times  d_{j-1} \rightarrow  d_{i}  \times  d_{i-1}$ & $POOL(d_j,d_{j-1}) \rightarrow  LIN(1,1) \rightarrow BC(d_{i},d_{i-1})$ & $1$ \\
          \hline
    \end{tabular}%
    }
\label{tab:w2w}
\end{table*}%

\vspace{-0pt}

% Table generated by Excel2LaTeX from sheet 'b2b'
\begin{table*}[h!]
  \centering
  \tiny
  \caption{Specification of layers in the bias-to-bias block.}
 \vskip 0.11in
    \begin{tabular}{llllllll}
              \hline
          & color &       & condition & sub condition & from space to space ($\mathcal{B}_j\rightarrow\mathcal{B}_i$) & implementation &\# params \\
          \hline

    \multirow{2}[0]{*}{Diagonal} & \cellcolor[rgb]{ 1,  .149,  0} & 1     & $i=j$ & $i=j<M$ & $d_i \rightarrow d_i$ & $L_{\text{DS}}(1,1)$ & $2$ \\
          & \cellcolor[rgb]{ .024,  .631,  1} & 2     &       & $i=j=M$ & $d_M\rightarrow d_M$ & $LIN(d_M,d_M)$ & $d_M^2$ \\
          \hline
    \multirow{2}[0]{*}{Upper triangular} & \cellcolor[rgb]{ 1,  .949,  .337} & 3     & $i<j$ & $j=M$ & $d_j\rightarrow d_i$ & $LIN(d_M,1)\rightarrow BC(d_i)$ & $d_M$ \\
          & \cellcolor[rgb]{ .835,  .835,  .835} & 4*    &       & $j<M$ & $d_j\rightarrow d_i$ & $POOL(d_j)\rightarrow LIN(1,1)\rightarrow BC(d_i)$ & $1$ \\
                    \hline
    \multirow{2}[0]{*}{Lower triangular} & \cellcolor[rgb]{ .024,  .671,  .561} & 5     & $i>j$ & $i=M$ & $d_j\rightarrow d_i$ & $POOL(d_j)\rightarrow LIN(1,d_M)$ & $d_M$ \\
          & \cellcolor[rgb]{ .447,  .996,  .918} & 6*     &       & $i<M$ & $d_j\rightarrow d_i$ & $POOL(d_j)\rightarrow LIN(1,1) \rightarrow BC(d_i)$ & $1$ \\
                    \hline

    \end{tabular}%
\label{tab:b2b}
\end{table*}%~
\vspace{-20pt}
% Table generated by Excel2LaTeX from sheet 'W2b'
\begin{table*}[h!]
  \centering
  \tiny
  \caption{Specification of layers in the weight-to-bias block.}
\vskip 0.11in
\resizebox{\textwidth}{!}{
    \begin{tabular}{llllllll}
              \hline

          & color &       & condition & sub condition & from space to space ($\mathcal{W}_j\rightarrow\mathcal{B}_i$) & implementation & \# params \\
              \hline
    \multirow{3}[0]{*}{Diagonal} & \cellcolor[rgb]{ .443,  .992,  .914} & 1     & $i=j$ & $i=j=1$ & $d_1  \times  d_0 \rightarrow  d_1$ & $L_{\text{DS}}(d_0,1) $ & $2d_0$ \\
          & \cellcolor[rgb]{ .937,  .129,  .047} & 2    &       & $1<i=j<M$ & $d_{i}  \times  d_{i-1} \rightarrow  d_i$ & $POOL(d_{i-1})\rightarrow  L_{\text{DS}}(1,1)$ & $2$ \\
          & \cellcolor[rgb]{ .024,  .631,  1} & 3     &       & $i=j=M$ & $d_{M}  \times  d_{M-1} \rightarrow  d_M$ & $POOL(d_{M-1})\rightarrow  LIN(d_M,d_M)$ & $d_M^2$ \\
                    \hline
    \multirow{2}[0]{*}{One above diagonal} & \cellcolor[rgb]{ 1,  .588,  .549} & 4    & $j=i+1$ & $j<M$ & $d_{j}  \times  d_{j-1} \rightarrow  d_{j-1}$ & $POOL(d_{j})\rightarrow  L_{\text{DS}}(1,1)$ & $2$ \\
          & \cellcolor[rgb]{ 1,  .945,  .337} & 5     &       & $j=M$ & $d_{M}  \times  d_{M-1} \rightarrow  d_{M-1}$ &  $L_{\text{DS}}(d_{M},1)$ & $2d_M$ \\
              \hline
    \multirow{2}[0]{*}{Upper triangular} & \cellcolor[rgb]{ .953,  .447,  0} & 6*   & $j>i+1$ & $j<M$ & $d_{j}  \times  d_{j-1} \rightarrow  d_i$ & $POOL(d_{j-1},d_j)\rightarrow LIN(1,1)\rightarrow BC(d_1)$ & $1$ \\
          & \cellcolor[rgb]{ 1,  .259,  .627} & 7     &       &  $j=M$ & $d_{M}  \times  d_{M-1} \rightarrow  d_i$ & $POOL(d_{M-1})\rightarrow  LIN(d_M,1)\rightarrow BC(d_i)$ & $d_M$ \\
              \hline

    \multirow{2}[0]{*}{Lower triangular} & \cellcolor[rgb]{ .376,  .851,  .212} & 8     & $j=i-1$ & $j=1 \text{ and }  i<M$ & $d_1  \times  d_0 \rightarrow  d_i$ & $POOL(d_1)+LIN(d_0,1)\rightarrow BC(d_i)$ & $d_0$ \\
          & \cellcolor[rgb]{ .024,  .671,  .561} & 9     & $j<i-1$ & $j=1 \text{ and } i=M$ & $d_1  \times  d_0 \rightarrow  d_M$ & $POOL(d_1)+LIN(d_0,d_M)$ & $d_0d_M$ \\
          & \cellcolor[rgb]{ .573,  .573,  .569} & 10    &       & $j>1 \text{ and }i=M$ & $d_{j}  \times  d_{j-1} \rightarrow  d_M$ & $POOL(d_{j-1},d_j)\rightarrow LIN(1,d_M)$ & $d_M$ \\
          & \cellcolor[rgb]{ 0,  .424,  .396} & 11*  &       & $j>1 \text{ and }i<M$ & $d_{j}  \times  d_{j-1} \rightarrow  d_i$ & $POOL(d_{j-1},d_j)\rightarrow LIN(1,1)\rightarrow BC(d_i)$ & $1$ \\
          \hline

    \end{tabular}%
    }
\label{tab:w2b}
\end{table*}%

\vspace{-10pt}

% Table generated by Excel2LaTeX from sheet 'b2W'
\begin{table*}[h!]
  \centering
  \tiny
  
  \caption{Specification of layers in the bias-to-weight block.}
  \vskip 0.11in
  \resizebox{\textwidth}{!}{
    \begin{tabular}{llllllll}
              \hline

          & color &       & condition & sub condition & from space to space $\mathcal{B}_j\rightarrow\mathcal{W}_i$ & implementation & \# params \\
          \hline

    \multirow{3}[0]{*}{Diagonal} & \cellcolor[rgb]{ .451,  .988,  .918} & 1     & $i=j$ & $i=j=1$ & $d_1 \rightarrow  d_1  \times  d_0$ & $L_{\text{DS}}(1,d_0)$ & $2d_0$ \\
          & \cellcolor[rgb]{ .933,  .133,  .051} & 2    &       & $1<i=j<M$ & $d_i \rightarrow  d_{i}  \times  d_{i-1}$ & $L_{\text{DS}}(1,1)\rightarrow BC(d_{i-1})$ & $2$ \\
          & \cellcolor[rgb]{ .22,  .631,  .996} & 3     &       & $i=j=M$ & $d_M \rightarrow  d_{M}  \times  d_{M-1}$ & $LIN(d_M,d_M)\rightarrow BC(d_{M-1})$ & $d_M^2$ \\
          \hline

    \multirow{2}[0]{*}{One below diagonal} & \cellcolor[rgb]{ .533,  .984,  .31} & 4    & $i=j+1$ & $i<M$ & $d_i \rightarrow  d_{i+1}  \times  d_{i}$ & $L_{\text{DS}}(1,1)\rightarrow BC(d_{i+1})$ & $2$ \\
          & \cellcolor[rgb]{ .973,  .588,  .549} & 5     &       & $i=M$ & $d_{M-1} \rightarrow  d_{M}  \times  d_{M-1}$ & $L_{\text{DS}}(1,d_M)$ & $2d_M$ \\
          \hline

    \multirow{2}[0]{*}{Lower triangular } & \cellcolor[rgb]{ .973,  .584,  .792} & 6*   & $j<i-1$ & $i<M$ & $d_j\rightarrow d_{i}  \times  d_{i-1}$ & $POOL(d_j)\rightarrow LIN(1,1)\rightarrow BC(d_{i-1}, d_i)$ & $1$ \\
          & \cellcolor[rgb]{ .153,  .463,  .725} & 7     &       & $i=M$ & $d_j\rightarrow d_{M}  \times  d_{M-1}$ & $POOL(d_j)\rightarrow LIN(1,d_M)\rightarrow BC(d_{M-1})$ & $d_M$ \\
          \hline

    \multirow{4}[0]{*}{Upper triangular} & \cellcolor[rgb]{ .992,  .941,  .333} & 8     & $j>i$ & $i=1 \text{ and } j<M$ & $d_j\rightarrow d_{1}  \times  d_0$ & $POOL(d_j)\rightarrow LIN(1,d_0)\rightarrow BC(d_1)$ & $d_0$ \\
          & \cellcolor[rgb]{ .588,  .051,  .322} & 9     &       & $i=1 \text{ and } j=M$ & $d_M\rightarrow d_{1}  \times  d_0$ & $LIN(d_M,d_0)\rightarrow BC(d_1)$ & $d_Md_0$ \\
          & \cellcolor[rgb]{ .122,  .424,  .396} & 10    &       & $i>1 \text{ and } j=M$ & $d_{M} \rightarrow  d_{i}  \times  d_{i-1}$ & $LIN(d_M,1)\rightarrow BC(d_{i-1},d_i)$ & $d_M$ \\
          & \cellcolor[rgb]{ .949,  .447,  0} & 11*  &       & $i>1\text{ and }  j<M$ & $d_{j} \rightarrow  d_{i}  \times  d_{i-1}$ & $POOL(d_j)\rightarrow LIN(1,1)\rightarrow BC(d_{i-1}, d_i)$ & $1$ \\
          \hline

    \end{tabular}%
    }
\label{tab:b2w}
\end{table*}%

\newpage
% Table generated by Excel2LaTeX from sheet 'invariant maps'
\begin{table*}[t]
  \centering
  \scriptsize
  \caption{Orbits for the action of $G$ on the indices of $\mathcal{V}$. These orbits define linear invariant layers and equivariant bias layers.}
  \vskip 0.11in
    \begin{tabular}{llll}
    \hline
    subspace & dimensionality & orbits & number of orbits \\
        \hline

    $W_1$ & $d_1 \times d_0$ & $O^{W_1}_j=\{ (i,j) \mid i\in[d_1]\}$ & $d_0$ \\
    $W_m,~ 1<m<M$ & $d_{m} \times d_{m-1}$ & $O^{W_m}_1=\{ (i,j)\mid i\in [d_{m}],j\in[d_{m-1}]\}$ & $1$ \\
    $W_M$ & $d_{M} \times d_{M-1}$ & $O^{W_M}_i=\{ (i,j) \mid j\in[d_{M-1}]\}$ & $d_M$ \\
    $b_i, 1<m<M$ & $d_m$   & $O^{b_m}_1=[d_i]$ & $1$ \\
    $b_M$ & $d_M$   & $O^{b_M}_i=\{i\}$ & $d_M$ \\
        \hline

    \end{tabular}%
  \label{tab:orbits}%
\end{table*}%
\section{Linear Maps Between Specific Weight and Bias Spaces}\label{appx:subrep_equi_maps}
\begin{proof}[Proof of Theorem \ref{thm:char_equi}]
As mentioned in the main text, by using proposition \ref{proposition:equi_direct_sum}, all we have to do in order to find a basis for the space of $G$-equivariant maps $L:\mathcal{V}\rightarrow \mathcal{V}$ is to find bases for linear $G$-equivariant maps between specific weight and bias spaces $\{\mathcal{B}_m,\mathcal{W}_\ell \}_{m,\ell\in[M]}$. To that end, we first use the rules specified in Section \ref{sec:charcterize} to create a list of layer types and their implementation. Then, one has to show that the layers in Tables \ref{tab:w2w}-\ref{tab:b2b} are linear, $G$-equivariant and that their parameters are linearly independent. This is straightforward. For example, the mappings between subspaces (e.g., $\mathcal{W}_j\to \mathcal{B}_i$) are clearly equivariant, as the composition of $G$-equivariant maps is $G$-equivariant. Finally and most importantly, we show that the number of parameters in the layers matches the dimension of the space of $G$-equivariant maps between the sub-representations, which can be calculated using Lemma \ref{lemma:dim_equi}.
Following are some general comments before we go over all layer types:
\begin{itemize}
    \item We use the fact that $\frac{1}{|S_n|}\sum_{\sigma\in S_n} tr(P(\sigma))^k=bell(k)$ \cite{maron2018invariant} (for the case $k=1,2$) where for a permutation $\sigma\in S_d$, $P(\sigma)\in \mathbb{R}^{d\times d}$ is its permutation representation. $bell(k)$ is the number of possible partitions of a set with $k$ elements.
    \item An index on which $G$ acts by permutation is called a \emph{set} index (or dimension). Other indices are called \emph{free} indices.
    \item Calculations of the dimensions of the equivariant maps spaces are presented below for the most complex weight-to-weight case. We omit the other cases (e.g., weight-to-bias) since they are very similar and can be obtained using the same methodology. \item Generally, shared set dimensions add a multiplicative factor of $bell(2)=2$ to the dimension of the space of equivariant layers, and free dimensions add a multiplicative factor equal to their dimensionality. Unsahred set dimensions add a multiplicative factor of $bell(1)=1$ so they do not affect the dimension of the equivariant layer space.
    \item In all cases below, $G$ is defined as in Equation \ref{eq:symmetry}, but the representations $\rho,\rho'$ of the input and output spaces, respectively,  differ according to the involved sub-representations
\end{itemize}

\textbf{$G$-equivariant linear functions between weight matrices.} A map between one weight matrix to another weight matrix is of the form $L:\mathbb{R}^{d\times d'}\rightarrow \mathbb{R}^{s \times s'}$. We will split into cases that cover all types of maps as appears in Table \ref{tab:w2w}, and compute the dimensions of the spaces of the equivariant layer below:
\begin{enumerate}
    \item (Two shared set indices). In that case, the layer is 4-dimensional ($bell(2)^2=4$) as we use the linear layers and dimension counting from \cite{hartford2018deep}.
    
    \item (Two shared indices, one set and one free) Assume $s=d$ are set indices, $s'=d'$ are free indices $\rho(g)=\rho'(g)=I_{d'}\otimes P(\sigma)$ for $\sigma\in S_d$ and 
    $$\frac{1}{|G|}\sum_{g\in G} tr(\rho(g)) \cdot tr(\rho'(g))=\frac{1}{|G|}\sum_{g\in G} tr(P(\sigma ))^2d'^2=bell(2)d'^2=2d'^2 $$
    We note that the summation over $G$ includes groups in the direct product that are trivially represented. This extra summation cancels the corresponding terms in $\frac{1}{|G|}$
    
    \item (One shared index, two set indices mapped to one shared set index, and one unshared free index). 
    Assume $s=d$ are shared set indices, $d'$ is another set index and $s'$ is free. $\rho(g)=P(\sigma)\otimes P(\tau)$,   $\rho_2(g)=P(\sigma)\otimes I_{s'}$ for $\sigma\in S_d,\tau\in S_{d'}$.
    $$\frac{1}{|G|}\sum_{g\in G} tr(\rho(g)) \cdot tr(\rho'(g))=\frac{1}{|G|}\sum_{g\in G} tr(P(\sigma))^2tr(P(\tau))tr(I_{s'})=bell(2)s'=2s' $$

    \item (One shared set index and unshared free index mapped to one shared set index, and one unshared set index). 
    Assume $s=d$ are shared set indices, $s'$ is another set index and $d'$ is free. $\rho(g)=P(\sigma)\otimes I_{d'}$,   $\rho'(g)=P(\sigma)\otimes P(\tau)$ for $\sigma\in S_d,\tau\in S_{s'}$.
    $$\frac{1}{|G|}\sum_{g\in G} tr(\rho(g)) \cdot tr(\rho'(g))=\frac{1}{|G|}\sum_{g\in G} tr(P(\sigma))^2tr(P(\tau))tr(I_{d'})=bell(2)s'=2d' $$

    \item (One shared index, two set indices mapped to one shared set index, and one unshared set index)
    Assume $s=d$ are set indices, $d',s'$ are other set indices 
    $\rho(g)=P(\sigma)\otimes P(\tau)$  .  $\rho'(g)=P(\sigma)\otimes P(\pi)$ for $\sigma\in S_d,\tau\in S_{d'},\pi\in s_{s'}$.
    $$\frac{1}{|G|}\sum_{g\in G} tr(\rho(g)) \cdot tr(\rho'(g))=\frac{1}{|G|}\sum_{g\in G} tr(P(\sigma))^2tr(P(\tau))tr(P(\pi))=bell(2)bell(1)^2=2 $$
    
    \item (One shared set index and unshared free index mapped to one shared set index and one unshared free index)\footnote{special case for $M=2$, not shown in Table \ref{tab:w2w}.}.
    Assume $s=d$ are set indices, $d',s'$ are other free indices 
    $\rho(g)=P(\sigma)\otimes I_{d'}$.  $\rho'(g)=P(\sigma)\otimes I_{s'}$ for $\sigma\in S_d$.
    $$\frac{1}{|G|}\sum_{g\in G} tr(\rho(g)) \cdot tr(\rho'(g))=\frac{1}{|G|}\sum_{g\in G} tr(P(\sigma))^2tr(I_{s'})tr(I_{d'}))=bell(2)bell(1)^2=2d's'$$
    
    \item (No shared indices, two set indices to one set and one free).
    Assume $d, d',s$ are unshared set indices, and $s'$ is a free index. 
    $\rho(g)=P(\sigma)\otimes P(\tau)$.  $\rho'(g)=P(\pi)\otimes I_{s'}$ for $\sigma\in S_d,\tau\in S_{d'},\pi\in S_s$.
    $$\frac{1}{|G|}\sum_{g\in G} tr(\rho(g)) \cdot tr(\rho'(g))=\frac{1}{|G|}\sum_{g\in G} tr(P(\sigma))tr(P(\tau))tr(P(\pi))tr(I_{s'})=bell(1)^3s'=s' $$
    
    \item (No shared indices, one set and one free indices map to other set and free indices). Assume $d, s$ are unshared set indices, and $d',s'$ are free index. 
    $\rho(g)=P(\sigma)\otimes I_{d'}$.  $\rho'(g)=P(\tau)\otimes I_{s'}$ for $\sigma\in S_d,\tau\in S_{d'}$.
        $$\frac{1}{|G|}\sum_{g\in G} tr(\rho(g)) \cdot tr(\rho'(g))=\frac{1}{|G|}\sum_{g\in G} tr(P(\sigma))tr(P(\tau))tr(P(I_{d'}))tr(I_{s'})=bell(1)^2d's'=d's' $$

    \item (No shared indices, one set one free map to two set indices). The calculation is the same as (7).
    
    \item (No shared indices, two sets map to two sets).  
    Assume $d,d',s,s'$ are set indices 
    $\rho(g)=P(\sigma)\otimes P(\tau)$.  $\rho'(g)=P(\omega)\otimes P(\pi)$ for $\sigma\in S_d,\tau\in S_{d'},\pi\in S_{s}, \omega \in S_{s'}$.
    $$\frac{1}{|G|}\sum_{g\in G} tr(\rho(g)) \cdot tr(\rho'(g))=\frac{1}{|G|}\sum_{g\in G} tr(P(\sigma))tr(P(\tau))tr(P(\pi))tr(P(\omega))=bell(1)^4=1 $$
\end{enumerate}

\iffalse
\subsection{Biases to Biases}
A map of this type is of the form $L:\mathbb{R}^{d}\rightarrow \mathbb{R}^{s}$. We will split into cases:
\begin{enumerate}
    \item  shared set dim \cite{zaheer2017deep}
    \item shared free dim standard linear layer
    \item  (no shared index) set to free
    \item  (no shared index) free to set
    \item (no shared index) set to set
    
\end{enumerate}
\fi
\end{proof}

\section{More Proofs for \cref{sec:charcterize}}\label{appx:proofs}

\begin{proof}[Proof of Proposition \ref{proposition:equi_direct_sum}]
The elements in $B$ are clearly linear as they are represented as matrices. It is also clear that they are linearly independent: equating a linear sum of these basis elements to zero implies that each block is zero since there are no overlaps between blocks. To end the argument we use the assumption that $B_{m\ell}$ are bases. Equivariance is also straightforward:
take a vector $v=\oplus v_m, v_m\in \mathcal{V}_m$ and a zero padded element $L^P\in B^P_{k\ell}$ that corresponds to an element $L\in B_{k\ell}$, then  $L^P\rho(g)v$ a zero-padded version of $L\rho_k(g)v_{k}$. On the other hand $\rho'(g)L^Pv$ is a zero-padded version of $\rho'_{\ell}(G)Lv_k$ and we get equality from the assumption that $L$ is equivariant.  

We now turn to prove that $B$ is  a basis. We do that by showing that the number of elements $B$ is equal to the dimension of the space of linear maps between $(\mathcal{V},\rho)$ and $(\mathcal{V}',\rho')$. We start by calculating the size of $B$. Clearly, $|B|=\sum_{m\ell}|B_{m\ell}|=\sum_{m\ell}dim(E(\rho_m,\rho_\ell))$ where $E(\rho_m,\rho_m')$ is the space of linear equivariant maps from $\rho_m$ to $\rho_{m'}$. 
On the other hand, using Lemma \ref{lemma:dim_equi} we get:

\begin{align}
        dim(E(\rho,\rho'))  &= \frac{1}{|G|}\sum_{g\in G}tr(\rho(g))\cdot tr(\rho'(g)) \\
       &= \frac{1}{|G|}\sum_{g\in G} \left(\sum_m tr(\rho_m(g))\right) \cdot \left(\sum_{\ell} tr(\rho'_{\ell}(g))\right)\\
       &=  \frac{1}{|G|}\sum_{g\in G} \sum_{m\ell} tr(\rho_m(g))\cdot tr(\rho'_{\ell}(g))\\
       &=   \sum_{m\ell} \left( \frac{1}{|G|}\sum_{g\in G}tr(\rho_m(g))\cdot tr(\rho'_{\ell}(g)) \right)
       \\ &=\sum_{m\ell} dim(E(\rho_m,\rho'_{\ell}))
                                \end{align}
Where we used the fact that the trace of a direct sum representation is the sum of the traces of the constituent sub-representations, and Lemma \ref{lemma:dim_equi} again in the final transition.

$$ $$
\end{proof}

\begin{lemma}[Dimension of space of equivariant functions between representations]\label{lemma:dim_equi}
Let $G$ be a permutation group, and let $(\mathcal{V},\rho)$ and $(\mathcal{V}',\rho')$ be orthogonal representations of $G$, then the dimension of the space of equivariant maps from $(\mathcal{V},\rho)$ and $(\mathcal{V}',\rho')$ is $\frac{1}{|G|}\sum_{g\in G} tr(\rho(g)) \cdot tr(\rho'(g))$
    
\end{lemma}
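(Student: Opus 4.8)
The plan is to recognize the space of equivariant maps as the invariant subspace of a naturally constructed representation, and then apply the standard averaging (Reynolds) projection argument. First I would endow the space of \emph{all} linear maps $\mathrm{Hom}(\mathcal{V},\mathcal{V}')$ with the $G$-representation $\pi$ defined by $\pi(g)L = \rho'(g)\,L\,\rho(g)^{-1}$. A map $L$ satisfies $\pi(g)L = L$ for all $g$ precisely when $\rho'(g)L = L\rho(g)$, i.e.\ exactly when $L$ is equivariant. Thus the quantity we want is $\dim$ of the fixed-point subspace $\mathrm{Hom}(\mathcal{V},\mathcal{V}')^G$ of $\pi$.

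Next I would compute the dimension of this fixed-point subspace via the averaging operator $T := \frac{1}{|G|}\sum_{g\in G}\pi(g)$. Re-indexing the group sum gives $\pi(h)T = T$ for every $h$, from which $T^2 = T$ (idempotency) and $\pi(h)v=v$ for every $v$ in the image of $T$; conversely any invariant $v$ satisfies $Tv=v$, so the image of $T$ is exactly $\mathrm{Hom}(\mathcal{V},\mathcal{V}')^G$. Hence $T$ is the projection onto the invariants, and since the trace of an idempotent equals the dimension of its image,
\[
\dim\!\left(\mathrm{Hom}(\mathcal{V},\mathcal{V}')^G\right) = tr(T) = \frac{1}{|G|}\sum_{g\in G} tr(\pi(g)).
\]

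It remains to evaluate $tr(\pi(g))$. Writing the conjugation map in coordinates through vectorization, $\mathrm{vec}(\rho'(g)L\rho(g)^{-1}) = \big((\rho(g)^{-1})^T \otimes \rho'(g)\big)\,\mathrm{vec}(L)$, so $\pi(g)$ is represented by the Kronecker product $(\rho(g)^{-1})^T \otimes \rho'(g)$, whose trace factorizes as $tr(\rho(g)^{-1})\cdot tr(\rho'(g))$. The orthogonality hypothesis on $\rho$ enters exactly here: $\rho(g)^{-1} = \rho(g)^{T}$, hence $tr(\rho(g)^{-1}) = tr(\rho(g))$, yielding $tr(\pi(g)) = tr(\rho(g))\, tr(\rho'(g))$ and therefore the claimed formula. (In our application $\rho,\rho'$ are permutation representations, for which this is automatic since permutation matrices are orthogonal.)

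I expect the only genuinely delicate point to be verifying that $T$ is the projection \emph{onto} the invariants — both its idempotency and the fact that its image equals the fixed subspace rather than merely being contained in it; everything else is bookkeeping. The one assumption worth flagging is orthogonality, whose sole purpose is to convert $tr(\rho(g)^{-1})$ into $tr(\rho(g))$ and remove the inverse; without it the formula would instead read $\frac{1}{|G|}\sum_{g} tr(\rho(g^{-1}))\,tr(\rho'(g))$.
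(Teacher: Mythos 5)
Your proposal is correct and follows essentially the same route as the paper's proof: both identify the equivariant maps with the fixed subspace of the conjugation action on $\mathrm{Hom}(\mathcal{V},\mathcal{V}')$, compute its dimension as the trace of the group-averaging projection, and factorize the trace of the resulting Kronecker product, using orthogonality to eliminate the inverse. Your write-up is in fact a little more careful than the paper's (you verify idempotency and that the image of the averaging operator is exactly the invariant subspace, and you track the transposes in the vectorization identity explicitly), but the underlying argument is the same.
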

\begin{proof}
We generalize similar propositions from \cite{maron2018invariant,maron2020learning}. Every equivariant map $L$ is in the null space of the following set of linear equations: $L\rho(g)=\rho'(g)L$. Since $\rho'(g)$ is orthogonal we can write $\rho'(g)^TL\rho(g)=L$
which in turn can be written as $\rho(g)\otimes\rho' (g) vec(L)=vec(L)$ for all $g\in G$. The last equations define the space of linear functions $L$ that are fixed by multiplication with $\rho(g)\otimes\rho'(g)$. A projection onto this space is given by $\pi = \frac{1}{|G|}\sum_{g\in G}\rho(g)\otimes\rho'(g)$, and its dimension is given by the trace of the projection, namely $tr(\pi)=\frac{1}{|G|}\sum_{g\in G} tr(\rho(g))\cdot tr(\rho'(g))$ using the multiplicative law of the trace operator and Kronecker products.  
\end{proof}

\section{Proofs of \cref{prop:expressive}} \label{appx:expressive}

\begin{proof}[Proof of Proposition~\ref{prop:expressive}] Given input $v\in \mathcal{V}$, representing the weights of an MLP $f$ with a fixed number of layers and feature dimensions, and $x\in \mathbb{R}^{d_0}$ which is an input to this MLP, we wish to design a network $F$, composed of our affine equivariant layers, such that $F([x,v])$ approximates $f(x;v)$ in uniform convergence sense ($\| \cdot\|_{\infty}$). We note that $F$ is $G$-invariant. We assume the input to our network is both $v$ and $x$ and that these inputs are in some compact domain. Furthermore, we assume that the non-linearity function, $\sigma$ is a ReLU function for both $f$ and $F$ for simplicity, although this is not needed in general. 

Throughout the proof we will use the following basic operations: (1) \textit{Identity transformation}: Directly supported by our framework since it can be implemented using pointwise operations supported by our networks. (2) \textit{Summation over $d_i$ dimensions}: Directly supported by our framework since it is a linear equivariant operation, (3) \textit{Broadcasting over $d_i$ dimensions}: Directly supported by our framework, (4) \textit{feature-wise Hadamard product}: this is not directly supported in our framework. However, since Hadamard product is a pointwise continuous operation, we can implement an approximating MLP (in uniform convergence sense) on a compact domain using the universal approximation theorem \cite{hornik1991approximation}, (4) \textit{Non-linearity}: From our assumption, we can directly simulate the input networks non-linearities (otherwise, given another non-polynomial continuous activation,  we can use the universal approximation theorem to uniformly  approximate it).

Let $\hat{f}$ denote our current approximation for $f(x;v)$. 
%We propagate $\hat{F}$ through the network ...
To form the input to the equivariant network, we concatenate a broadcasted version of $x$, $X\in \mathbb{R}^{d_1\times d_0}$ to $W_1$ to form a tensor in $\mathbb{R}^{d_0\times d_1 \times 2}$. 
Our plan is to define a sequence of equivariant layers that will mimic a propagation of $x$ through the MLP $f(\cdot; v)$. Our current approximation $\hat{f}$ will be stored in an extra channel dimension.

We first wish to simulate $W_1 \odot X$ where $\odot$ denotes the Hadamard product. Let $f_{m_1}$ denote a $K_1$-layers MLP which approximate $f_{m_1}(x, y)\approx x \cdot y$ sufficiently well. We use $K_1$ consecutive mapping $\mathcal{W}_1\to \mathcal{W}_1$ to simulate $f_{m_1}$. Concretely, the mapping $\mathcal{W}_1\to \mathcal{W}_1$ is a DeepSets layer, $L(Z)_i=L_1(z_i)+L_2(\sum_{j\neq i} z_j)$. We set $L_1$ to the corresponding linear transformation from $f_{m_1}$ and $L_2=0$. We now have $\hat{f}\approx W_1\odot X$ at the location corresponding to $\mathcal{W}_1$. Next we use the layer $\mathcal{W}_1\to \mathcal{B}_1$  perform summation over the $d_0$ dimension to obtain $\hat{f}\approx W_1x$ as a second feature channel at location $\mathcal{B}_1$. Note that $\mathcal{W}_1\to \mathcal{B}_1$ is again a DeepSets layer that supports summation. We now have $[b_1, \hat{f}]\in \mathbb{R}^{2\times d_1}$ at location $\mathcal{B}_1$. Next we use the DeepSets mapping $\mathcal{B}_1\to\mathcal{B}_1$ to perform summation over the feature dimension to obtain $\hat{f}\approx W_1x+b_1$. Finally we apply non-linearity using the activation function of the equivariant network to obtain $\hat{f}\approx \sigma(W_1x+b_1)$.

We proceed in a similar manner. First broadcast $\hat{f}$ to a second feature dimension at location $\mathcal{W}_2$ using the DeepSets + Broadcasting layer $\mathcal{B}_1\to\mathcal{W}_2$. The mapping $\mathcal{W}_2\to\mathcal{W}_2$ is a $L_{\text{Har}}$ layer, so we can use a similar approach for simulating an MLP to approximate $\hat{f}\approx W_2\sigma(W_1x+b)$. Following the same procedure described above we can simulate the first $M-1$ layers of $F$, obtaining $\hat{f}\approx x_{M-1}$ at the position corresponds to $\mathcal{B}_{M-1}$.

Next, we use the DeepSets mapping $\mathcal{B}_{M-1}\to \mathcal{W}_{M}$ to broadcast $\hat{f}$ to a second feature dimension where $W_M$ is in the first feature dimension. Since $\mathcal{W}_{M}\to \mathcal{W}_{M}$ is a DeepSets layer, we can simulate the Hadamard product of the two feature dimensions to obtain $\hat{f}\approx W_M\odot x_{M-1}$. Next we use $\mathcal{W}_{M}\to \mathcal{B}_{M}$  to perform summation over $d_{M-1}$ and map it to a second feature dimension at location $\mathcal{B}_M$, with $B_M$ at the first dimension. Finally, we use the linear mapping $\mathcal{B}_{M}\to \mathcal{B}_{M}$ to sum the two feature dimensions to obtain $\hat{f}\approx f(x;v)=x_M$. %\an{we can add that the approximation is as good as we want it to be, so this concludes the proof.}

We have constructed a sequence of DWSNets-layers that mimics a feed-forward procedure of an input $x$ on an MLP defined by a weight vector $v$. Importantly, all the operations we used are directly supported by our architecture, with the exception of the Hadamard product which was replaced by an approximation using the universal approximation theorem. To end the proof, we point out that uniform approximation is preserved by the composition of continuous functions on compact domains (see Lemma 6 in \cite{lim2022sign} for a proof).
\end{proof}

\section{Proof of \cref{prop:approx}} \label{appx:approx}

Our assumptions are listed below:
\begin{enumerate}
    \item  The inputs to the MLPs are in a compact domain $C_1\subset \mathbb{R}^{d_0}$
    \item Weights $v$ are in a compact domain $C_2\subset\mathcal{V}$
    \item We can map each weight $v$ to the function (MLPs) space $\mathcal{F_V}$ of functions represented by the weight, $f_v=f(\cdot;v)$. All such functions are $L_1$-Lipschitz w.r.t $||\cdot||_\infty$. %\haggai{say that this is true since the weights are bounded} 
    \item Given $x^{(1)},...,x^{(N)}\in C_1$ we define $vol_\sigma(y^{(1)},...,y^{(N)})=vol(\{v\in C_2:||(f(x^{(1)};v),...,f(x^{(N)};v))-(y^{(1)},...,y^{(N)})||_\infty\leq \sigma\})$. We assume that for all $\sigma>0$ (i)  $vol_\sigma(y^{(1)},...,y^{(N)})$ is continuous in both $y^{(1)},...,y^{(N)}$ and in $\sigma$ (ii) There exists $\lambda>0$ that for all $(y^{(1)},...,y^{(N)})$ in the range of $f(\cdot;v)$ for some $v\in C_2$ we have $vol_\sigma(y^{(1)},...,y^{(N)})>\lambda>0$. 
    \item Given a function $g$ defined on functions represented by our weights $g:\mathcal{F_V}\rightarrow \mathbb{R}$, we assume it is $L_2$-Lipschitz w.r.t the $||\cdot||_\infty$ norm on the function space.
\end{enumerate}

Below is a formal statement of \cref{prop:approx}.

\begin{proposition}\label{trm:universality}
     Let $g:\mathcal{F_V}\rightarrow \mathbb{R}$ be a $L_2$-Lipschitz function defined on the space of functions represented by M-layer MLPs with dimensions $d_0,...,d_M$, domain $C_1$,  ReLU nonlinearity, and weights in $v\in C_2\subset\mathcal{V}$. Assuming that all of the previous assumptions hold, then there exists a DWSNet with ReLU nonlinearities $F$  that approximates it up to $\epsilon$ accuracy, i.e. $\max_{v\in\mathcal{C}_2}|g(f_v)-F(v)|\leq \epsilon$. 
\end{proposition}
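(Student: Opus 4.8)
The plan is to reduce the approximation of $g(f_v)$, which depends on $v$ only through the function $f_v$, to two sub-problems that DWSNets already solve: (i) evaluating $f_v$ at a fixed finite collection of inputs, and (ii) applying a fixed continuous function to the resulting vector of outputs. The crucial observation is that the sampled outputs $\Phi(v) := (f(x^{(1)};v),\dots,f(x^{(N)};v))$ are $G$-invariant, so this factorization is compatible with the $G$-invariance of $v\mapsto g(f_v)$; it also sidesteps the fact that $g(f_v)$ cannot be read directly off the raw weights. Thus I would look for $F = h\circ\hat\Phi$, where $\hat\Phi$ is a composition of DWS-layers computing the sampled evaluations (via \cref{prop:expressive}) and $h$ is a small MLP acting on the invariant sample vector.

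First I would discretize the input domain. Since $C_1$ is compact, fix a $\delta$-net $x^{(1)},\dots,x^{(N)}\in C_1$. By assumption~3 each $f_v$ is $L_1$-Lipschitz w.r.t.\ $\|\cdot\|_\infty$, so every $x\in C_1$ is within $\delta$ of a net point, which gives $\|f_v-f_{v'}\|_\infty \le \|\Phi(v)-\Phi(v')\|_\infty+2L_1\delta$. Combined with the $L_2$-Lipschitzness of $g$ (assumption~5), this shows that $g(f_v)$ is determined, up to an error controlled by $\delta$, by the finite sample $\Phi(v)$.

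The main step is to turn this into an honest continuous function on sample space. Since $\Phi$ is not injective, $g(f_v)$ is not literally a function of $\Phi(v)$; I would define a regularized target by local averaging,
\[
\tilde g(y) = \frac{1}{\mathrm{vol}_\sigma(y)}\int_{\{v\in C_2:\ \|\Phi(v)-y\|_\infty\le\sigma\}} g(f_v)\,dv,
\]
where $\mathrm{vol}_\sigma$ is exactly the quantity from assumption~4. That assumption is precisely what makes the construction work: the continuity of $\mathrm{vol}_\sigma$ and its uniform lower bound $\lambda>0$ on the relevant range guarantee that $\tilde g$ is continuous and finite on the compact set $\Phi(C_2)$ (compact because $C_2$ is compact and $f$ is continuous in $v$). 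Moreover every $v$ contributing to the average at $y=\Phi(v_0)$ satisfies $\|\Phi(v)-\Phi(v_0)\|_\infty\le\sigma$, hence $\|f_v-f_{v_0}\|_\infty \le \sigma+2L_1\delta$ and $|g(f_v)-g(f_{v_0})|\le L_2(\sigma+2L_1\delta)$; since averaging preserves this bound, $|\tilde g(\Phi(v_0))-g(f_{v_0})|\le L_2(\sigma+2L_1\delta)$ for all $v_0\in C_2$, which is at most $\epsilon/3$ for $\delta,\sigma$ small.

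To assemble the network I would extend $\tilde g$ to a compact neighborhood of $\Phi(C_2)$ (Tietze) and invoke the universal approximation theorem \citep{hornik1991approximation} to obtain an MLP $h$ with $\|h-\tilde g\|_\infty\le\epsilon/3$ there. By \cref{prop:expressive}, applied with the constants $x^{(1)},\dots,x^{(N)}$ hard-wired as broadcasted inputs and run in parallel over the $N$ samples, a composition of DWS-layers realizes a $G$-invariant map $\hat\Phi$ with $\|\hat\Phi-\Phi\|_\infty$ as small as desired on $C_2$; taking this tolerance small enough that $h$ (uniformly continuous on a compact set) moves by at most $\epsilon/3$ and setting $F=h\circ\hat\Phi$, a triangle inequality over the three $\epsilon/3$ budgets yields $\max_{v\in C_2}|F(v)-g(f_v)|\le\epsilon$, with $F$ a $G$-invariant DWSNet. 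The one place that requires real care — and the main obstacle — is the averaging step: because $\Phi$ collapses many weight vectors (even ones not representing the same function) to identical outputs, $g\circ f$ does not descend to a genuine function on sample space, so no universal approximation theorem applies directly; the volume regularity of assumption~4 is exactly what repairs this, and the remaining delicacy is coupling the averaging radius $\sigma$ to the grid resolution $\delta$ so that the Lipschitz error bounds close. Composition of continuous maps preserving uniform approximation on compact domains (Lemma~6 in \citet{lim2022sign}) justifies the final concatenation.
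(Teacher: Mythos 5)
Your proposal is correct and follows essentially the same route as the paper's proof: evaluate $f_v$ on a finite net via the feed-forward simulation of \cref{prop:expressive}, smooth $g\circ f$ into a continuous function of the sampled outputs by local averaging using the volume-regularity assumption, approximate that by an MLP, and close the composition with a Lipschitz/uniform-continuity argument that lets the simulation tolerance be chosen after $h$ is fixed. The only cosmetic differences (Tietze extension of $\tilde g$ beyond $\Phi(C_2)$, uniform continuity of $h$ in place of its explicit Lipschitz constant) do not change the argument.
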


We split the proof into two parts, each stated and proved as a separate lemma.

\begin{lemma}\label{trm:simulation}
Let $M,d_0,\dots,d_M$ specify an MLP architecture. Let $C_1 \subset  \mathbb{R}^{d_0}$, $C_2 \subset \mathcal{V}$ be compact sets. For any $x^{(1)},...,x^{(N)} \in C_1$ there exists a DWSNet  $F$  with ReLU nonlinearities  that for any  $v\in C_2 $ outputs $F(v)$ with the following property $||F(v)-(f(x^{(1)};v),...,f(x^{(N)};v))||_\infty\leq\epsilon$.
\end{lemma}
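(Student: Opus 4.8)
The plan is to recognize this lemma as an $N$-fold parallel version of Proposition \ref{prop:expressive} and to adapt that construction in two ways. The crucial difference is that here the evaluation points $x^{(1)},\dots,x^{(N)}$ are fixed constants that must be baked into $F$, rather than inputs fed to the network alongside $v$; moreover, a single network reading only $v$ must produce all $N$ outputs at once. So first I would handle the injection of the fixed points, and then the parallelization over the $N$ points.

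For the constants: in the proof of Proposition \ref{prop:expressive} the input $x$ enters by broadcasting it to $X=\mathbf{1}(x)^T\in\mathbb{R}^{d_1\times d_0}$ and concatenating it to $W_1$ as a second feature channel. Here I would instead produce each $X^{(i)}=\mathbf{1}(x^{(i)})^T$ as the constant (bias) term of an affine equivariant layer whose output lands in $\mathcal{W}_1$. This is legitimate because, by Proposition \ref{lemma:bias} together with Table \ref{tab:orbits}, the admissible equivariant bias on $\mathcal{W}_1=\mathbb{R}^{d_1\times d_0}$ is exactly a matrix that is constant along the set dimension $d_1$ (each orbit $O^{W_1}_j$ is a full column) and arbitrary along the free dimension $d_0$. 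The broadcast $\mathbf{1}(x^{(i)})^T$ has precisely this form, with the free-dimension values set to the entries of $x^{(i)}$. Thus each fixed evaluation point is realized \emph{exactly}, with no approximation error, as the constant part of a DWS-layer.

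For the parallelization: using the multiple--feature--channel extension of Appendix \ref{appx:features_and_bias}, I would run $N$ independent copies of the layer-by-layer simulation from Proposition \ref{prop:expressive}, one per channel, with channel $i$ initialized to carry both $W_1$ and the injected constant $X^{(i)}$. Since every primitive used in that construction (the DeepSets and Hartford maps, pooling, broadcasting, ReLU, and the MLP that approximates the feature-wise Hadamard product) acts channel-wise, the $N$ simulations do not interfere, and channel $i$ yields $\hat f^{(i)}\approx f(x^{(i)};v)$ at location $\mathcal{B}_M$. A final equivariant readout then collects these $N$ channels into the target vector $(f(x^{(1)};v),\dots,f(x^{(N)};v))$; note each $f(x^{(i)};v)$ is $G$-invariant and $\mathcal{B}_M$ carries a trivial representation, so the construction is consistent.

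The main obstacle, exactly as in Proposition \ref{prop:expressive}, is controlling the error of the Hadamard-product approximations uniformly and ensuring it does not blow up through the depth of the network. This is where compactness is essential: since $C_2\subset\mathcal{V}$ is compact and every layer is continuous, all intermediate activations across all $N$ channels remain in a fixed compact set, so each Hadamard product can be approximated to arbitrary uniform accuracy by an MLP via the universal approximation theorem \citep{hornik1991approximation}. Because uniform approximation is preserved under composition of continuous maps on compact domains (Lemma 6 in \citep{lim2022sign}), chaining the finitely many layers and the $N$ channels yields a single DWSNet $F$ with ReLU nonlinearities satisfying $\|F(v)-(f(x^{(1)};v),\dots,f(x^{(N)};v))\|_\infty\le\epsilon$ for all $v\in C_2$, as required.
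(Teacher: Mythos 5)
Your proposal is correct and follows essentially the same route as the paper: the paper's proof likewise injects the fixed points $x^{(1)},\dots,x^{(N)}$ via the equivariant bias term of the first layer (broadcast along the set dimension $d_1$, free along $d_0$) to form an $(N+1)$-channel tensor at $\mathcal{W}_1$, and then runs the construction of Proposition~\ref{prop:expressive} in parallel across the $N$ channels. Your added justifications --- the appeal to Proposition~\ref{lemma:bias} and Table~\ref{tab:orbits} for exactness of the constant injection, and the compactness argument for uniform error control through the composition --- are consistent with, and somewhat more explicit than, the paper's terse treatment.
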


\begin{proof} Given input $v\in C_2$, representing the weights of an MLP $f$ with a fixed number of layers and feature dimensions, and $x^{(1)},...,x^{(N)}\in C_1$ which are fixed inputs to the MLP, we wish to design a DWSNet $F$, composed of our linear equivariant layers, such that $F(v)$ approximates $(f(x^{(1)};v),...,f(x^{(N)};v))$. %in uniform convergence sense ($\| \cdot\|_{\infty}$). We note that $F$ is $G$-invariant. We assume that the non-linearity function, $\sigma$ is a ReLU function for both $f$ and $F$ for simplicity, although this is not needed in general. \\
We do that in two steps. First, using the bias terms, our first layer concatenates a broadcasted version of $(x^{(1)},...,x^{(N)}),\,X\in\mathbb{R}^{d_1\times d_0\times N}$ to $W_1$ to form a tensor in $\mathbb{R}^{d_1\times d_0 \times (N+1)}$.  
We then use a similar construction to the one in the proof of \cref{prop:expressive} to find a network that approximates $f(x^{(i)}),~i=1\dots N$ in parallel.

\end{proof}

\begin{lemma}\label{trm:sim_to_target} Under all previously stated assumptions, if $x^{(1)},...,x^{(N)}$ are an $\epsilon$-net on the input domain $C_1$ w.r.t the infinity norm, i.e. $\max_{x\in C_1}\min_{i\in[N]}||x-x^{(i)}||\leq \epsilon$ then there exists an MLP $h$ such that for all weights $v\in C_2$ we have: 
$$||h(f_v(x_1),...,f_v(x_N))-g(f_v)||\leq 4L_1L_2\epsilon$$
\end{lemma}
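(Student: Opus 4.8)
The plan is to show that the vector of samples $\mathrm{smp}(v) := (f_v(x^{(1)}),\dots,f_v(x^{(N)}))$ already determines $g(f_v)$ up to an error controlled by $\epsilon$, and then to realize the map ``samples $\mapsto$ value of $g$'' by an MLP via the universal approximation theorem. The only real content is constructing a \emph{continuous} target function on $\mathbb{R}^N$ that agrees with $g$ on the sampled data, despite the sampling map $v\mapsto\mathrm{smp}(v)$ being neither injective nor surjective.

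First I would prove a stability estimate: for any $v,v'\in C_2$, $\|f_v-f_{v'}\|_\infty \le \|\mathrm{smp}(v)-\mathrm{smp}(v')\|_\infty + 2L_1\epsilon$. This follows from the $\epsilon$-net property together with the $L_1$-Lipschitzness of every $f_v$: for arbitrary $x\in C_1$ pick a net point $x^{(i)}$ with $\|x-x^{(i)}\|\le\epsilon$ and bound $|f_v(x)-f_{v'}(x)|$ by inserting $f_v(x^{(i)})$ and $f_{v'}(x^{(i)})$, which yields $L_1\epsilon + \|\mathrm{smp}(v)-\mathrm{smp}(v')\|_\infty + L_1\epsilon$. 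Composing with the $L_2$-Lipschitzness of $g$ then gives $|g(f_v)-g(f_{v'})| \le L_2\|\mathrm{smp}(v)-\mathrm{smp}(v')\|_\infty + 2L_1L_2\epsilon$; in particular, two weights with identical samples have $g$-values within $2L_1L_2\epsilon$.

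Next I would build the target function by a McShane-type Lipschitz extension. Define $G:\mathbb{R}^N\to\mathbb{R}$ by $G(y)=\sup_{v\in C_2}\big(g(f_v) - L_2\|\mathrm{smp}(v)-y\|_\infty\big)$. Since $C_2$ is compact and both $\mathrm{smp}$ and $v\mapsto g(f_v)$ are continuous, the supremum is finite, and as a supremum of $L_2$-Lipschitz functions of $y$ it is itself $L_2$-Lipschitz, hence continuous. Evaluating at $y=\mathrm{smp}(v_0)$ and taking $v=v_0$ in the supremum gives $G(\mathrm{smp}(v_0))\ge g(f_{v_0})$, while the upper bound $G(\mathrm{smp}(v_0))\le g(f_{v_0})+2L_1L_2\epsilon$ is exactly the stability estimate of the previous step rearranged. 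Thus $|G(\mathrm{smp}(v))-g(f_v)|\le 2L_1L_2\epsilon$ for every $v\in C_2$.

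Finally, since $S:=\mathrm{smp}(C_2)\subset\mathbb{R}^N$ is compact (the continuous image of a compact set) and $G$ is continuous, the universal approximation theorem \citep{hornik1991approximation} supplies an MLP $h$ with ReLU nonlinearities such that $\sup_{y\in S}|h(y)-G(y)|\le 2L_1L_2\epsilon$. The triangle inequality then gives, for all $v\in C_2$, $|h(\mathrm{smp}(v))-g(f_v)| \le |h(\mathrm{smp}(v))-G(\mathrm{smp}(v))| + |G(\mathrm{smp}(v))-g(f_v)| \le 4L_1L_2\epsilon$, which is the claim. I expect the main obstacle to be the middle step: one must produce a genuinely continuous (indeed Lipschitz) function of the raw sample vector that reproduces $g$, and the Lipschitz-extension formula is precisely what bypasses the non-injectivity of sampling while keeping the Lipschitz modulus under control, so that universal approximation can be invoked in the last step.
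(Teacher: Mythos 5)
Your proof is correct, and it takes a genuinely different route from the paper's. Both arguments share the same skeleton (show that $g(f_v)$ is determined by the sample vector $\mathrm{smp}(v)$ up to an $O(L_1L_2\epsilon)$ error, produce a continuous function of the samples realizing this, then invoke universal approximation on the compact image $\mathrm{smp}(C_2)$), and both use the identical three-term triangle-inequality stability estimate $\|f_v-f_{v'}\|_\infty\le \|\mathrm{smp}(v)-\mathrm{smp}(v')\|_\infty+2L_1\epsilon$. Where you diverge is in the construction of the continuous target: the paper defines a smoothed function $\bar{g}_\sigma$ by averaging $g(f_{v'})$ over the preimage set $A_\sigma(\mathbf{y})=\{v':\|\mathrm{smp}(v')-\mathbf{y}\|_\infty\le\sigma\}$ with $\sigma=L_1\epsilon$, and must then establish continuity of this average via a somewhat delicate symmetric-difference argument that leans on the volume-regularity hypothesis (continuity of $vol_\sigma$ and the uniform lower bound $vol_\sigma>\lambda>0$, Assumption 4 in Appendix \ref{appx:approx}); it obtains $|g(f_v)-\bar g_\sigma(\mathrm{smp}(v))|\le 3L_1L_2\epsilon$ and spends the remaining $L_1L_2\epsilon$ on the MLP. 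Your McShane-type extension $G(y)=\sup_{v}\bigl(g(f_v)-L_2\|\mathrm{smp}(v)-y\|_\infty\bigr)$ is $L_2$-Lipschitz by construction, so continuity is free and the volume assumption is never used; you get the tighter intermediate bound $|G(\mathrm{smp}(v))-g(f_v)|\le 2L_1L_2\epsilon$ and spend $2L_1L_2\epsilon$ on the MLP, landing at the same $4L_1L_2\epsilon$. Your version is therefore both cleaner and strictly more general: it proves the lemma (and hence Proposition \ref{prop:approx}) under a weaker set of hypotheses, and it hands the universal approximation step an explicitly Lipschitz target rather than merely a continuous one.
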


%\hm{need to go over, improve explanations, and change notation - $W\mapsto v$}
%\haggai{did we define $f_v$? I just defined in the assumptions}
\begin{proof}
%Given all the functions in $\mathcal{F_V}$ we can define an equivalence relation $[f(\cdot;v)]$ such that two functions $f(\cdot;v)$ and $f(\cdot;v')$ are equivalent if they agree on their value on $(x^{(1)},...,x^{(N)})$, i.e. $f(\cdot;v)\sim f(\cdot;{v'})$ iff $(f(x^{(1)};v),...,f(x^{(N)};v))=(f(x^{(1)};v'),...,f(x^{(N)};v'))$. \haggai{OK} \\

%For every equivalence class we pick one representative and define $\bar{g}(y^{(1)},...,y^{(N)})=g(f(\cdot;v))$ where $v$ is the representative of the equivalence class such that $f(x^{(i)};v)=y^{(i)}$. \haggai{OK}\\

%We note that the definition of $\bar{g}$ depends on the chosen representative and it might not be continuous. \haggai{OK}\\

We define $R(C_2)=\{(y^{(1)},...,y^{(N)}):\exists v\in C_2\,s.t.\,\forall\,i:\,f(x^{(i)};v)=y^{(i)}\}$. As the values $(f(x^{(1)};v),...,f(x^{(N)};v))$ are not enough to uniquely define $g(f_v)$ we will define a continuous approximation using smoothing. We define for all $(y^{(1)},...,y^{(N)})\in R(C_2)$ the function $\bar{g}_\sigma(y^{(1)},...,y^{(N)})$ as the average of of $g(f_v)$ over  $$A_\sigma(y^{(1)},...,y^{(N)})=\{v\in C_2:||(y^{(1)},...,y^{(N)})-(f(x^{(1)};v),...,f(x^{(N)};v))||_\infty\leq \sigma\},$$ 

%\haggai{what do we need from this set? it is quite complicated, and can be disconnected }
 
i.e., $$\bar{g}_\sigma(y^{(1)},...,y^{(N)}) = \frac{1}{vol_\sigma(y^{(1)},...,y^{(N)})}\int_{A_\sigma(y^{(1)},...,y^{(N)})}g(f_v)dv.$$
We claim that $\bar{g}_\sigma$ is a continuous function of $y^{(1)},...,y^{(N)}$  due to the smoothing done:
Define $\bold{y}=(y^{(1)},...,y^{(N)})$ and $\tilde{\bold{y}}=(\tilde{y}^{(1)},...,\tilde{y}^{(N)})$ with $||\bold{y}-\tilde{\bold{y}}||\leq \delta$. We can look at $\bar{g}_\sigma(\bold{y})-\bar{g}_\sigma(\tilde{\bold{y}})$ as two integrals, one over $A_\sigma(\bold{y})\cap A_\sigma(\tilde{\bold{y}})$ and one over $A_\sigma(\bold{y})\triangle A_\sigma(\tilde{\bold{y}})$. The first part is equal to $\int_{A_\sigma(\bold{y})\cap A_\sigma(\tilde{\bold{y}})}g(f_v)dv\left(\frac{vol_\sigma(\tilde{\bold{y}})-vol_\sigma({\bold{y}})}{vol_\sigma(\bold{y})vol_\sigma(\tilde{\bold{y}})}\right)$ which goes to zero as $\delta$ goes to zero as the volume is continuous, bounded away from zero, and the integrand $g$ is also bounded. The integral on the symmetric difference is bounded by $C\cdot(vol_{\sigma+\delta}(\bold{y})-vol_{\sigma}(\bold{{y}}))+C\cdot(vol_{\sigma+\delta}(\tilde{\bold{y}})-vol_{\sigma}(\tilde{\bold{y}}))$ where $C$ is a bound on the integrand. This is because each point in the symmetric difference needs to be more than $\sigma$ away from $\tilde{\bold{y}}$ or ${\bold{y}}$, but no more than $\sigma+\delta$ away. This also goes to zero as the volume is continuous in $\sigma$ proving that $\bar{g}_\sigma$ is continuous. \\
 
%This is because we can split up the integral in $\bar{g}_\sigma(y_0+\delta_0,...,y_N+\delta_N)-\bar{g}_\sigma(y_0,...,y_N)$ into two parts $A_\sigma(y_0+\delta_0,...,y_N+\delta_0)\cap A_\sigma(y_0,...,y_N)$ and $A_\sigma(y_0+\delta_0,...,y_N+\delta_0)\triangle A_\sigma(y_0,...,y_N)$. The integral on the intersection goes to zero as $\delta$ goes to zero since the volume is continuous \haggai{by our assumption}, and the volume of symmetric difference goes to zero for the same reason. \haggai{lets rewrite the paragraph above}\\ 

Now that we showed that $\bar{g}_\sigma$ is continuous, we will show it is a good approximation to $g$. If $v\in \mathcal{V}$ and $f(x^{(i)};v)=y^{(i)}$ then $\bar{g}_\sigma (y^{(1)},\dots,y^{(n)})$  is an average of $g(f_{v'})$ over a set of $v'$ that differ on the $\epsilon$-net by at most $\sigma$. Let $x\in C_1$ and $x^{(i)}$ be its closest element of the net then 
\begin{align*}
&|f(x;{v})-f(x;{v'})|\leq   |f(x;{v})-f(x^{(i)};{v})| +|f(x^{(i)};{v})-f(x^{(i)};{v'})|+|f(x^{(i)};{v'})-f(x;{v'})|\leq 2L_1\epsilon+\sigma
\end{align*}
 i.e.,  $||f_{v}-f_{v'}||_\infty \leq 2L_1\epsilon+\sigma$. We can set $\sigma=L_1\epsilon$ and by the Lipschitz property of $g$ we get that the difference in the $g$ values of all averaged weights we average in $\bar{g}_\sigma$ is at most $3L_1L_2\epsilon$. This means that $||g(f_v)-\bar{g}_\sigma(y_0,...,y_N)||\leq 3L_1L_2\epsilon$. \\

Finally, we note that since $\bar{g}_\sigma$ is a continuous function over $R(C_2)$, which is compact as the image of a compact set by a continuous function, it can be approximated by an MLP $h$ such that $|h(y^{(1)},...,y^{(N)})-\bar{g}_\sigma(y^{(1) },...,y^{(N)})|\leq L_1L_2\epsilon$ to conclude the proof.
\end{proof}

\begin{proof}[Proof of \cref{trm:universality}]
From compactness of $C_1$ we have a finite $\epsilon_1$-net $x^{(1)},...,x^{(N)}$. 
From \cref{trm:simulation} there exists an invariant DWSNet $\bar{F}$ such that $\bar{F}(v)=(y^{(1)},...,y^{(N)})$ such that $||y^{(i)}-f(x^{(i)};v)||\leq \epsilon_2$. From  \cref{trm:sim_to_target} there exists an MLP $h$ such that $||h(f(x^{(1)};v),...,f(x^{(N)};v))-g(f_v)||\leq 4L_1L_2\epsilon_1$. We note that as $h$ is a ReLU MLP on a compact domain it is $L_3$-Lipshitz for some constant $L_3$. Now ${F}=h\circ \bar{F}$  is a DWSNet that has 
\begin{align*}
    &|{F}(v)-g(f_v)|=|h(\bar{F}(v))-g(f_v)|\leq 
    |h(\bar{F}(v))-h(f(x^{(1)};v),...,f(x^{(N)};v))|\\
    &+|h(f(x^{(1)};v),...,f(x^{(N)};v))-g(f_v)|\leq L_3\epsilon_2+4L_1L_2\epsilon_1
\end{align*}
%\hm{why the product $L_3\epsilon_2$ is not bounded from zero? it might be the case that if I want a better approximation, I get an MLP with a higher Lipschitz constant no?}
The first we bound by the Lipschitz property of $h$ and the fact that $\bar{F}$ is an $\epsilon_2$ approximation and the second from lemma \ref{trm:sim_to_target}. We note that while $h$ depends on $x^{(1)},...,x^{(N)}$, and as such so does $L_3$, it does not depend on $\bar{F}$ or $\epsilon_2$, so we are free to pick $\epsilon_2$ based on the value of $L_3$ which concludes the proof.
\end{proof}

\section{Alternative Characterization Strategies} \label{appx:alternative}
We chose to work directly with the direct sum of the weight and bias spaces since it allows us to easily derive simple implementations for the layers. It should be noted that other strategies can be employed to characterize spaces of  linear equivariant layers, including decomposing $\mathcal{V}$ into \emph{irreducible} representations (as done on many previous works, e.g., \citet{cohen2016steerable,thomas2018tensor}). An advantage of this strategy is that it simplifies the structure of the blocks discussed above:  one can use a classic result called Schur's Lemma \cite{fulton2013representation}, which states that linear equivariant maps between irreducible representations are either zero or a scaled identity map. On the other hand, one might need to translate such a characterization back to the original weight and bias decomposition in order to implement the maps.

\section{Computational and Memory Requirements}

\revision{Like other equivariant architectures, such as CNNs and DeepSets \cite{zaheer2017deep}, DWS-layers have fewer parameters and are more computationally efficient than fully connected layers. In this section, we provide a brief comparison of DWS layers with fully connected layers in terms of parameter space and the time complexity of a feedforward pass.}

\revision{To simplify our analysis, we do not consider parallel computations in our comparison. Let $M$ denote the number of layers in the input MLP as $M$, and assume for simplicity that all feature dimensions (input, hidden, and output dimensions) of the MLP are equal, i.e., $d_i=d$. Under this setup, a DWS-layer requires $\mathcal{O}((M+d)^2)$ parameters. This is because all the inner blocks of a DWS-layer have a constant number of parameters (see Tables \ref{tab:w2w}-\ref{tab:b2w}). In contrast, an FC layer requires $\mathcal{O}((Md^2)^2)$ parameters.}

\revision{The time complexity of a feedforward pass is estimated under the assumption that DWS and FC layers are computed by independently performing the block operations and then aggregating them. It is worth noting that within each block, DWS-layers implement a specific parameter-sharing scheme that can be implemented efficiently. For example, one of the basic building blocks of a DWS-layer is the layer suggested by~\citet{hartford2018deep}, which parameterizes maps from an $d\times d$ dimensional matrix into another $d\times d$ dimensional matrix. In this case, an FC layer would require $\mathcal{O}(d^4)$ operations, whereas the Hartford layer~\cite{hartford2018deep} can be implemented efficiently using broadcast and pooling operations that require only $\mathcal{O}(d^2)$ operations. This argument can be extended to other block types as well. As a result, DWS-layers have asymptotically lower time complexity than fully connected layers.}

\section{Experimental and Technical Details}\label{app:exp_details}

\textbf{Data preparation.} In order to test our architecture on diverse data obtained from multiple independent sources, we train all input networks independently starting from different random seed (initialization). 
As preprocessing step, the networks are normalized as follows: let $v_i$ denote the $i$th weight vector in our dataset and let $v_{ij}$ the $j$th entry in $v_i$. Let $m(v)$ denote the average vector over the dataset and $s(v)$ the vector of standard deviations. We normalize each entry as $v_{ij} \leftarrow (v_{ij} - m(v)_{ij}) / s(v)_{ij}$. We empirically found this normalization to be beneficial and aid training. We split each dataset into three data splits, namely train, test and validation sets.

\textbf{Datasets.} We provide details for the network datasets used in this work. For INRs, we use the SIREN~\cite{sitzmann2020implicit} architecture, i.e., MLP with sine activation, otherwise, we use ReLU activation~\cite{agarap2018deep}.

\textit{Sine waves INRs for regression.} We generate dataset of $1000$ INRs with three layers and 32 hidden features, i.e., $1\to32\to 32\to1$. The input to the INR is a grid of size $2000$ in $[-\pi, \pi]$. We train the INRs using the Adam optimizer for $1000$ steps with learning-rate $1e-4$.
We use $800$ INRs for training and $100, 100$ INRs for testing and validation.

\textit{MNIST and Fashion-MNIST INRs.} We fit an INR to each image in the original dataset. We split the INR dataset into train, validation and test sets of sizes $55$K, $5$K, $10$K respectively. We train the INRs using the Adam optimizer for $1K$ steps with learning-rate $5e-4$. When the PSNR of the reconstructed image from the learned INR is greater than $40$, we use early stopping to reduce the generation time. Each INR consists of three layers with $32$ hidden features, i.e., $2\to 32 \to 32 \to 1$.

\textit{CIFAR10 image classifiers.} The data consists of $5000$ image classifiers. We use $4000$ networks for training and the remaining divided evenly between validation and testing sets. Each classifier consists of $5$ layers with $64$ hidden features, i.e., $3\cdot32^2=3072\to 64 \to 64 \to 64 \to 64 \to 10$. To increase the diversity of the input classifiers, we train each classifier on the binary classification task of distinguishing between two randomly sampled classes. We fit the classifiers using the Adam optimizer for $2$ epochs with learning-rate $5e-3$ and batch-size $128$.

\textit{Fashion-MNIST image classifiers.} We fit $200$ image classifiers to the Fashion-MNIST dataset with $10$ classes. Each classifier consists of $4$ layers with $128$ hidden features, i.e., $28^2=784\to 128 \to 128 \to 128 \to 10$. We fit the classifiers using the Adam optimizer for $5$ epochs with learning-rate $5e-3$ and batch-size $1024$. To generate classifiers with diverse generalization performance, we save a checkpoint of the classifier's weights along with its generalization performance every $2$ steps throughout the optimization process. We use $150$ optimization trajectories for training, and the rest are divided evenly between validation and testing sets.

\textit{Sine waves INRs for SSL.} We fit $5000$ INRs to sine waves of the form $a\sin(bx)$ on $[-\pi, \pi]$. Here $a,b\sim U(0, 10)$ and $x$ is a grid of size $2000$. We use $4000$ samples for training and the remaining INRs divided evenly between validation and testing sets. We train the INRs using the Adam optimizer with a learning-rate of $1e-3$ for $1500$ steps. Each INR consists of three layers and 32 hidden features, i.e., $1\to32\to32\to1$. 

\textbf{Hyperparameter optimization and early stopping.} For each learning setup and each method we search over the learning rate in $\{5e-3, 1e-3, 5e-4, 1e-4 \}$. We select the best learning rate using the validation set. Additionally, we utilize the validation set for early stopping, i.e., select the best model w.r.t. validation metric.

\textbf{Data augmentation.}\label{app:data_augmentation} We employ data augmentation in all experiments and for all methods. For non-INR input networks, we augment the weight vector with Gaussian and dropout noise. For INRs we can apply a wider range of augmentations: For example consider an INR for an image $f: \mathbb{R}^2\to \mathbb{R}^3$. Let $x$ denote a grid in $[0,1]^2$ which is the input to the INR. We can apply data augmentation to the weight vector to simulate augmentation on the image it represents. As a concrete example let $R\in \mathbb{R}^{2\times 2}$ denote a rotation matrix. By multiplying $W_1$ with $R$ we are rotating the image represented by the INR. Similarly, we can translate the image, or change its scale.
For INRs, we apply rotation, translation, and scaling augmentations, along with Gaussian and dropout noise. See Figure~\ref{fig:inr_aug} (right) for an example of data augmentations.

\begin{figure*}
\centering
\begin{subfigure}
  \centering
  \includegraphics[width=.3\textwidth]{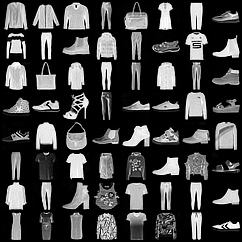}
%   \caption{INR reconstruction}
  \label{fig:inr_aug_1}
\end{subfigure}%
\qquad
\begin{subfigure}
  \centering
  \includegraphics[width=.3\textwidth]{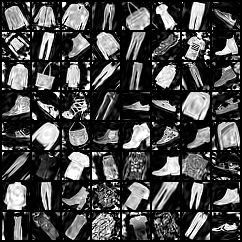}
%   \caption{INR data augmentation reconstruction}
  \label{fig:inr_aug_2}
\end{subfigure}
\caption{\textit{INR reconstruction for Fashion-MNIST}: INR reconstruction for Fashion-MNIST images (left) and INR reconstruction of the same images after (weight) data augmentation (right).}
\label{fig:inr_aug}
\end{figure*}

\textbf{Initialization.} We found that appropriate initialization is important when the output of the model is used to parametrize a network, e.g., in the domain adaptation experiment. Similar observations where made in the Hypernetwork literature~\cite{chang2019principled,litany2022federated}. We use a similar initialization to that used in~\citet{litany2022federated}. Specifically, for weight matrices we use Xavier-normal initialization multiplied $\mu\sqrt{2d_{in}/d_{out}}$. For invariant tasks we set $\mu=1$. For tasks where the output parameterize a network we set $\mu=1e-3$.

\textbf{Methods to control the complexity of \ourmethod{}.} Since our network complexity is controlled by $d_0$ and $d_M$, the number of parameters can grow large when the input and/or output dimensions are large. To control the number of parameters we first apply a linear transformation to the input/output dimension to map it to a lower dimension space, e.g., $LIN(d_0, d_0')$ with $d_0'\ll d_0$. We can then apply another linear transformation (if needed) $LIN(d_0',d_0)$ to map the output back to the original space.
We note that since $d_0,d_M$ are \emph{free} indices, it can be modified between layers while still maintaining $G$-equivariance.

\textbf{Approximation for model alignment.} In the literature several studies suggested methods to align the weights/neurons of NNs (e.g., \cite{ashmore2015method,singh2020model, ainsworth2022git}). Here, we chose the method presented in \cite{ainsworth2022git}. \citet{ainsworth2022git} suggested an iterative approach for aligning many models termed \textit{MergeMany}. The basic idea is to run the alignment algorithm at each iteration between one of the models and an average of all the other models. This algorithm is guaranteed to converge. However, the convergence time depends on both the number of models and the allowed alignment error. For instance, on the MNIST classification task, we waited more than $24$ hours before stopping the algorithm and it still didn't finish even one iteration with an error of $1e-12$ (the default error in the official GitHub repository). Therefore, we ran this method according to the following scheme; we first fixed the error to $1e-3$, then we randomly chose a sub-sample of $1000$ models and ran the \textit{MergeMany} algorithm on them only. The result from this process was a new model (the average of the aligned models) which we used for aligning the remaining training models and the test models to get a training set and test set of aligned models.

\textbf{Additional experimental details.} 
Unless stated otherwise, we use \ourmethod{} with $4$ hidden equivariant layers, and a final invariant layer, when appropriate. Additionally, we use max-pooling (as the $POOL$ components) in all experiments.
The baseline methods are constructed to match the depth of the \ourmethod{}, with feature dimensions chosen to match the capacity (number of parameters) of the \ourmethod{}, for a fair comparison.
We train all methods with ReLU activations and Batch-Normalization (BN) layers. We found BN layers to be beneficial in terms of generalization performance and smoother optimization process. We train all methods using the AdamW \cite{loshchilov2017decoupled} optimizer with a weight-decay of $5e-4$. We use the validation split to select the best learning rate in $\{5e-3, 1e-3, 5e-4, 1e-4\}$ for each method. Additionally, we use the validation split to select the best model (i.e., early stopping). We repeat all experiments using 3 random seeds and report the average performance along with the standard deviation for the relevant metric.

\begin{figure*}[t]
    \centering
    \includegraphics[width=1.\linewidth, clip]{}
    \caption{\textit{Predicting the generalization performance of NNs:} Given the weight vector $v$ the task is to predict the performance of $f(\cdot;v)$ on the test set. We report $100\times$MSE averaged over 3 random seeds. Black lines illustrate a linear fit to the predicted-vs-actual data points.}
    \label{fig:fmnist_gen}
\end{figure*}

\textit{Regression of sine waves.} We train a \ourmethod{} with two hidden layers and $8$ hidden features. All networks consist of $\sim 15$K parameters. We use a batch size of $32$ and train the models for $100$ epochs.

\textit{Classification on INRs.} We train all methods for $100$ epochs. All networks consist of $\sim 550$K parameters. We use a batch size of $512$.

\textit{Predicting the generalization error of neural networks.} We train all methods for $15$ epochs ($\sim 15$K steps). For \ourmethod{} we map $d_0=784$ to $d_0'=16$, and use a $4$-hidden layers network with $16$ features. All networks consist of $\sim 4$M parameters. 
We use a batch size of $32$.

\textit{Learning to adapt networks to new domains.} We train all methods for $10$K steps. For \ourmethod{} we map $d_0=3072$ to $d_0'=16$, and use a $4$-hidden layers network with $16$ features. All networks consist of $\sim 4$M parameters. At each training step, we sample a batch of input classifiers and a batch of images from the source domain. We map each weight vector $v$ to a residual weight vector $\Delta v$. We then pass the image batch through all networks parametrized by $v-\Delta v$ and update our model according to the obtained classification (cross-entropy) loss.
We use a batch size of $32$ for input networks and $128$ for images.

\textit{Self-supervised learning for dense representation.} We train the different methods for $500$ epochs with batch-size of $512$. The \ourmethod{} is consists of 4-hidden layers with $16$ features. We set the dense representation dimension to $16$. All networks consist of $\sim 100$K parameters. We use a temperature of $0.1$ to scale the NT-Xent loss~\cite{chen2020simple}.

\section{Additional Experiments}\label{sec:additional_exp}

\subsection{Predicting the Generalization Error of Neural Networks.} 

Given an MLP classifier, we train a DWSNet to predict its generalization performance, defined as the test error on a held-out set (see also \cite{schurholthyper}). To create a dataset for this problem, we train 200 MLP image classifiers on the Fashion-MNIST dataset. We save checkpoints with the classifier's weights throughout the optimization process, together with its generalization error.  
Then, we train a DWSNet to predict the generalization performance from the classifier's weights. Figure~\ref{fig:fmnist_gen} shows that DWSNet achieves the lowest error, significantly outperforming most baselines.

\subsection{Dense Representation} \label{app_sec:dense_rep}
Here we give the full results for learning a dense representation that was presented in Section \ref{sec:Experiments}. Figure \ref{fig:dense_all} shows that \ourmethod{} generates an embedding with a clear and intuitive 2D structure. That is, we can notice a representation that groups models with similar frequencies and amplitudes together and a gradual change between the different regions. On the other hand, other baselines don't seem to have this nice explainable property. %that our model posses.
%\textbf{Self-supervised learning for dense representation.} 
\begin{figure}[!t]
    \centering
\includegraphics[width=.8\linewidth, clip]{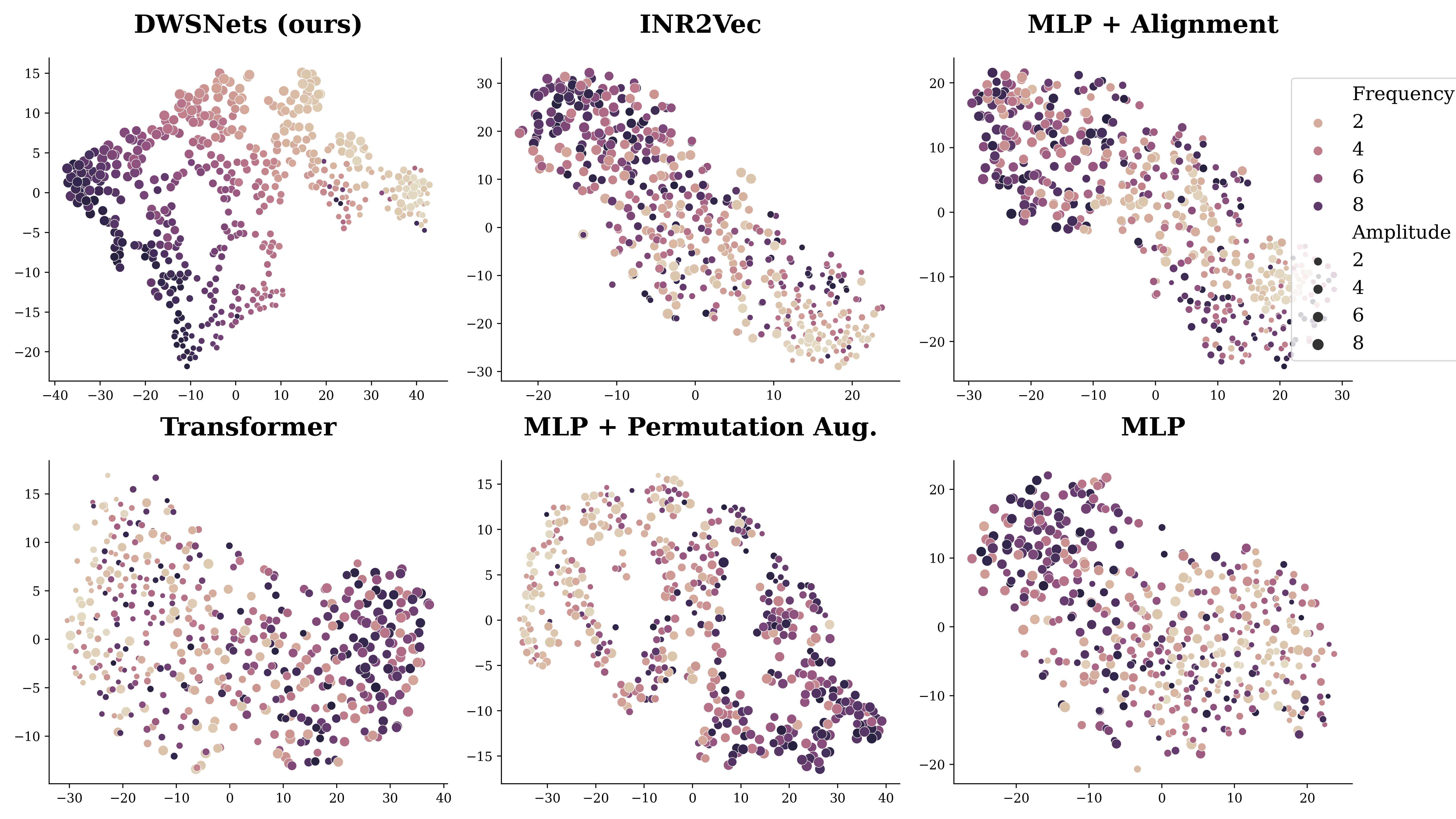}
    \caption{\textit{Dense representation:} 2D TSNE of the resulting low-dimensional space.}
    \label{fig:dense_all}
\end{figure}

% \subsection{Multiple INR views as data augmentation}\label{app:multi_views}

% In this section, we investigate the impact of using multiple INR views (copies) on the performance of our model. Our experiments demonstrate that employing multiple INR views, with different initializations, substantially contributes to the model's overall performance. 

% \an{Describe the exp...}

% The results are presented in Table~\ref{tab:TBD}. The utilization of multiple INR views enhances model performance due to the non-convex nature of the INR loss landscape, which exhibits multiple local minima. Our results show that incorporating a diverse set of solutions per image through random initializations significantly improves the model's generalization capabilities. 

% Our results highlight the importance of establishing a comparison protocol for DWS models and experiments, which sets the guidelines for a proper and fair evaluation (e.g., by using the same number of INR copies for training each model).

\subsection{Ablation Study}

\begin{table}[!h]
\centering
\small
\caption{\textit{Ablation on the DWSNet's blocks using the MNIST INRs dataset.}}
\vskip 0.11in
\begin{tabular}{lcc}
\toprule
& Test Acc. & \# params \\
\midrule
B2B  & $65.87 \pm 0.37$ & $65$K \\
W2W & $84.75 \pm 1.11$ & $235$K\\
W2W + B2B & $85.23 \pm 0.01 $ & $300$K \\
% \midrule
Diagonal & $ 85.68\pm 0.42 $ & 460K \\
DWSNets & $85.71\pm 0.57$ & 550K \\
\bottomrule
\end{tabular}
\label{tab:ablation}
\end{table}

Here we investigate the effect of using only part of the blocks in our proposed architecture, using the classification task of MNIST INRs.
\revision{We compare the bias-to-bias (B2B), weight-to-weight (W2W) and a \emph{``diagonal"} version of DWSNets consists only of internal blocks which maps joint set dimensions. For example, for the B2B block, these internal blocks form the main diagonal.}
\revision{The results are presented} in Table~\ref{tab:ablation}. Not surprisingly, the W2W block is the most contributing factor to the overall performance as it conveys most of the information. Nevertheless, adding the other blocks increase the overall performance. \revision{Interestingly, the diagonal DWSNet achieves performance similar to those obtained with all blocks.}

\subsection{The importance of data augmentation and batch normalization}

\begin{table}[!h]
\centering
\small
\caption{\textit{The effect of BN and DA:} Performance improvement through Data Augmentation and Batch Normalization on the MNIST INRs classification task.}
\vskip 0.11in
\begin{tabular}{lc}
\toprule
& Test Acc. \\
\midrule
DWSNet  & $77.20 \pm 0.41$ \\
DWSNet + DA & $80.20 \pm 0.28$ \\
DWSNet + BN & $83.05 \pm 1.35 $ \\
DWSNet + DA + BN & $85.71\pm 0.57$\\
\bottomrule
\end{tabular}
\label{tab:bn_da}
\end{table}

\revision{Throughout our experiments, we have consistently found data augmentation (DA) and batch normalization (BN) to be highly beneficial techniques in improving model performance. In this subsection, we present the results obtained by applying these techniques to the MNIST INRs classification task. Here we apply the data augmentation techniques for INRs described in Appendix~\ref{app:data_augmentation}. Our findings highlight the importance of data augmentation and batch normalization in improving the performance of DWSNets. The results are presented in Table~\ref{tab:bn_da}.}

\subsection{Challenging cases}\label{app:failure-cases}

Here we discuss two challenging cases that we encountered while experimenting with our method. 

\textit{Learning to prune.} One possible application of \ourmethod{} is to learn how to prune a network. Namely, given an input network it learns to output a mask that dictates which parameters from the input network to drop and which ones to keep. To evaluate our method on this task we used INRs generated based on the div2k dataset \cite{Agustsson_2017_CVPR_Workshops}. The loss function was to reconstruct the original image while regularizing the mask to be as sparse as possible. We tried different techniques to learn such a mask inspired by common solutions in the literature (e.g., \cite{hubara2016binarized}). Unfortunately, \ourmethod{} showed a tendency to prune many parameters of the same layers while keeping other layers untouched. We believe that this issue can be solved by a proper initialization and we see this avenue as a promising research direction for leveraging \ourmethod{}.

\textit{Working with INRs.} In some cases, we found it challenging to process INRs. Consider the problem of classifying CIFAR10 INRs to the original ten classes. In our experiments, we found that while significantly outperforming baseline methods, \ourmethod{} achieve unsatisfactory results in this task. A possible reason for that is that the INR, as a function from $\mathbb{R}^2$ to $\mathbb{R}^3$ is only informative on $[0, 1]^2$. Hence, it is possible that when processing these functions (parameterized with the weight vectors), with no additional information on the input domain, the network relies on the underlying, implicit noise signal originating from outside the training domain, i.e., $\mathbb{R}\setminus [0,1]^2$. If that is indeed the case, one potential solution would be to encourage the INR's output to be constant on $\mathbb{R}\setminus [0,1]^2$.

\end{document}